\title{Distribution estimation via Flow Matching with Lipschitz guarantees}
\author{Lea Kunkel\footnote{The author thanks Mathias Trabs for inspiring discussions and many helpful comments on this project.}}
\date{Karlsruhe Insitute of Technology\thanks{The author is now at Ruhr University Bochum.}}
\begin{document}
	\maketitle
	\begin{center}
	\begin{minipage}{0.8\textwidth}
         Flow Matching, a promising approach in generative modeling, has recently gained popularity. Relying on ordinary differential equations, it offers a simple and flexible alternative to diffusion models, which are currently the state-of-the-art. Despite its empirical success, the mathematical understanding of its statistical power so far is very limited. This is largely due to the sensitivity of theoretical bounds to the Lipschitz constant of the vector field which drives the ODE. In this work, we study the assumptions that lead to controlling this dependency. Based on these results, we derive a convergence rate for the Wasserstein $1$ distance between the estimated distribution and the target distribution which improves previous results in high dimensional setting. This rate applies to certain classes of unbounded distributions and particularly does not require $\log$-concavity.

	\end{minipage}
\end{center}
\textbf{Keywords:} Generative models, continuous normalizing flows, rate of convergence, Wasserstein distance, distribution estimation \\
\textbf{MSC 2020:} 62E17, 62G07, 68T07

	\section{Introduction}
Generative models aim to learn a mapping $\psi$ that pushes a simple latent variable $Z \sim \mathbb{U}$ towards a target distribution $\mathbb{P}^*$ on $\mathbb{R}^d$. This enables quick generation through sampling from the latent distribution and applying the learned mapping.

Flow Matching, introduced by \cite{lipman2023}, has recently attracted attention for its simplicity compared to diffusion models \citep{sohl2015}, which have long been regarded as state-of-the-art. Unlike generative adversarial networks  \citep{goodfellow}, which learn $\psi$ directly, Flow Matching learns a time-depended vector field $v_t$ that transports mass from $\mathbb{U}$ to an approximation of $\mathbb{P}^*$ via the ODE for $t \in [0,1]$:
\begin{equation}\label{ODE_intro}
    \frac{d}{d t} \psi_t(x)=v_t\left(\psi_t(x)\right), \quad \psi_0(x)=x
\end{equation}
inducing path densities $p_t = [\psi_t]_{\#}p_0$, where $p_0$ is the density of the latent distribution $\mathbb{U}$. The vector field $v$ is chosen such that $p_1$ approximates the target density.

Earlier, normalizing flows \citep{tabak2010, tabak2013}, popularized by \citet{rezende2015} and \citet{dinh2015} used discrete invertible mappings with the change-of-variables formula. Continuous normalizing flows \citep{chen2018} extended this to continuous mappings using neural ODEs trained via maximum likelihood, which circumvented the calculation of the determinant of the Jacobian. However, CNFs still require significant computational costs due to the necessity of ODE simulations \citep{grathwohl2018}.
Flow Matching avoids these simulations by using a simple least-squares loss
\begin{equation}\label{eq:FMobjective_intro}
\begin{gathered}
\Psi(\tilde{v})\coloneqq\mathbb{E}_{\substack{t \sim \mathcal{U}[0,1],\\ Y \sim p^*, \\X_t \sim p_t(\cdot \mid Y)}}\left[\left|\tilde{v}_t\left(X_t\right)-v_t\left(X_t \mid Y\right)\right|^2\right]
\end{gathered}
\end{equation}
where $v_t(\cdot|Y)$ and $p_t(\cdot|Y)$ are known by design. In some settings, this resembles the score matching objective \citep{hyvarinen2005, vincent2011} used in diffusion models \citep{song2020}. Given $n$ i.i.d. observations from $\mathbb{P}^*$, minimizing the empirical counterpart of \eqref{eq:FMobjective_intro} over a set of functions $ \mathcal{M}$ and solving the ODE \eqref{ODE_intro} leads to the estimator $\hat{\psi}$. The statistical question is how well $\mathbb{P}^{\hat{\psi}_1(Z)}$ approximates $\mathbb{P}^*$ depending on the number of observations $n$ and the function class $\mathcal{M}$.

Algorithms based on Flow Matching have been successfully applied in a range of domains, including text-to-speech \citep{guo2024}, text-to-image generation \citep{yang2024, esser2024}, molecular and protein structure design \citep{dunn2024, bose2024}, and surrogate modeling in high-energy physics \citep{bieringer2024classifier}. The method has also been extended to various theoretical settings: \cite{atanackovic2024} adapted it to interacting particle systems, \cite{gat2024} explored discrete spaces, \cite{chen2024} generalized it to Riemannian manifolds, and \cite{kerrigan2023} extended it to function spaces. 

The analysis of Flow Matching is more fragile compared to diffusions, as Girsanov's theorem does not apply, thus making the strategy used in \cite{chen2023c} inapplicable. Using Grönwall's Lemma, the natural and standard approach for stability estimates of ODEs, results in exponential dependency on the Lipschitz constant of the approximated vector field. Consequently, the statistical results so far are rather limited: \cite{fukumizu2024} use a slight adaptation and introduce stopping times to transfer some results from diffusion models but they do not circumvent the growth of the Lipschitz constant, which causes difficulties in their proof. \cite{gao2024} use the same adaptation and obtain rates in the Wasserstein 2 distance for certain types of unknown distributions.
\cite{stephanovitch2025generalization} study deterministic ODE sampling, but, due to their general setting, examine the infinite-time setting of diffusions.  \cite{kunkel2025minimax} showed that Flow Matching attains minmax optimal rates, but they use an overparameterized setting which does not align with practical implementations. Due to the excess of parameters, they are able to compensate the rising Lipschitz constant.

In this work, we first focus on a detailed study of the Lipschitz constant of the intrinsic "true" vector field of Flow Matching. By providing upper and lower bounds, we demonstrate that the choice of the variance function and the behavior of the covariance of the reweighted unknown distribution is crucial. This function governs the level of smoothing depending on $t \in [0,1]$. While current results focus on fixed choices of variance functions, we study the Lipschitz constant in terms of this function. We derive assumptions on the unknown distribution that lead to control over the Lipschitz constant for various types of variance functions. Our general assumptions are validated for a class of distributions with unbounded support considered by \cite{stephanovitch2025generalization}.

Subsequently, we employ Bernstein-type bounds to derive an oracle inequality for estimating the vector field. We carefully adapt these classical methods to address the theoretical challenges of Flow Matching. When bounding the error of the Flow Matching distribution estimator in the Wasserstein $1$ distance, standard approaches from approximation theory usually increase the negative impact of the dimension on the rate. This issue can be circumvented up to a small error by exploiting the smoothness of the vector field. Using this approach allows us to build on existing results on network approximation and thus keep the focus on the distributional aspects of Flow Matching. We consider feedforward rectified linear unit (ReLU) networks, hence we do not require smooth networks for our results. ReLU nets with logarithmically increasing depth and a polynomial number of non-zero weights suffice to achieve improved convergence rates. Additionally, we exploit the density path of the true vector field to take advantage of the smoothness of the unknown distribution. Thus, our rates are faster in high dimensions than those previously obtained in \cite{gao2024}, while allowing for considerably smaller networks than in \cite{kunkel2025minimax}.

\paragraph{Further Related work}
      Approaches related to flows between two possibly unknown distributions, $\mathbb{P}$ and $\mathbb{Q}$, are studied by \cite{tong2024}, \cite{liu2022} and \cite{albergo2022}. \cite{tong2024} also generalized and unified these methods.
 \cite{benton2024} studied Flow Matching, excluding approximation errors, by imposing assumptions on the covariance that provided global Lipschitz bounds on the vector field. The work by \cite{gong2025} examines the properties of ReLU networks used to approximate a vector field corresponding to higher-order trajectories.
 
As \cite{lipman2023} noted, Flow Matching is closely connected to diffusion models. For a comprehensive survey of generative diffusion models, refer to \cite{cao2024}. Even when the model is not built to be consistent with diffusion models, approximating a score function is similar to approximating the vector field in Flow Matching.
The statistical analysis of score matching and diffusion models are an active area of research, see for instance \cite{chen2023a}, \cite{chen2023b}, \cite{chen2023c}, \cite{oko2023}, \cite{tang24}, \cite{azangulov2024}, \cite{zhang2024} , \cite{yakovlev2025}, \cite{stephanovitch2025generalization}.

    \paragraph{Outline} 
\Cref{sec:empirical_flow_matching} introduces the problem setting mathematically and recalls all the definitions needed to define the conditional Flow Matching objective from \cite{lipman2023}. The section then discusses the use of the Wasserstein distance and shows a classical first error decomposition using the aforementioned Grönwalls lemma. After that, we introduce the model based on the choice $\mathbb{U} =\mathcal{N}(0, I_d)$, which will be the focus of the subsequent work. In \Cref{sec:lipschitz_constant} we establish upper and lower bounds of the Lipschitz constant. We then show that bounding the Lipschitz constant requires careful investigation of the covariance of the reweighted unknown distribution. Next, we demonstrate that, under certain assumptions, we can construct Flow Matching such that the Lipschitz constant is bounded. In \Cref{sec:rate} we derive an oracle inequality and finally obtain a rate of convergence using feedforward ReLU networks. All proofs are deferred to \Cref{sec:proofs}.
\subsection{Some notations}
	
We use the notation $X \sim p$ as shorthand for $X \sim \mathbb{P}$ when the distribution $\mathbb{P}$ of the random variable $X$ admits a density $p$ with respect to the Lebesgue measure.
The symbol $\lesssim$ denotes an inequality of the form $a \leq c \cdot b$, where $a, b \in \mathbb{R}$ and $c$ is a constant independent of the sample size $n$. In the same setting, the symbol $a \sim b$ denotes equality up to a constant.
For $d \in \mathbb{N}$, we equip $\mathbb{R}^d$ with the standard Euclidean norm $|\cdot|$. For a matrix $A \in \mathbb{R}^{d \times d}$, we denote the spectral norm by $\|A\|$. Given a function $f\colon \Omega \rightarrow \mathbb{R}^{d^{\prime \prime}}$, where $\Omega \subset$ $\mathbb{R}^{d^{\prime}}$, the supremum norm and the $L_1$-norm are defined by 
$
        \| f\|_{\infty, \Omega} \coloneqq \sup_{x \in \Omega} |f(x)|\quad\text{and}\quad
\| f\|_{1, \Omega} \coloneqq  \int_{\Omega} |f(x)| \; \mathrm{d}x.
$
When $\Omega=\mathbb{R}^{d^{\prime}}$, we abbreviate $\|f\|_{\infty, \mathbb{R}^{d^{\prime}}}$ as $\|f\|_{\infty}$, and similarly $\|f\|_{1, \mathbb{R}^{d^{\prime}}}=\|f\|_1$.
For $d^{\prime \prime}=1$ and $\alpha \in(0,1]$, the Besov space $B_{1, \infty}^\alpha(\Omega)$ is defined as
$
	B_{1, \infty}^{\alpha}(\Omega) \coloneqq \big\{ f \in L_1(\Omega) \colon  |f|_{	B_{1, \infty}^{\alpha}(\Omega)}\coloneqq \sup_{t>0} t^{-\alpha} 	\omega_1(f, t)_1 < \infty\big\},
$
where the modulus of smoothness is given by
$
	\omega_1(f, t)_1\coloneqq \sup _{0<|h| \leq t}\int |f(x)-f(x+h)|\; \mathrm{d}x, \; t>0, x \in \Omega.
$ If $x+h \notin \Omega$, the integrand is taken to be zero. The norm on $B_{1, \infty}^\alpha(\Omega)$ is defined by
$
\|f\|_{B_{1, \infty}^\alpha(\Omega)}\coloneqq\|f\|_{L_1(\Omega)}+|f|_{B_{1, \infty}^\alpha(\Omega)} .
$
Lastly, $C^k(\Omega)$ denotes the set of functions whose coordinate components are $k$-times continuously differentiable with bounded derivatives (in the supremum norm). For a multi-index $k \in \mathbb{N}_0^d$ with $|k|=$ $\sum_{i=1}^d k_i$, we write
$
D^k=\frac{\partial^{|k|}}{\partial x_1^{k_1} \cdots \partial x_d^{k_d}},
$
where $\frac{\partial}{\partial x_i}$ denotes the weak partial derivative.
For a function $f\colon \Omega \rightarrow \mathbb{R}$ we denote for $\beta \in \mathbb{N}$
$
\|f\|_{C^{\beta}} \coloneqq  \max_{k \colon |k|\leq \beta}  \|D^k f \|_{\infty}.
$
For the Jacobian of a vector valued function with respect to the vector $x$ we write $D_x$. The Gradient is denoted by $\nabla$ and the Hessian is denoted by $\nabla^2$. The covering number of a set $\mathcal{A}$ with respect to the norm $\|\cdot\|$ and the $\tau $-covering is denoted by $\mathcal{N}(\tau, \mathcal{A}, \| \cdot \|)$.

\section{The Flow Matching objective and Wasserstein evaluation}\label{sec:empirical_flow_matching}

Suppose we observe an i.i.d. sample $X_1^*, \ldots, X_n^*$ drawn from an unknown distribution $\mathbb{P}^*$.  Further assume that $\mathbb{P}^*$ admits a density $p^*$ with respect to the Lebesgue measure. The aim of generative modeling is to mimic the distribution $\mathbb{P}^*$.

To this end, consider a time-dependent vector field $v\colon[0,1] \times \mathbb{R}^d \rightarrow \mathbb{R}^d$, and define the flow $\psi\colon[0,1] \times$ $\mathbb{R}^d \rightarrow \mathbb{R}^d$ as the solution to the ODE:
\begin{equation}\label{ODE}
	\frac{d}{dt} \psi_t(x) = v_t(\psi_t(x)), \quad \psi_0(x) = x.
\end{equation}
Given a fixed latent distribution $\mathbb{U}$ with Lebesgue density $p_0$, the vector field $v$ induces a path of probability densities $p\colon[0,1] \times \mathbb{R}^d \rightarrow \mathbb{R}_{>0}$, defined via the push-forward:
\begin{equation*}
p_t=\left[\psi_t\right]_{\#} p_0, \quad \text { meaning } \quad \psi_t(Z) \sim p_t \text { for } Z \sim p_0,
\end{equation*}
with $\int p_t(x) \mathrm{d} x=1$ for all $t \in[0,1]$. \cite{lipman2023} constructed probability paths resulting from a convolution and a corresponding vector field,
	\begin{equation}\label{marginal_prob_paths}
		p_t(x)=\int p_t(x | y) p^*(y) \; \mathrm{d}y, \quad v_t(x)=\int v_t(x | y) \frac{p_t(x | y) p^*(y)}{p_t(x)} d y.
	\end{equation}
where $p_t(\cdot | y)\colon \mathbb{R}^d \rightarrow \mathbb{R}$  is a conditional probability path induced through some vector field $v_t(\cdot|y)\colon \mathbb{R}^d \rightarrow \mathbb{R}^d$ for $y \in \mathbb{R}^d$. It can be shown that $v$ induces $p$ in \eqref{marginal_prob_paths}. In order to approximate $v$ with a parameterized function $\tilde{v}$, which is a neural network in practice, \cite{lipman2023} consider the following objective
\begin{equation}\label{eq:fmo}
	\mathbb{E}_{\substack{t \sim \mathcal{U}[0,1]\\X_t \sim p_t}} \big[|  \tilde{v}_t(X_t) -v_t(X_t)|^2\big] = 	\mathbb{E}_{\substack{t\sim \mathcal{U}[0,1], \\Y \sim p^*, \\X_t \sim p_t\left(\cdot | Y\right)}}\big[\left|\tilde{v}_t(X_t)-v_t\left(X_t| Y\right)\right|^2\big] + C,
\end{equation}
where $C$ is a constant that is explicitly calculated in \Cref{thm:equivalence_constant}.
In practice, $p^*$ is unknown and therefore the empirical counterpart of \eqref{eq:fmo} in minimized over some class of parameterized function $\mathcal{M}$. \cite{kunkel2025minimax} show that there is also an equivalence of gradients in the empirical setting. The estimated flow $\hat{\psi}$ is obtained by
\begin{equation}\label{ODEhat}
	\frac{d}{dt} \hat \psi_t(x) = \hat v_t(\hat\psi_t(x)), \quad \hat\psi_0(x) = x,\qquad\text{for}\quad \hat v\in\operatorname*{argmin}_{\tilde v\in\mathcal M} \frac{1}{n} \sum_{i = 1}^n \mathbb{E}_{\substack{t\sim \mathcal{U}[0,1], \\X_t \sim p_t\left(\cdot | Y_i\right)}}\big[\left|\tilde{v}_t(X_t)-v_t\left(X_t| Y_i\right)\right|^2\big].
\end{equation}

We use this flow to push forward the known, latent distribution to time $t = 1$. In accordance with the goal of generative modeling, the distribution of this pushforward, $\mathbb{P}^{\psi_1(Z)}$, should mimic $\mathbb{P}^*.$\\

 For this purpose, it is necessary to measure the gap between $\mathbb P^*$ and $\mathbb{P}^{\hat{\psi}_1(Z)}$ with flow $\hat\psi$ from \eqref{ODEhat}. We adopt the Wasserstein-$1$ metric for evaluation.

On the normed vector space $(\mathbb{R}^d, |\cdot|)$ the Wasserstein-$1$ distance between two probability distributions $\mathbb{P}$ and $\mathbb{Q}$ on $\mathbb{R}^d$ is given by
\begin{equation} \label{wasserstein_1_def}
		\mathsf{W}_1(\mathbb{P}, \mathbb{Q})   \coloneqq \inf _{\pi \in \Pi(\mathbb{P}, \mathbb{Q})} \int_{\mathcal{X}} |x-y| \; \mathrm{d}  \pi(x, y) = \sup_{W \in \operatorname{Lip}(1)}\mathbb{E}_{\substack{X \sim \mathbb{P}\\ Y \sim \mathbb{Q}}}[W(X)-W(Z)],
\end{equation}
where $\Pi(\mathbb{P}, \mathbb{Q})$ denotes the set of couplings with marginals $\mathbb{P}$ and $\mathbb{Q}$ and the equality is shown in \citet[Theorem 5.10(i)]{Villani2008}. On the space of probability measures with finite first moment, the Wasserstein-$1$ distance characterizes weak convergence. A detailed discussion of its advantages over other metrics with the same characteristic can be found in \citet[p. 98 f.]{Villani2008}. Due to its geometric interpretability and robustness in high-dimensional, manifold-structured data settings, the Wasserstein-$1$ metric has commonly been used to evaluate generative models. Examples include: \cite{Schreuder2020}, \cite{Liang2018}, \cite{stephanovitch2023} or \cite{kunkel2025}.\\

The following theorem provides a first comparison of the performance of $\mathbb{P}^{\hat{\psi}_1(Z)}$ and $\mathbb{P}^{\psi_1(Z)}$, where $\psi$ is the solution of the ODE \eqref{ODE} using \eqref{marginal_prob_paths} as the vector field. In order to bound $\mathsf{W}_1(\mathbb{P}^{\hat{\psi}_1(Z)},\mathbb{P}^{\psi_1(Z)})$ in terms of the difference between the underlying vector fields, we use Grönwalls Lemma as the classical tool to control ODE stability. This leads to the typical exponential bound on the Lipschitz constant in space. The proof follows the along lines of Proposition 3 in \cite{albergo2022}.

\begin{theorem}\cite[Theorem 3.1]{kunkel2025minimax}\label{thm:error_decomp_easy}
     Assume that all functions in $\mathcal{M}$ are Lipschitz continuous for fixed $t$ with Lipschitz constant $\tilde{\Gamma}_t.$  Then for any $\tilde{v}\colon [0,1] \times\mathbb{R}^d \rightarrow \mathbb{R}^d,$ with $\tilde{v} \in \mathcal{M}$ and $v$ from \eqref{marginal_prob_paths}	
		\begin{equation}
			\mathsf{W}_1(\mathbb{P}^*, \mathbb{P}^{\hat{\psi}_1(Z)}) \leq \mathsf{W}_1(\mathbb{P}^*, \mathbb{P}^{\psi_1(Z)}) + \sqrt{2 e}
            e^{\int_{0}^1 \tilde{\Gamma}_t\; \mathrm{d}t} \Big(\mathbb{E}_{\substack{t \sim \mathcal{U}[0,1]\\X_t \sim p_t}} \big[|v_t(X_t) - \tilde{v}_t(X_t) |^2\big]\Big)^{1/2}
            . \label{basic_orakel}
		\end{equation}
\end{theorem}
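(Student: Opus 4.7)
The plan is to combine the triangle inequality for the Wasserstein-$1$ distance, a synchronous coupling of the two flows through a common latent $Z\sim p_0$, and Grönwall's Lemma for ODE stability. First, the triangle inequality gives $\mathsf W_1(\mathbb P^*, \mathbb P^{\hat\psi_1(Z)}) \le \mathsf W_1(\mathbb P^*, \mathbb P^{\psi_1(Z)}) + \mathsf W_1(\mathbb P^{\psi_1(Z)}, \mathbb P^{\hat\psi_1(Z)})$, which already contributes the first term on the right of the claimed inequality. Using $(\psi_1(Z), \hat\psi_1(Z))$ as a coupling reduces the remaining Wasserstein term to $\mathbb E_Z[|\psi_1(Z) - \hat\psi_1(Z)|]$.

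Next, I would bound the pathwise error $e_t(x) := |\psi_t(x) - \hat\psi_t(x)|$ pointwise in $x$. Writing the two ODEs in integral form and inserting $\pm\,\hat v_s(\psi_s(x))$ yields, via the triangle inequality and the Lipschitz property of $\hat v_s\in\mathcal M$ with constant $\Gamma_s$,
\[ e_t(x) \le \int_0^t \bigl|v_s(\psi_s(x)) - \hat v_s(\psi_s(x))\bigr|\,ds + \int_0^t \Gamma_s\, e_s(x)\,ds. \]
The integral form of Grönwall's Lemma then delivers the pointwise estimate $e_1(x) \le \exp\!\bigl(\int_0^1 \Gamma_u\,du\bigr)\int_0^1 |v_s(\psi_s(x)) - \hat v_s(\psi_s(x))|\,ds$.

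Taking expectation over $Z\sim p_0$, exchanging expectation and time integral by Fubini, and exploiting the defining property $\psi_s(Z)\sim p_s$ rewrites the inner expectation as $\mathbb E_{X_s\sim p_s}\bigl[|v_s(X_s)-\hat v_s(X_s)|\bigr]$. Identifying the time average with the expectation over $t\sim\mathcal U[0,1]$ and applying Cauchy--Schwarz transforms the $L^1$ quantity into the $L^2$ norm appearing on the right-hand side; the residual numerical constant is absorbed into the prefactor $\sqrt{2e}$ (possibly at the cost of a minor loosening in the Grönwall step).

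The main obstacle, and indeed the motivation for the remainder of the paper, is the Grönwall step itself. The exponential factor $e^{\int_0^1 \Gamma_t\,dt}$ arising from generic ODE stability cannot be avoided at this level of abstraction, and passing from a generic pointwise Lipschitz bound on $\mathcal M$ to a quantity controlled by structural properties of the true vector field $v$ is precisely the programme carried out in \Cref{sec:lipschitz_constant}.
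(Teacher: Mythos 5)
Your coupling--Gr\"onwall--Cauchy--Schwarz argument is the right machinery for the stability part (the paper does not reprove this result; it cites \citet[Theorem 3.1]{kunkel2025minimax}, whose proof follows \citet[Proposition 3]{albergo2022} in exactly this spirit), and your chain of estimates indeed yields
\begin{equation*}
\mathsf{W}_1\big(\mathbb{P}^{\psi_1(Z)}, \mathbb{P}^{\hat{\psi}_1(Z)}\big) \;\leq\; e^{\int_0^1 \Gamma_u\,\mathrm{d}u}\,\Big(\mathbb{E}_{\substack{t \sim \mathcal{U}[0,1]\\X_t \sim p_t}}\big[|v_t(X_t) - \hat{v}_t(X_t)|^2\big]\Big)^{1/2},
\end{equation*}
with an even better constant than $\sqrt{2e}$. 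But this is not the statement to be proved: the theorem bounds the error of the flow driven by $\hat{v}$ in terms of an \emph{arbitrary} $\tilde{v} \in \mathcal{M}$ on the right-hand side, and your proof never passes from $\hat{v}$ to $\tilde{v}$. Your final step silently identifies $\mathbb{E}[|v_t - \hat{v}_t|^2]$ with ``the $L^2$ norm appearing on the right-hand side,'' which involves $\tilde v$, not $\hat v$; that identification is precisely the missing ingredient.

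Closing this gap requires the minimization property of $\hat{v}$ together with \Cref{thm:equivalence_constant}: since the marginal objective \eqref{eq:fmo} and the conditional objective \eqref{flow_matching_objective} differ by a constant independent of the candidate vector field, a minimizer of $\Psi$ over $\mathcal{M}$ satisfies $\mathbb{E}[|v_t - \hat{v}_t|^2] \leq \mathbb{E}[|v_t - \tilde{v}_t|^2]$ for every $\tilde{v} \in \mathcal{M}$, which is what lets the arbitrary $\tilde v$ appear in \eqref{basic_orakel}. Note, moreover, that this step is only valid when $\hat{v}$ minimizes the population objective; in the present paper $\hat{v}$ is defined in \eqref{ODEhat} as a minimizer of the empirical objective \eqref{empirical_flow_matching_objective}, which is exactly why the authors remark before \Cref{thm:error_decomp} that the minimization property in \Cref{thm:error_decomp_easy} cannot be used directly and a separate concentration argument (the oracle inequality of \Cref{thm:error_decomp}) is needed. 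So your write-up proves a correct but different inequality (with $\hat v$ on the right), and as stated would not justify the oracle-type bound claimed in the theorem.
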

 While this decomposition into the convolution error $\mathsf{W}_1(\mathbb{P}^*, \mathbb{P}^{\psi_1(Z)}) $ and the error depending on generalization and approximation is natural from a theoretical perspective, it cannot be reversed. As we shall see, different constructions of $v_t(\cdot|y)$ and $p_t(\cdot|y)$ can lead to the same distribution $\mathbb{P}^{\hat{\psi}_1(Z)}$. In case the other construction admits a vector field $\tilde{v} \in \mathcal{M}$, the right hand side of \Cref{thm:error_decomp_easy} would be large, while the left hand side is small. We note that the Lipschitz assumption can be relaxed to a one-sided Lipschitz assumption, following the proof of \citet[Proposition 1]{stephanovitch2025regularity}. While this leads to a bound on the largest eigenvalue of $D_x v$ instead of the largest entry, we refrain from this generalization to be consistent with previous results and assumptions in the analysis of Flow Matching \citep{albergo2022, benton2024} and to make the following chapter more intuitive.

The second term in \eqref{basic_orakel} is connected to the suitability of the set $\mathcal{M}$ to approximate $v$. This depends on the specific structure of $v$. The factor $ \exp(\int_{0}^1 2\tilde{\Gamma}_t\; \mathrm{d}t) $ results from the use of Grönwall's inequality and is one of the key difficulties in the analysis of Flow Matching. In the analysis of diffusion models, this can be circumvented using Girsanovs lemma to obtain bounds in the total variation distance. Intuitively, the Brownian motion prevents the exponential propagation of the error. The next example demonstrates this difference when bounding divergences between distributions with underlying dynamics. It also shows that the exponential factor cannot be avoided in a general setting.
	\begin{example}\label{ex:sharp} Let $d = 1,\lambda > 0$ and $Z \sim \mathcal{N}(0,1)$.
		Consider a linear vector field $v_t(x) = \lambda x $ and a perturbed version $v_t^{\varepsilon}(x) = \lambda x + \varepsilon$.
		Then $\psi_t(x) = e^{\lambda t}x$ is the solution of the ODE \eqref{ODE} using $v_t$ and $\psi^{\varepsilon}_t (x) = e^{\lambda t}x + \frac{\varepsilon}{\lambda}(e^{\lambda t}-1)$ is the solution of the ODE \eqref{ODE} using $v_t^{\varepsilon}$. Thus
		\begin{equation*}
			\mathsf{W}_1(\mathbb{P}^{\psi_1(Z)}, \mathbb{P}^{\psi^{\varepsilon}_1(Z)}) = \mathsf{W}_1\Big(\mathcal{N}\big(0,e^{2\lambda}\big), \mathcal{N}\Big(\frac{\varepsilon}{\lambda}\big(e^{\lambda t}-1\big),e^{2\lambda}\Big)\Big) = \frac{\varepsilon}{\lambda}\big(e^{\lambda t}-1\big).
		\end{equation*} 
		On the other hand, consider two processes $(X^{(1)}_t)_{t \in [0,T]}, (X^{(2)}_t)_{t \in [0,T]}$ that satisfy
		\begin{align*}
			&\mathrm{d}X^{(1)}_t = b^{(1)}(	X^{(1)}_t, t) \; \mathrm{d}t + \sigma(t)\; \mathrm{d}B_t, \quad X^{(1)}_0 \sim \mathbb{P}\\
			&\mathrm{d}X^{(2)}_t = b^{(2)}(	X^{(2)}_t, t) \; \mathrm{d}t + \sigma(t)\; \mathrm{d}B_t, \quad X^{(2)}_0 \sim \mathbb{P},
		\end{align*}
		where $(B_t)_{[0, T]}$ is a Brownian motion. Under suitable assumptions on $\mathbb{P}, b^{(1)}, b^{(2)}$ and $\sigma$, see \cite{chen2023c, oko2023} and the references therein, one can use Girsanov's theorem to bound the total variation distance
		\begin{equation*}
			\operatorname{TV}(p^{(1)}_T, p^{(2)}_T) \lesssim \int_{0}^{T} \mathbb{E}[|b^{(1)}(X^{(1)}_t, t) - b^{(2)}(X^{(1)}_t, t)|^2]\; \mathrm{d}t.
		\end{equation*}
	\end{example}

In view of \Cref{eq:fmo}, any Lipschitz constraint on the class $\mathcal{M}$ should be aligned with the Lipschitz constant of $v.$ Looking at \eqref{marginal_prob_paths}, this Lipschitz constant depends on the construction of $v_t(\cdot|\cdot)$ and $p_t(\cdot|\cdot)$ and $p^*$. \\

The first term in \eqref{basic_orakel} is determined solely by the conditional vector fields $v_t(\cdot|y)$ and is therefore independent of the approximation class $\mathcal M$. To bound this term and to analyze the Lipschitz constant in \Cref{sec:lipschitz_constant}, we impose the following assumptions on the densities of the unknown distribution $\mathbb{P}^*$ and the latent distribution $\mathbb{U}$.
\begin{assumption}\label{ass:latent_distribution}$ $\newline \vspace{-0.5cm}
	\begin{enumerate}
		\item $d\geq 2, p^* \in B_{1, \infty}^{\alpha}, \alpha \in (0,1]$.
		\item 	We assume that
		\begin{equation*}
			p_t(x|y) \propto \exp\Big( -\frac{|x-\mu_t(y) |^2}{ 2 \sigma^2_{t}}\Big),
		\end{equation*}
		where the \textit{mean shift} $\mu\colon [0,1] \times \mathbb{R}^d \rightarrow \mathbb{R}$ and the \textit{variance function} $ \sigma \colon [0,1] \rightarrow \mathbb{R}_{>0} $ are smooth, component-wise monotone functions such that
		$
			\mu_0(y) = 0, \; \mu_1(y) = y, $ and $
			\sigma_0 = 1,  \sigma_1 = \sigma_{\min},
		$
		for a $\sigma_{\min} > 0$.
	\end{enumerate}
\end{assumption}

Under \Cref{ass:latent_distribution}, we know that 
\begin{equation}\label{eq:error_convolution}\mathsf{W}_1(\mathbb{P}^*, \mathbb{P}^{\psi_1(Z)})\lesssim \sigma_{\min}^{1+\alpha},\end{equation}
see proof of \citet[Theorem 3.3]{kunkel2025minimax}, where the first part of the proof does not use the assumption that the support of $\mathbb{P}^*$ is compact. Note that the constant in \eqref{eq:error_convolution} depends only on $\|p^*\|_{B^{\alpha}_{1, \infty}}$ and $d$.

\Cref{ass:latent_distribution} means in particular that the latent distribution is $\mathcal{N}(0, I_d)$. The monotonicity assumption guarantees that there is no unnecessary movement of mass. \Cref{ass:latent_distribution} coincides with the construction in \citet[Section 4]{lipman2023}. Very similar generative models based on probability Flows have been studied by \citet{gao2024, gao2024gaussian}. Additionally, the use of $\mathcal{N}(0, I_d)$ for the latent distribution is pervasive in generative modeling, e.g. in diffusions building up on \cite{sohl2015}. Often, $\mu$ and $\sigma$ are chosen as linear interpolations between the boundary conditions \cite{lipman2023, gao2024, liu2022}, this choice is far from unique, see e.g.\ the trigonometric interpolation \cite{albergo2022} or the related negative exponential quasi-interpolation in variance preserving diffusion modeling \citep{song2020}. In case the boundary conditions are met, we recover the same generated distribution $\mathbb{P}^{\psi_1(Z)}$, but the corresponding vector fields can differ significantly. In \Cref{sec:lipschitz_constant}, we are going to study the general setting.

\section{Lipschitz constant of the vector field}
Motivated by the upper bound in \eqref{basic_orakel}, we are going to study the Lipschitz constant of the vector field. Since $v$ is differentiable, our starting point is the Jacobian, which allows for several immediate conclusions. Based on the Jacobian, we derive assumptions on $\mathbb{P}^*$ that imply $\int_0^1 \Gamma_t\;\mathrm{d}t\leq C$. Afterwards, we study classes of unknown distributions that satisfy these assumptions or that admit no Lipschitz controlled vector field.

\label{sec:lipschitz_constant}
Under \Cref{ass:latent_distribution} \cite{lipman2023}, show that the corresponding vector field that generates $p_t(\cdot|y)$ for every $y \in \mathbb{R}^d$ is given by
\begin{equation} \label{eq:kernel_vector_field}
    v_t(x|y)= \frac{\sigma_t^{\prime}}{\sigma_t}(x-\mu_t(y))+\mu_t^{\prime}(y),
\end{equation}
 where $\sigma^{\prime}_t$ and $\mu^{\prime}_t$ are the derivatives in time. We conclude from this result that the definition of the variance function $\sigma_t$ is critical for bounding the Lipschitz constant, while only very special choices of the mean shift $\mu_t$ have an influence. Therefore, we will focus on polynomial choices of $\mu_t$, which generalize the choice of \cite{lipman2023}.
\begin{assumption}
    \label{ass:mu}
    We consider the following choices of $ \mu \colon [0,1] \times \mathbb{R}^d \rightarrow  \mathbb{R}^d$ for $\gamma >1$:
    \begin{equation*}
		 \mu_t(y) = t^{\gamma}y.
	\end{equation*} 
\end{assumption}
A natural first idea is to aim for a global Lipschitz constant, this corresponds to the assumption that $v$ has a bounded Lipschitz constant. The next result shows that such a constant cannot exist in this setting without considering properties of $\mathbb{P}^*$. We can upper and lower bound the Lipschitz constant of $v$.
\begin{theorem}\label{thm:Lipschitz_bound}
Let 
\begin{equation}\label{eq:jacobian}
B^t_{i,j} \coloneqq \sup_{x \in \mathbb{R}^d}\Big|\frac{\sigma_t^{\prime}}{\sigma_t} \mathds{1}_{i = j} + \big(\gamma t^{\gamma -1}- \frac{\sigma_t^{\prime}t^{\gamma}}{\sigma_t} \big)  \frac{t^{\gamma}}{\sigma_t^2} \operatorname{Cov}(Y^{x,t})_{ij}\Big|_{\infty},
\end{equation} where $Y^{x,t}$ is a random variable with density  $q \propto p_t(x|\cdot)p^*(\cdot )$ for $x \in \mathbb{R}^d$. Assume $B_{i,j}$ exists for all $i, j \in \{1,..,d\}$ and define $B \coloneqq (B_{ij})_{i,j = 1,...,d}$.
    For the Lipschitz constant in space $\Gamma_t $ of $v_t$ on $\mathbb{R}^d$, we have that
    \begin{equation*}
\max_{ij} B_{i,j}   \leq \Gamma_t \leq d  \max_{ij} B_{i,j}.
    \end{equation*} 
Additionally, there is a $t^* \in [0,1]$ such that 
 	$
 	\frac{\sigma_{t^*}^{\prime}}{\sigma_{t^*}} = \log(\sigma_{\min}),
 	$ for any choice of $\sigma_t$ satisfying \Cref{ass:latent_distribution}. Hence a global spatial Lipschitz bound on $v_t$ independent of $\sigma_{\min}$ and the covariance term is not feasible in this setting.   
    
\end{theorem}

The distribution with density $q$ is a weighted version of $p^*$ which shifts the probability mass to areas closer to $x$ relative to the rest of the mass of $\mathbb{P}^*$. The strength of this shift is governed by $\sigma_t$, smaller values correspond to a stronger weighing. The covariance depends on both, the covariance of $p^*$ and of $\sigma_t$. In case $x \in \operatorname{supp}(p^*)$, $q \overset{\sigma_{\min}\rightarrow 0}{\longrightarrow} \delta_x$. 

The upper and lower bound in \Cref{thm:Lipschitz_bound} depend on the choice of $\sigma_t$. We want to choose $\sigma_t$ such that $\big|\frac{\sigma_t^{\prime}}{\sigma_t}\big|$ does not grow too fast for small $\sigma_t$ or large $|\sigma_t^{\prime}|$. The next result shows 
that it is not possible to choose a function $\sigma_t$ such  that the integral over $\big|\frac{\sigma_t^{\prime}}{\sigma_t}\big|$ remains small. 
\begin{lemma}
    \label{thm:quotient_bound}
 For any choice of $\sigma_t$ satisfying \Cref{ass:latent_distribution},
    $
    \int_0^1 \big|\frac{\sigma_t^{\prime}}{\sigma_t} \big|\;\mathrm{d}t  = \log(\sigma_{\min}^{-1}).
   $

\end{lemma}

We note that \Cref{thm:quotient_bound} does not depend on the specific probability path, it holds for all quotients of smooth choices of $\sigma_t$. Furthermore, similar terms arise in all cases, where the chosen probability path implies vector fields of the form \eqref{eq:kernel_vector_field}.
 To compensate for $\frac{\sigma_t^{\prime}}{\sigma_t} = \log(\sigma_{\min})$ in the case of $i = j$, the second term in \eqref{eq:jacobian} must have the same absolute value. However, this prohibits a smaller bound when $i \neq j$.  \Cref{thm:quotient_bound} implies that, even the case of $\operatorname{Cov}(Y^{x,t})_{ij} \lesssim \sigma_t^2$ for all $i,j$ will lead to a logarithmic dependency on $\sigma_{\min}^{-1}$, which ultimately influences the rate. 

To control the Lipschitz constant, we thus need to assume that $p^*$ is such that the covariance term in \Cref{thm:Lipschitz_bound} decays in a controlled order for $\sigma_t \rightarrow \sigma_{\min}$ for $i = j$ and the same decay of the off-diagonal elements is fast enough.

\begin{assumption} \label{ass:covariance}
    Assume there is a $t^* \in \big[\frac{1}{2^{1/\gamma}},1\big)$ independent from $\sigma_{\min}$ such that for all $x \in \mathbb{R}^d$ 
    \begin{align*}
\mathrm{(I)}&           &    \operatorname{Cov}(Y^{x,t})_{ij}  & \lesssim  \Big(\frac{\sigma_t}{t^{\gamma}}\Big)^3,             &  i & \neq  j, & t > t^*,\\
\mathrm{(II)}&         &   \operatorname{Var}(Y^{x,t}_i)&=\Big(\frac{\sigma_t}{t^{\gamma}}\Big)^2  \Big(1+ O\Big(\Big(\frac{\sigma_t}{t^{\gamma}}\Big)^{\frac{1}{\kappa}}\Big) \Big),   & \text{f}&\text{or all } i, & t > t^*,\\
\mathrm{(III)}&   &   \operatorname{Cov}(Y^{x,t})_{ij}  & \leq C ,         &   \text{f}&\text{or all } i,j, & t \leq t^*,
\end{align*}
    where $Y^{x,t}$ is a random variable with density  $q \propto p_t(x|\cdot)p^*(\cdot )$ and $\kappa \in \mathbb{R}_{\geq 1}$ and $C$ are fixed constants. 
\end{assumption}

Under these assumptions, any choice of $\sigma_{t}$ satisfying \Cref{ass:latent_distribution} will lead to a bounded Lipschitz constant.
\begin{theorem} \label{thm:examples_sigma_t}  $ $
Grant \Cref{ass:latent_distribution}, \Cref{ass:mu} and \Cref{ass:covariance} with fixed parameters $\gamma, \kappa$ and $t^*$. Then there is a constant $C$ such that \begin{equation*}
        \int_0^1 \Gamma_t\; \mathrm{d}t \leq C.
        \end{equation*}
\end{theorem}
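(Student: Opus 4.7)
The plan is to apply Theorem \ref{thm:Lipschitz_bound}, which gives $\Gamma_t \leq d \max_{ij} B^t_{i,j}$, and then control $\int_0^1 B^t_{i,j}\,dt$ uniformly in $i,j$ and in $\sigma_{\min}$. I would split $[0,1] = [0,t^*] \cup [t^*,1]$ and treat each piece using the corresponding part of Assumption \ref{ass:covariance}, handling the diagonal ($i=j$) and off-diagonal ($i\neq j$) entries separately on the upper piece.

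For $t \in [t^*,1]$ and $i=j$, the crux is an exact algebraic cancellation. Substituting $\operatorname{Var}(Y^{x,t}_i) = (\sigma_t/t^\gamma)^2(1+E)$ with $|E|=O((\sigma_t/t^\gamma)^{1/\kappa})$ from part (II) into the expression inside $B^t_{i,i}$ collapses it to
\[
\tfrac{\sigma_t'}{\sigma_t} + \bigl(\tfrac{\gamma}{t} - \tfrac{\sigma_t'}{\sigma_t}\bigr)(1+E) \;=\; \tfrac{\gamma}{t} + \bigl(\tfrac{\gamma}{t} - \tfrac{\sigma_t'}{\sigma_t}\bigr)E,
\]
so the potentially large term $\sigma_t'/\sigma_t$ disappears. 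Since $t^* \geq 2^{-1/\gamma} > 0$, the contribution $\int_{t^*}^1 \gamma/t\,dt = \gamma\log(1/t^*)$ is bounded. The nontrivial part of the error is $\int_{t^*}^1 |\sigma_t'/\sigma_t|\cdot\sigma_t^{1/\kappa}\,dt$; using $\sigma_t' \leq 0$ and substituting $u=\sigma_t$, this becomes $\int_{\sigma_{\min}}^{\sigma_{t^*}} u^{1/\kappa - 1}\,du = \kappa(\sigma_{t^*}^{1/\kappa} - \sigma_{\min}^{1/\kappa}) \leq \kappa$, bounded independently of $\sigma_{\min}$.

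For $t \in [t^*,1]$ and $i \neq j$, part (I) cancels the $1/\sigma_t^2$ factor: $B^t_{i,j} \lesssim |\gamma\sigma_t/t^{\gamma+1} - \sigma_t'/t^\gamma|$, whose integral is controlled by $t^*$, $\gamma$, and $\int_0^1 |\sigma_t'|\,dt \leq 1$. On $[0,t^*]$, part (III) gives $|\operatorname{Cov}(Y^{x,t})_{ij}| \leq C$, and monotonicity yields $\sigma_t \geq \sigma_{t^*}$; for admissible families of $\sigma$ (for example $\sigma_t = 1-t(1-\sigma_{\min})$ or $\sigma_t = \sigma_{\min}^{t^\gamma}$), $\sigma_{t^*}$ is bounded below by a positive constant independent of $\sigma_{\min}$, so all powers $\sigma_t^{-k}$ appearing in the expression for $B^t_{i,j}$ are uniformly bounded. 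The remaining estimates reduce to routine integration using $t \leq t^* < 1$, $\int_0^{t^*} |\sigma_t'|\,dt \leq 1$, and, for the diagonal correction, $\int_0^{t^*} |\sigma_t'/\sigma_t|\,dt = \log(1/\sigma_{t^*})$.

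The main obstacle is the exact cancellation in the $i=j$ case on $[t^*,1]$. By Lemma \ref{thm:quotient_bound}(2), $\int_0^1 |\sigma_t'/\sigma_t|\,dt = \log(1/\sigma_{\min})$, so without the precise matching between the covariance coefficient and the bare diagonal term in $B^t_{i,i}$ the integral would blow up as $\sigma_{\min}\to 0$. The architecture of Assumption \ref{ass:covariance}—the $(\sigma_t/t^\gamma)^2$ leading order of the variance, the $(\sigma_t/t^\gamma)^3$ decay of off-diagonals, and the lower bound $t^* \geq 2^{-1/\gamma}$ keeping $t$ away from $0$—is calibrated precisely so that this cancellation produces an integrand of size $\gamma/t$ plus genuinely lower-order corrections whose integrability follows from a short $u=\sigma_t$ substitution.
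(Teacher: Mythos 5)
Your proof follows essentially the same route as the paper: split the integral at $t^*$, exploit the exact cancellation on the diagonal for $t>t^*$ (leaving $\gamma/t$ plus an error term handled by the $u=\sigma_t$ substitution, giving the $\kappa(\sigma_{t^*}^{1/\kappa}-\sigma_{\min}^{1/\kappa})$ bound), use (I) to cancel the $\sigma_t^{-2}$ factor off-diagonal, and bound the $[0,t^*]$ piece crudely via (III), which is exactly the paper's \Cref{thm:easy_lipschitz} with $\vartheta=\sigma_{t^*}^{-1}$. One minor caveat: your side claim that $\sigma_{t^*}$ is bounded below independently of $\sigma_{\min}$ for $\sigma_t=\sigma_{\min}^{t^{\gamma}}$ is false (there $\sigma_{t^*}=\sigma_{\min}^{(t^*)^{\gamma}}\to 0$); the paper does not need such a claim, since its constant is simply allowed to depend on $\sigma_{t^*}$ through the fixed parameters, and boundedness of $\sigma_{t^*}$ is only verified later for the concrete class in \Cref{thm:meh}.
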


\Cref{ass:covariance} requires careful control over the covariance behavior of the weighted unknown distribution $q$. While the off-diagonal should decay fast enough, the variance must decay at a certain rate.
In the following, we present two classes of distributions that satisfy \Cref{ass:covariance}. One of the classes is a subset of the set of $\log$-concave densities, i.e.\ densities such that the logarithmic density is concave. The other setting shows that \Cref{ass:covariance} also allows for unbounded distribution that are not $\log$-concave and thus the Brascamp-Lieb inequality \citep[Theorem 4.1]{BRASCAMP1976} cannot be applied directly. The $\log$-concavity assumption has been popular in the analysis of Diffusion models and Flow Matching, see for example \citet{gao2024convergence, bruno2023diffusion, gao2025wasserstein, gao2024}. Additionally, we show that careful control over the variance is necessary, i.e.\ if the variance decays too fast, the vector field does not admit a controlled Lipschitz constant.

The first class of distributions are $\log$-concave distributions with $\operatorname{supp}(p^*) = \mathbb{R}^d$ whose potential is three times continuously differentiable and the partial derivatives are bounded, i.e.\ $p^*$ is such that there are $m, L > 0$ with
\begin{equation}\label{eq:assumption_p*new}
	p^*(x) = \exp(-V(x)), \quad V \in C^3, \quad \|V\|_{C^3}\leq L, \quad  V \gtrsim m I \quad  \forall x \in \mathbb{R}^d.
\end{equation} 

In case of a large choice $\sigma_{\min}$, which is not excluded at this point, $L$ should not be too large for technical reasons.

The second class of distributions is of the form \begin{equation}\label{eq:assumption_p*}
    p^*(x) \propto \exp\Big(- \frac{|x|^2}{2} - a (x)\Big), \quad  \|a\|_{C^2} =L < \infty.
\end{equation}
 This setting was used in \cite{stephanovitch2024smooth} to construct Lipschitz continuous pushforward maps in diffusion models. 
 We note that by \Cref{thm:p_besov} the first assumption in \Cref{ass:latent_distribution} is satisfied for densities of the form \eqref{eq:assumption_p*} with $\alpha = 1$. For simplicity, we choose $\gamma =1$, but the result extends easily to other $\gamma$.

\begin{proposition}\label{thm:meh} Set $\gamma = 1$.
  Assume that $\mathbb{P}^{*}$ is of the form \eqref{eq:assumption_p*new} or \eqref{eq:assumption_p*}. Then \Cref{ass:covariance} is fulfilled with \begin{equation*}
			\kappa = 1, \quad t^* \quad \text{ such that } \quad \sigma_{t^*} = c(L,d), \quad \text{ and } \quad  C =\frac{ 1}{m}\; (C = e^{2L} \text{ in case of \eqref{eq:assumption_p*}}),
		\end{equation*} where  $c(L,d)\in (0,1)$ is a constant that depends only on $L, d$ and whether $\mathbb{P}^{*}$ is of the form \eqref{eq:assumption_p*new} or \eqref{eq:assumption_p*}.
\end{proposition}
In case of  \Cref{eq:assumption_p*new}, the uniform covariance bound is proved via the Brascamp-Lieb inequality and the behavior of the covariance matrix is shown by a Taylor expansion on $\nabla V$.
The proof of the bound for distributions of the form \Cref{eq:assumption_p*} is mathematically more challenging: The uniform covariance bound is proven using the fact that we can control the effect of bounded perturbations of Gaussians on the variance, which follows from the Holley-Strooke perturbation principle. The bound on the diagonal entries employs proof techniques from the Cramer-Rao lower bound. The bound on the off-diagonal elements is developed from the Brascamp-Lieb type covariance estimate of \cite{menz2014brascamp}.

The proof of \Cref{thm:meh} reveals that for $t>t^*$ large enough and $i \neq j$, the covariance decay is of order $\sigma_t^4$ and $\kappa = 1$. This suggests that the class of distributions for which \Cref{thm:examples_sigma_t} applies allows for a slower decaying off-diagonal variance and a less controlled variance.

On the other hand, the next example demonstrates the consequences of a variance that decays too fast, which occurs even in case of a simple $\mathbb{P}^*$ such as the one-dimensional uniform distribution. This stresses the importance of a careful Lipschitz analysis.
\begin{example}\label{ex:uniform}
	Let $\gamma = 1, d = 1$ and $\mathbb{P}^* = \mathcal{U}([0,1])$. Then $\Gamma_t \gtrsim \big|\frac{\sigma_t^{\prime}}{\sigma_t} \big|$.
\end{example}

\Cref{ex:uniform} and \Cref{thm:quotient_bound} show that the exponential term appearing in \Cref{thm:error_decomp_easy} cannot be bounded without obtaining a factor $\sigma_{\min}^{-1}$. The proof of \Cref{ex:uniform} shows that the derivative of the vector field depends on $\operatorname{Var}_{Y \sim q}(Y)$, which decays arbitrarily fast for large $x$.

\section{Rate of convergence}\label{sec:rate}
 The goal of this chapter is to derive a rate of convergence, e.g.\ an upper bound on $\mathsf{W}_1(\mathbb{P}^*, \mathbb{P}^{\hat{\psi}_1(Z)})$ depending on the number of observations $n$, using the estimator $\hat{\psi}$ from \eqref{ODEhat}. For the set $\mathcal{M}$ we are going to use a subclass of ReLU networks, which will be defined below. Leveraging the insights from \Cref{sec:lipschitz_constant}, we consider unknown distributions $\mathbb{P}^*$ that admit Lipschitz controlled vector fields and a noise schedule $\sigma_t$ that stabilizes $v$ over all $t \in [0,1]$. We recover a classical trade-off in $\sigma_{\min}$, balancing this tuning parameter leads to the rate in \Cref{thm:rate_smooth}. 
	
	We start from \Cref{thm:error_decomp_easy}: In case the unknown distribution is such that \Cref{ass:covariance} holds and $p^* \in B^{\alpha}_{1, \infty}$, we can bound using \eqref{eq:error_convolution}
\begin{equation}\label{eq:variance_bias}
		\mathsf{W}_1(\mathbb{P}^*, \mathbb{P}^{\hat{\psi}_1(Z)}) \lesssim \sigma_{\min}^{1+ \alpha} +  \Big(\mathbb{E}_{\substack{t \sim \mathcal{U}[0,1]\\X_t \sim p_t}} \big[|v_t(X_t) - \tilde{v}_t(X_t) |^2\big]\Big)^{1/2}.
	\end{equation}
Since $\tilde{v}$ is chosen via \eqref{ODEhat} instead of \eqref{eq:fmo}, we cannot use the minimization property in this bound directly. Furthermore, the stability of $v_t$ depends on $\big|\frac{\sigma_t^{\prime}}{\sigma_t}\big|$, which deteriorates if $\sigma_{\min}$ is smaller. Thus, we expect a typical variance-bias trade-off in $\sigma_{\min}$ in \eqref{eq:variance_bias}. Note that the approach is different from the early-stopping approach natural to diffusions and classically adopted in Flow Matching analysis. Typically, instead of choosing $\sigma_{\min} >0$, this tuning parameter is set to $0$  and an early stopping time is imposed. While the two approaches are equivalent regarding the variance, the early stopping imposes an additional bias. This is commonly addressed with a very small early stopping time, see e.g.\ \citet[Lemma 4.3]{gao2024}, \citet[Lemma 11]{fukumizu2024}, which prevents the bound from capturing smoothness at this instance. The next theorem provides a more detailed decomposition of the former error in \eqref{eq:variance_bias} providing a classical classical oracle inequality. In order to merge the results later, we choose $\mu_t$ such that the results of \Cref{sec:lipschitz_constant} apply. For the variance function $\sigma_t$, the analysis in \Cref{sec:lipschitz_constant} reveals that we are not restricted to the linear case. Thus, we can choose a variance function that is suited for the application of a Bernstein-type inequality. Since Bernstein-type inequalities rely on absolute value bounds, we are going to choose $\sigma_t$ such that the absolute value of \eqref{eq:kernel_vector_field} is as small as possible. Our choice follows from \Cref{thm:quotient_bound} and the solution of the ODE $\frac{\sigma_t^{\prime}}{\sigma_t} = \log(\sigma_{{\min}})$ with the initial condition $\sigma_0=1.$ 

 \begin{assumption}\label{ass:mu_sigma_oracle_inequality}
     Assume that
     \begin{equation*}
         \mu_t(y) =    ty \quad \text{and} \quad  \sigma_t = (\sigma_{\min})^t.
     \end{equation*}
 \end{assumption}
 While the choice of $\sigma_t$ in this work is mainly for concentration convenience and to exploit the generality of \Cref{sec:lipschitz_constant}, \cite{tsimpos25a} show optimality for similar schedules in dynamic transport. Given the variety of concentration bounds available for linear and diffusion-type schedules, see \citet[Lemma 5.10]{gao2024}, \citet[Theorem 34]{azangulov2024}, \citet[Proposition 12]{stephanovitch2025generalization}, the below analysis can be extended to other choices of $\sigma_t$.\newline
First, we choose $\mathcal{M}$ as a set of continuous measurable functions $\tilde{v} \colon [0,1] \times \mathbb{R}^d \rightarrow \mathbb{R}$ such that for the $j$-th component function $\tilde{v}^j$ of $\tilde{v}$
\begin{equation}\label{eq:bound_functions_M}
|\tilde{v}^j|_{\infty}\leq e^{2L}\log(n)^3 + (1+e^{2L})\log(n)^2 + \log(n)+1.
\end{equation} 
We are going to justify this bound in the proof of the following theorem.

\begin{theorem}\label{thm:error_decomp} Let $p^*$ be of the form \eqref{eq:assumption_p*} and grant \Cref{ass:latent_distribution} and \Cref{ass:mu_sigma_oracle_inequality}. Assume that $\log(\sigma_{\min}^{-1})\sim \log(n)$. Then we have for every $a \in (0,1], \tau \in \mathbb{R}_{>0}, b>1$ and $n$ big enough with probability of $1-\frac{1}{n}$ that
\begin{align}
\mathbb{E}_{\substack{t \sim \mathcal{U}[0,1]\\X_t \sim p_t}} \big[|v_t(X_t) - \hat{v}_t(X_t) |^2\big] &\lesssim \inf_{\tilde{v}\in \mathcal{M}} \int \int | \tilde{v}_t(x) - v_t(x) |^2 p_t(x) \; \mathrm{d}x \; \mathrm{d}t  + n^{- \frac{1}{2}} \label{eq:error_decomposition} \\ &\quad  +\frac{a^{-1}\log(n)^7}{n} \log\big(2\mathcal{N}(\tau, g(\mathcal{M}), \| \cdot \|_{\infty}) \big) \notag + (2+a) \tau + a \log(n)^6 .
\end{align}
\end{theorem}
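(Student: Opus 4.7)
The approach is a standard oracle-inequality argument: strip off the $\tilde v$-independent constants via \Cref{thm:equivalence_constant}, exploit the empirical optimality of $\hat v$ to produce two empirical-process pieces, and control each by a Bernstein-type bound combined with a covering argument. The twist is that $v_t(X_t\mid Y)$ is unbounded, so a truncation aligned with the ceiling \eqref{eq:bound_functions_M} on $\mathcal{M}$ is required.

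Let $\tilde v\in\mathcal{M}$ realize the approximation infimum in \eqref{eq:error_decomposition} up to arbitrarily small error. Applying \Cref{thm:equivalence_constant} at $\hat v$ and at $\tilde v$ and subtracting cancels the common $\mathbb{E}[|v_t(X_t)-v_t(X_t\mid Y)|^2]$ term, so the difference between the left-hand side of \eqref{eq:error_decomposition} and the approximation infimum equals $\Psi(\hat v)-\Psi(\tilde v)$. Since $\tilde\Psi(\hat v)\le\tilde\Psi(\tilde v)$ by construction, this is bounded by
\[
\bigl[\Psi(\hat v)-\tilde\Psi(\hat v)\bigr] + \bigl[\tilde\Psi(\tilde v)-\Psi(\tilde v)\bigr].
\]
The second bracket is the deviation of the sample mean of a single fixed function $y\mapsto\mathbb{E}_{t,X_t\mid y}[|\tilde v_t(X_t)-v_t(X_t\mid y)|^2]$ from its expectation $\Psi(\tilde v)$; a scalar Bernstein bound delivers the $n^{-1/2}$ contribution. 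The first bracket calls for uniform concentration over $\mathcal{M}$: cover $g(\mathcal{M})$ in $\|\cdot\|_\infty$ at scale $\tau$, apply Bernstein at each of the $\mathcal{N}(\tau,g(\mathcal{M}),\|\cdot\|_\infty)$ net centres, and take a union bound; the discretization residual will be absorbed into the $(2+a)\tau$ term.

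Bernstein demands $L_\infty$-control of the integrand, which fails globally but holds on a high-probability event. Writing $X_t=tY+\sigma_t Z$ with $Z\sim\mathcal{N}(0,I_d)$, the identity $v_t(X_t\mid Y)=\log(\sigma_{\min})\sigma_tZ+Y$ together with the sub-Gaussian tails of $Z$ and $Y$ (the latter enabled by \eqref{eq:assumption_p*} through the same Holley-Stroock perturbation argument used for \Cref{thm:meh}) gives $|v_t(X_t\mid Y)|\lesssim\log(n)^{3/2}$ outside an event of probability $n^{-1}$. Combined with the ceiling \eqref{eq:bound_functions_M}, the integrand $|\tilde v_t(X_t)-v_t(X_t\mid Y)|^2$ is bounded by an envelope of order $\log(n)^6$, with variance of order $\log(n)^7$ times its expectation. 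Feeding these into Bernstein at each net centre and using a weighted AM-GM with parameter $a\in(0,1]$ to balance the variance term against the range term produces the $\frac{a^{-1}\log(n)^7}{n}\log(2\mathcal{N}(\tau,g(\mathcal{M}),\|\cdot\|_\infty))$ contribution and the $a\log(n)^6$ residual. The ceiling \eqref{eq:bound_functions_M} itself is calibrated to mirror the $L_\infty$ bound on the true marginal $v_t(x)=\log(\sigma_{\min})x+(1-t\log\sigma_{\min})\mathbb{E}[Y^{x,t}]$, whose $e^{2L}$ prefactor arises from the same Holley-Stroock estimate on $|\mathbb{E}[Y^{x,t}]|$; this ensures that restricting $\mathcal{M}$ to this ceiling does not spoil the approximation infimum.

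The main obstacle is the coordinated bookkeeping of the several log-scales: the truncation threshold for $v_t(X_t\mid Y)$, the ceiling on $\mathcal{M}$, and the Bernstein variance-to-range ratio must be calibrated jointly so that the stated exponents $\log(n)^6$ and $\log(n)^7$ emerge together with the clean $a$-vs-$a^{-1}$ trade-off. Any slack in one of these steps would immediately manifest as an extra power of $\log(n)$ in the final bound.
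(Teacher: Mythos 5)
Your overall architecture matches the paper's proof: strip the constant via \Cref{thm:equivalence_constant}, use the ERM property of $\hat v$, truncate, and apply a Bernstein-type bound with a $\tau$-covering of $g(\mathcal{M})$ and the multiplicative $a$-versus-$a^{-1}$ trade-off, with the ceiling \eqref{eq:bound_functions_M} calibrated to the $L_\infty$-behaviour of $v$. However, there are two concrete gaps. First, your truncation is aimed at the wrong object: the summands of the empirical objective \eqref{empirical_flow_matching_objective} are $y\mapsto g(\bar v,y)=\int_0^1\int|\bar v_t(x)-v_t(x\mid y)|^2p_t(x\mid y)\,\mathrm{d}x\,\mathrm{d}t$, i.e.\ functions of $Y_i$ alone in which the $(t,X_t)$-expectation has already been taken exactly. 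A statement of the form ``$|v_t(X_t\mid Y)|\lesssim\log(n)^{3/2}$ outside an event of probability $n^{-1}$'' therefore does not give the sup-norm control of the summands that Bernstein requires; what is needed (and what the paper does) is a truncation in $Y$ only, to $A=[-\log n,\log n]^d$, combined with an exact Gaussian moment computation of the inner integral, which yields the deterministic envelope $g(\bar v,y)\lesssim\log(n)^6$ for $y\in A$. Second, your attribution of the $n^{-1/2}$ term to a scalar Bernstein bound for the fixed $\tilde v$ does not produce the stated inequality: an additive Bernstein bound for $\tilde\Psi(\tilde v)-\Psi(\tilde v)$ gives a deviation of order $\operatorname{polylog}(n)\,n^{-1/2}$, which is not $\lesssim n^{-1/2}$, and cannot be absorbed by the other terms uniformly in $a$. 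In the paper the fixed-$\tilde v$ empirical term is instead controlled by the same multiplicative Bernstein-type inequality (the reverse case of the concentration lemma), with the separated $a\,\mathbb{E}[g(\tilde v,Y)\mathds{1}_{Y\in A}]$ absorbed into $a\log(n)^6$, while the clean $n^{-1/2}$ arises from an entirely different source that your sketch never accounts for: the four truncation remainders $\mathbb{E}[g(\hat v,Y)\mathds{1}_{Y\notin A}]$, $\mathbb{E}[g(\tilde v,Y)\mathds{1}_{Y\notin A}]$ and their empirical counterparts, which are bounded by $n^{-1/2}$ with probability $1-\tfrac{1}{3n}$ via Markov's inequality and the Gaussian-type tail of $p^*$ (\Cref{thm:remaining_terms}).

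Both issues are repairable within your scheme (truncate in $Y$, integrate the conditional Gaussian exactly, treat the fixed-$\tilde v$ term with the multiplicative bound, and add a Markov-type control of the out-of-$A$ remainders), but as written the proposal neither justifies the Bernstein envelope nor delivers the $n^{-1/2}$ term of the statement.
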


The proof of \Cref{thm:error_decomp} uses the Bernstein-type concentration inequality of \cite{chen2023b}, which is commonly used in the analysis of diffusion models, see for example \cite{yakovlev2025}.
\Cref{thm:error_decomp} shows that a careful choice of the variance shift can allow for to a logarithmic dependency of the oracle bounds on $\sigma_{\min}^{-1}$ instead of a linear dependency.

In practice, the parameterized function $\hat{v}$ is typically represented by a neural network. To analyze the theoretical performance in this setting, we connect \Cref{thm:error_decomp} with known approximation results for neural networks. Specifically, we consider the following fully connected feedforward networks with ReLU activation functions, which have been extensively studied, see, e.g., \cite{Yarotsky2017}, \cite{Guehring2020}, \cite{kohler2021}, \cite{Schmidt_Hieber2020}, \cite{suzuki2018}.

The Rectified Linear Unit (ReLU) activation is defined as $\phi\colon \mathbb{R} \rightarrow \mathbb{R}$, with
$
	\phi(x)\coloneqq\max (0, x).
$
For a vector $v=\left(v_1, \ldots, v_p\right) \in \mathbb{R}^p$, the shifted ReLU activation $\phi_v\colon \mathbb{R}^p \rightarrow \mathbb{R}^p$ is defined by
\begin{equation*}
\phi_v(x)\coloneqq\left(\phi\left(x_1-v_1\right), \ldots, \phi\left(x_p-v_p\right)\right), \quad x=\left(x_1, \ldots, x_p\right) \in \mathbb{R}^p.
\end{equation*}
A neural network with $L \in \mathbb{N}$ hidden layers and architecture $\mathcal{A}=\left(p_0, p_1, \ldots, p_{L+1}\right) \in \mathbb{N}^{L+2}$ is a function
\begin{equation}\label{def:NN}
f\colon \mathbb{R}^{p_0} \rightarrow \mathbb{R}^{p_{L+1}}, \quad f(x)\coloneqq W_L \circ \phi_{v_L} \circ W_{L-1} \circ \phi_{v_{L-1}} \circ \cdots \circ W_1 \circ \phi_{v_1} \circ W_0 \circ x,
\end{equation}
where $W_i \in \mathbb{R}^{p_{i+1} \times p_i}$ are weight matrices and $v_i \in \mathbb{R}^{p_i}$ are shift vectors. \\
Using networks of the form \eqref{def:NN} clipped at \eqref{eq:bound_functions_M} for the set $\mathcal{M}$, the next result combines the findings of \Cref{sec:lipschitz_constant}, \eqref{eq:error_convolution} and \Cref{thm:error_decomp} with approximation theory of \cite{Guehring2020} that allows for simultaneous approximation of a function and its derivative. 
To facilitate the transfer of results, we refrain from inserting $\alpha = 1$, which is the applicable smoothness in this setting as shown in \Cref{thm:p_besov}.

\begin{theorem} \label{thm:rate_smooth}
    Let $p^*$ be of the form \eqref{eq:assumption_p*} and grant \Cref{ass:latent_distribution} and \Cref{ass:mu_sigma_oracle_inequality}. Then for $n$ large enough with probability of $1-\frac{1}{n}$ for fixed $\eta>0$ 
\begin{equation*}
  \mathsf{W}_1(\mathbb{P}^*, \mathbb{P}^{\hat{\psi}_1(Z)}) \lesssim \operatorname{polylog}(n) n^{- \frac{1+\alpha}{d+ 4\alpha + 5 + \eta}},
\end{equation*}
    where $\hat{\psi}$ is the solution of an ODE whose vector field is given by a ReLU neural network, with no more than
   $ c \cdot \log \left(n\right)$ layers,  $ c \cdot n^{c(d, \alpha, \eta)}\cdot \log^2\left(n\right)$ nonzero weights, where $c$ and $c(d,\alpha, \eta)$ is a constant independent of $n$.

\end{theorem}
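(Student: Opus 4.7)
The plan is to combine Theorem \ref{thm:error_decomp_easy} (the Grönwall-based decomposition), the convolution bound \eqref{eq:error_convolution}, the Lipschitz control of Theorem \ref{thm:examples_sigma_t} applicable via Theorem \ref{thm:meh}, the oracle inequality of Theorem \ref{thm:error_decomp}, and a ReLU approximation theorem of G\"uhring, Kutyniok and Petersen that permits simultaneous approximation of a smooth function together with its derivatives.

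First I would apply Theorem \ref{thm:error_decomp_easy} to split
\[
\mathsf{W}_1(\mathbb{P}^*, \mathbb{P}^{\hat\psi_1(Z)}) \leq \mathsf{W}_1(\mathbb{P}^*, \mathbb{P}^{\psi_1(Z)}) + \sqrt{2e}\,e^{\int_0^1 \Gamma_t\,\mathrm{d}t}\,\big(\mathbb{E}|\hat v_t(X_t) - v_t(X_t)|^2\big)^{1/2}.
\]
Because $p^*$ is of the form \eqref{eq:assumption_p*}, Theorem \ref{thm:meh} verifies \Cref{ass:covariance} with $\gamma = \kappa = 1$, so Theorem \ref{thm:examples_sigma_t} gives $\int_0^1 \Gamma_t\,\mathrm{d}t \leq C$ and the Grönwall factor collapses to an absolute constant. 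By \eqref{eq:error_convolution} the first summand is $\lesssim \sigma_{\min}^{1+\alpha}$, which under the prescription $\log \sigma_{\min}^{-1} \sim \log n$ from Theorem \ref{thm:error_decomp} already realises the target rate. The task thus reduces to controlling the $L^2$ error through Theorem \ref{thm:error_decomp}.

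Next I would take $\mathcal{M}$ as clipped ReLU networks of the form \eqref{def:NN} with depth $O(\log n)$ and $N$ nonzero weights, the clipping level being \eqref{eq:bound_functions_M}. Standard entropy estimates give $\log\mathcal{N}(\tau,\mathcal M,\|\cdot\|_\infty)\lesssim N\log N \log(1/\tau)$, so the complexity term in \eqref{eq:error_decomposition} is of order $N\operatorname{polylog}(n)/n$ once $\tau$ is chosen as a negative power of $n$ and $a$ as a small constant. For the approximation term $\inf_{\tilde v\in\mathcal M}\|\tilde v-v\|_{L^2(p_t)}^2$, I would combine the marginal representation \eqref{marginal_vector_field} under \Cref{ass:mu_sigma_oracle_inequality} with the structure \eqref{eq:assumption_p*} of $p^*$ to show that $v_t\in C^k$ on a box $[-R,R]^d$ with $R=\operatorname{polylog}(n)$, outside of which $p_t$ carries mass $\lesssim n^{-1}$ by Gaussian tail estimates; the derivative bounds inflate polynomially in $\sigma_t^{-1}$. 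The G\"uhring--Kutyniok--Petersen theorem then yields, for any $\varepsilon>0$ and fixed smoothness order $k$, a ReLU network of depth $O(\log\varepsilon^{-1})$ and $N = O(\varepsilon^{-d/(k-1)}\sigma_{\min}^{-q})$ weights achieving $\|\tilde v-v\|_{W^{1,\infty}}\leq \varepsilon$, which is the norm actually controlling the oracle integrand after squaring.

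Balancing the three contributions $\sigma_{\min}^{1+\alpha}$, $\operatorname{polylog}(n)\,\varepsilon$ and $\operatorname{polylog}(n)\,(N/n)^{1/2}$ by setting $\log\sigma_{\min}^{-1}\sim \log n$, $\varepsilon^2 \sim N/n$, and choosing $k$ sufficiently large yields $N\sim n^{c(d,\alpha,\eta)}$ and the stated exponent $(1+\alpha)/(d+4\alpha+5+\eta)$. The hard part will be the quantitative tracking of the $\sigma_{\min}^{-1}$ factors inherited by the smoothness constants of $v_t$: since $\sigma_{\min}$ decays polynomially in $n$, these become polynomial-in-$n$ inflations of $N$, and absorbing them into the rate exponent rather than into the polylog prefactor is what produces the extra summand $4\alpha+5$ in the denominator. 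The $\eta$ then accommodates the gap between the integer smoothness order used in the G\"uhring theorem and the precise balancing value, as well as the logarithmic depth and truncation-level factors from \eqref{eq:bound_functions_M}.
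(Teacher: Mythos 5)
Your route is the paper's route: Gr\"onwall decomposition (\Cref{thm:error_decomp_easy}), the convolution bound \eqref{eq:error_convolution} with $\log\sigma_{\min}^{-1}\sim\log n$, Lipschitz control via \Cref{thm:meh} and \Cref{thm:examples_sigma_t}, the oracle inequality of \Cref{thm:error_decomp} with a Suzuki-type entropy bound for clipped ReLU nets, the G\"uhring--Kutyniok--Petersen simultaneous approximation result, and a final balancing of $\sigma_{\min}^{1+\alpha}$, $\varepsilon$ and the complexity term with a large but fixed smoothness order $s$ absorbing the $\sigma_{\min}^{-\mathrm{poly}(s)}$ inflation of the derivative constants into the $\eta$-part of the exponent. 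However, there is one genuine gap in how you handle the Gr\"onwall factor. In \Cref{thm:error_decomp_easy}, $\Gamma_t$ is the Lipschitz constant of the functions in $\mathcal{M}$ --- in particular of the estimator $\hat v$ --- not of the true vector field $v$. \Cref{thm:meh} together with \Cref{thm:examples_sigma_t} only bounds $\int_0^1\Gamma_t\,\mathrm{d}t$ for $v$ itself, so your claim that the exponential factor ``collapses to an absolute constant'' at that stage is not yet justified. The missing step is precisely the one you later invoke for the wrong reason: the oracle integrand $\int\!\int|\tilde v_t-v_t|^2p_t$ only requires $L^2$ (or sup-norm) closeness, whereas the derivative part of the $\mathcal{H}^1$-approximation is what guarantees that the network's Lipschitz constant exceeds that of $v$ by at most the approximation accuracy, so that $\int_0^1\hat L_t\,\mathrm{d}t\le C+1$. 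In the paper this requires checking, after the balancing, that the rescaled accuracy $\varepsilon\,\sigma_{\min}^{-s-2}\le 1$ for the chosen $\sigma_{\min}\sim n^{-1/(\tilde d+4\alpha+4+\eta)}$; without this verification the exponential prefactor is uncontrolled and the rate does not follow.

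Two smaller points. First, the approximation has to be carried out jointly in $(t,x)$ on $[0,1]\times[-\log n,\log n]^d$, so the effective dimension is $\tilde d=d+1$ (this is where $d+5$ rather than $d+4$ appears in the denominator), and you need quantitative bounds on the \emph{time} derivatives of $v$ as well, which scale like $\sigma_{\min}^{-s-2}$ (\Cref{thm:higher_order_t_bound}) and are worse than the spatial ones of order $\log(\sigma_{\min}^{-1})\sigma_{\min}^{-s+1}$ (\Cref{thm:higher_orders_bound}); your sketch treats these as a black box but they are obtained via the log-Sobolev/Herbst subgaussian moment bounds and a cumulant expansion, and their precise exponents drive the value $4\alpha+5$. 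Second, the tail contribution outside the cube is not only a statement about the mass of $p_t$: one needs that $|v_t(x)|$ grows at most polylogarithmically-times-linearly there (cf.\ \eqref{eq:bound_v_t} and \Cref{thm:fast_decay}) so the truncated error is $O(n^{-1/2})$.
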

The proof of \Cref{thm:rate_smooth} exploits the fact that the vector field $v$ from \eqref{marginal_prob_paths} is by construction in $C^{\infty}$. 
In order to bound the higher order derivatives of $v$, we leverage the fact that densities of the form $\eqref{eq:assumption_p*}$ satisfy a $\log$-Sobolev inequality with a controlled constant. Interestingly, the dependence on the variance of $q$ from \Cref{thm:Lipschitz_bound} extends to a dependence on joint cumulants in higher order derivatives.
The rate in \Cref{thm:rate_smooth} benefits from smoothness in the unknown distribution. Since smoothing is an intrinsic property of Flow Matching, it is desirable that the rate also reflects this property. Even though the rate is not optimal, the gap to the optimal rate of $n^{- \frac{1+\alpha}{2\alpha +d}}$ is small for large $d$. It should be noted that this gap primarily results from the use of the general approximation result from \cite{Guehring2020}, which bounds the supremum norm error and not the weighted $L_2$ error appearing in \eqref{eq:error_decomposition}. This route reveals interesting properties of $v$, keeps the proof short and the paper focused on the distributional aspects of Flow Matching. A carefully fine tuned approximation result could close this gap.

Compared to \citet[Theorem 3.6]{kunkel2025minimax}, our result applies to networks with logarithmically growing depth and polynomially growing numbers of non-zero weights. This aligns the setting of the theorem more closely with practical applications, which is necessary to explain the empirical success of Flow Matching. This comes at the cost of a much stricter assumption on the unknown distribution and, at least when refraining from an tailored network construction, a slightly suboptimal rate. Compared to \cite{gao2024}, we gain in the rate since we are able to capture smoothness in $p^*$, while preserving a clean citation of the approximation result, but lose in the strictness of the evaluation metric ($\mathsf{W}_1$ instead of $\mathsf{W}_2$). 

It should however be noted, that the three results cannot be compared directly. \citet[Theorem 3.6]{kunkel2025minimax} assume that the support of $p^*$ is compact, whereas \Cref{thm:rate_smooth} assumes full support on $\mathbb{R}^d$. The distributions studied in \cite{gao2024} have full support, but the exact assumptions differ as well. Furthermore, they use a linear variance shift and an early stopping approach, which hinders direct comparison even more.

\section{Limitations and outlook}
Our analysis provides a theoretical explanation for the empirical success of Flow Matching and aligns more closely with real-world scenarios than previous results.  \Cref{sec:lipschitz_constant} paves the way for further research into broader classes of distributions whose vector fields have a bounded Lipschitz constant. The class of functions of form \eqref{eq:assumption_p*new} and \eqref{eq:assumption_p*} served as a toy example and is an interesting starting point for generalization. Conversely, the lower bound on the Lipschitz constant could be used to identify distributions that cannot be mimicked with a "good" rate of convergence, like indicated in \Cref{ex:uniform}.
While \Cref{ass:covariance} holds for a very broad range of variance functions, proofs based on concentration inequalities, such as \Cref{thm:rate_smooth}, depend on the specific choice of the variance function. Given optimal pushforward mappings, \cite{tsimpos25a} have investigated optimal noise schedules. Interestingly, their optimal schedule connects to the choice in \Cref{ass:mu_sigma_oracle_inequality}. Exploring optimal choices of variance functions from a statistical perspective would thus be very interesting.
Another promising direction is the introduction and consequences of an artificial Lipschitz control of the vector field. This influences the equality in \eqref{eq:fmo} as well as in the empirical counterpart. Controlling this effect could extend convergence results to larger classes of unknown distributions. A different, but interesting, question is whether there are Lipschitz controlled networks that approximate non-controlled vector fields and achieve good results. This requires a completely different approach in approximation theory. Another interesting aspect are bounds in different metrics. The bounds on the Jacobian in this work can serve as a starting point for analysis in the total variation distance, elaborating the ideas of \cite{su2025flow} combined with \cite{cai2025minimax, li2024sharp}.

\section{Proofs}\label{sec:proofs}

\subsection{Proofs of \Cref{sec:lipschitz_constant}}
\begin{proof}[Proof of \Cref{thm:Lipschitz_bound}]
First we show that we can calculate the Jacobian of $v$ explicitly. The proof and all subsequent proofs of auxiliary results are deferred to \Cref{sec:proofs_helper_lemma}.
\begin{lemma} \label{thm:jacobian}Fix $t \in [0,1].$
    The Jacobian with respect to $x$ of $v_t$ is given by
    \begin{equation}\label{eq:jacobian_form}
    D_x v_t (x)= \frac{\sigma_t^{\prime}}{\sigma_t} I_d + \Big(\gamma t^{\gamma-1}- \frac{\sigma_t^{\prime}t^{\gamma}}{\sigma_t} \Big)  \frac{t^{\gamma}}{\sigma_t^2} \operatorname{Cov}(Y^{x,t}).
    \end{equation}
\end{lemma}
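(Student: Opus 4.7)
The plan is to first rewrite $v_t(x)$ in a form that isolates the $x$-dependence, so that the Jacobian reduces to differentiating a single posterior expectation. Plugging $\mu_t(y)=t^{\gamma}y$ and $\mu_t'(y)=\gamma t^{\gamma-1}y$ into \eqref{eq:kernel_vector_field} gives $v_t(x|y) = \frac{\sigma_t'}{\sigma_t} x + \bigl(\gamma t^{\gamma-1} - \frac{\sigma_t' t^{\gamma}}{\sigma_t}\bigr) y$. Since $Y^{x,t}$ has density $q(y) \propto p_t(x|y) p^*(y)$, integrating against this density turns \eqref{marginal_vector_field} into
\begin{equation*}
v_t(x) = \frac{\sigma_t'}{\sigma_t}\, x + \Bigl(\gamma t^{\gamma-1} - \frac{\sigma_t' t^{\gamma}}{\sigma_t}\Bigr)\, \mathbb{E}[Y^{x,t}].
\end{equation*}
The Jacobian of the first summand is immediately $\frac{\sigma_t'}{\sigma_t} I_d$, so only $D_x \mathbb{E}[Y^{x,t}]$ remains.

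For the posterior mean, I would differentiate under the integral sign and use the quotient rule on the factor $\frac{p_t(x|y)}{p_t(x)}$, yielding componentwise
\begin{equation*}
\frac{\partial}{\partial x_j}\Bigl[\tfrac{p_t(x|y)}{p_t(x)}\Bigr] = \tfrac{p_t(x|y)}{p_t(x)}\Bigl[\partial_{x_j}\log p_t(x|y) - \partial_{x_j}\log p_t(x)\Bigr].
\end{equation*}
The Gaussian form of $p_t(x|y)$ in \Cref{ass:latent_distribution} gives $\nabla_x \log p_t(x|y) = -\frac{1}{\sigma_t^2}(x-t^{\gamma}y)$, while $\nabla_x \log p_t(x) = -\frac{1}{\sigma_t^2}(x-t^{\gamma}\mathbb{E}[Y^{x,t}])$ by differentiating the convolution \eqref{marginal_prob_paths} in the same way. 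The two $x$-terms cancel, leaving exactly $\frac{t^{\gamma}}{\sigma_t^2}(y_j - \mathbb{E}[Y^{x,t}_j])$. Multiplying by $y_i$ and integrating against $q$ recovers the covariance:
\begin{equation*}
\bigl(D_x \mathbb{E}[Y^{x,t}]\bigr)_{ij} = \tfrac{t^{\gamma}}{\sigma_t^2}\, \mathbb{E}\bigl[Y^{x,t}_i\bigl(Y^{x,t}_j - \mathbb{E}[Y^{x,t}_j]\bigr)\bigr] = \tfrac{t^{\gamma}}{\sigma_t^2}\, \operatorname{Cov}(Y^{x,t})_{ij}.
\end{equation*}
Substituting back yields \eqref{eq:jacobian_form}.

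The main technical point is justifying the interchange of differentiation and integration and confirming that all quantities are finite. The density $p_t(x|y)$ is a Gaussian in $x$ with a polynomial-in-$y$ mean, so both $p_t(x|y)$ and $\|\nabla_x p_t(x|y)\|$ admit dominating integrable envelopes once $p^*$ has enough decay (which is implicit in the assumption that $\operatorname{Cov}(Y^{x,t})$ is well defined as stated in \Cref{thm:Lipschitz_bound}). Consequently the dominated convergence theorem legitimates the differentiation under the integral, and no further subtleties arise; the computation itself is a straightforward consequence of the Gaussian score identity and the quotient rule.
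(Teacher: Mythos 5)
Your proposal is correct and follows essentially the same route as the paper: isolate the $x$-dependence so that the Jacobian reduces to $D_x\mathbb{E}[Y^{x,t}]$, differentiate under the integral (justified by dominated convergence), and use the Gaussian form of $p_t(x|y)$ together with the quotient rule on the weight $p_t(x|y)/p_t(x)$ to produce $\frac{t^{\gamma}}{\sigma_t^2}\operatorname{Cov}(Y^{x,t})$. Writing the quotient-rule step via the score difference $\nabla_x\log p_t(x|y)-\nabla_x\log p_t(x)$ is only a cosmetic repackaging of the paper's direct computation, so no substantive difference or gap remains.
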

The matrix $(B_{ij})_{i,j = 1,...,d}$ consists of the component wise supremum norm of $D_x v_t$.\\

By assumption $\operatorname{Cov}(Y^{x,t})_{ji}$ is bounded for all $x$.
     For the upper bound we use that by the mean value theorem, there exists an $\xi \in \mathbb{R}^d$ such that for the $i$-th coordinate function $v_t^{j}$ 
    \begin{equation*}
    |v^j_t(x)-v^j_t(y)| = \langle \nabla v^j_t(\xi), x-y \rangle \leq | v^j_t(\xi)||x-y| \leq \sqrt{d} |x-y|  \max_{i \in \{1,...,d \}} \Big\|\frac{\partial}{\partial x_i} v^j_t\Big\|_{\infty}.
\end{equation*}
Then
\begin{equation*}
 |v_t(x)-v_t(y)| = \Bigg|\left(\begin{array}{c}
      v^1_t(x)-v^1_t(y)  \\ \vdots \\ v^d_t(x)-v^d_t(y) 
 \end{array} \right)  \Bigg| \leq  d |x-y|  \max_{j \in \{1,...d \}} \max_{i \in \{1,...,d \}} \Big\|\frac{\partial}{\partial x_i} v^j_t\Big\|_{\infty} .
\end{equation*}
Therefore 
\begin{equation*}
    |v_t(x)-v_t(y)| \leq d |x-y| \max_{ij} \big\|\frac{\sigma_t^{\prime}}{\sigma_t} \mathds{1}_{i = j} +  \Big(\gamma t^{\gamma -1}- \frac{\sigma_t^{\prime}t^{\gamma}}{\sigma_t} \big) \frac{t^{\gamma}}{\sigma_t^2} \operatorname{Cov}(Y^{\cdot,t})_{ij}\Big\|_{\infty} =  d |x-y|  \max_{ij} B_{i,j}.
\end{equation*}
Hence $v_t$ is Lipschitz continuous.
For the lower bound we can use that by a Taylor expansion for $h, a \in \mathbb{R}^d$ 
\begin{equation}\label{eq:taylor}
    v_t(a+h)=v_t(a) + D_xv_t(a) h + r(h),
\end{equation}
with $
\lim_{|h|\rightarrow 0} \frac{|r(h)|}{|h|}  =0.
$
Let the smallest Lipschitz constant of $v_t$ be $\Gamma_t$. Using \eqref{eq:taylor} we can conclude
\begin{equation*}
|D_xv_t(a) h| = |v_t(a+h) - v_t(a) - r(h)| \leq  |v_t(a+h) - v_t(a) | + | r(h)|\\
 \leq \Gamma_t |h| + | r(h)|.
\end{equation*}
Now
\begin{equation*}
\|D_xv_t(a)\| = \underset{|h| \rightarrow 0}{\lim\operatorname{sup}}\,\frac{|D_xv_t(a) h|}{|h|} \leq \underset{|h| \rightarrow 0}{\lim\operatorname{sup}}\, \Gamma_t + \frac{| r(h)|}{|h|} = \Gamma_t. 
\end{equation*}
Let $ v^{i}_t$ be the $i$-th component function of $v_t$. Then for every $i \in \{1,...,d \}$ and every $j \in \{1,...,d  \}$
\begin{equation*}
    \sup_{a \in \mathbb{R}^d}\|D_xv_t(a)\|  = \sup_{a \in \mathbb{R}^d} \sup_{|w| = 1} |D_xv_t(a)w| \geq 
    \sup_{a \in \mathbb{R}^d} \sup_{|w| = 1} |e_i^{\top } D_xv_t(a)w| =  \sup_{a \in \mathbb{R}^d} \sup_{|w| = 1}   |\langle \nabla v^{i}_t(a), w \rangle |.
\end{equation*}
As the dual norm of the euclidean norm is the euclidean norm,
\begin{equation*}
    \sup_{a \in \mathbb{R}^d} \sup_{|w| = 1}   |\langle \nabla v^{i}_t(a), w \rangle | = \sup_{a \in \mathbb{R}^d}   |\nabla v^{i}_t(a)|\geq  \sup_{a \in \mathbb{R}^d}  |D_xv_t(a)_{ij}|.
\end{equation*}
Since $i,j$ were arbitrary and the entries of the Jacobian are of the form \eqref{eq:jacobian_form}, we obtain the bound on $\Gamma_t$.

For the existence of a $t^*$ such that $\frac{\sigma_{t^*}^{\prime}}{\sigma_t} = \log(\sigma_{\min}^{-1})$, we begin by setting
$
\frac{\sigma_t^{\prime}}{\sigma_t} = h_t,
$
where $h_t$ is a continuous function on $[0,1].$ By separations of variables, all of the solutions of this ODE are of the form 
$
\sigma_t = c e^{H_t},
$where $H_t$ is an anti-derivative of $h_t$ and $c \in \mathbb{R}$. we use $\sigma_0 = 1$ as initial condition, which leads to 
$
1 = ce^{H_0}$ which is equivalent to $c = \frac{1}{e^{H_0}}. 
$
To assure $\sigma_1 = \sigma_{\min}$, we need to choose $H$ such that
$
\sigma_{\min} = \frac{e^{H_1}}{e^{H_0}}$ which is equivalent to $  H_1 - H_0 = \log(\sigma_{\min}).
$
By the mean-value-theorem there is a $t^*\in [0,1]$ such that $H_{t^*}^{\prime} = h_{t^*} = \log(\sigma_{\min})$. 
\end{proof}

\begin{proof}[Proof of \Cref{thm:quotient_bound}]
   By change of variables we have that
$
\int_0^1 \big|\frac{\sigma_t^{\prime}}{\sigma_t} \big|\;\mathrm{d}t = - \int_0^1 \frac{\sigma_t^{\prime}}{\sigma_t} \;\mathrm{d}t =  \int_{\sigma_{\min}}^1 \frac{1}{u} \;\mathrm{d}u = \log(\sigma_{\min}^{-1}).
$\qedhere
\end{proof}

\begin{proof}[Proof of \Cref{thm:examples_sigma_t}]
 For small $t$, we can use the following simple bound, which is independent from the decay of $\operatorname{Cov}(Y^{x,t})$ in $t$:
\begin{lemma}\label{thm:easy_lipschitz}
    Let $t^*$ be such that $\sigma_{t^*} = \frac{1}{\vartheta}$ for $\vartheta \in \mathbb{R}_{\geq 1}$. Grant \Cref{ass:covariance} (III). Then
    \begin{equation*}\int_0^{t^*} \Gamma_t\; \mathrm{d}t \lesssim \vartheta^2 (1 + \log(\vartheta)).\end{equation*}
\end{lemma}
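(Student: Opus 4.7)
The plan is to plug the upper bound of \Cref{thm:Lipschitz_bound} into the integral and exploit that on $[0,t^*]$ the variance function is uniformly bounded below by $1/\vartheta$, so that $\sigma_t^{-2}\le \vartheta^2$. Combined with \Cref{ass:covariance} (III), which provides a uniform constant bound $C$ on the covariance entries on this interval, the only genuinely time–dependent quantity left under the integral is $|\sigma_t'/\sigma_t|$, which is controlled by \Cref{thm:quotient_bound}.

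Concretely, first I would write, using \Cref{thm:Lipschitz_bound} together with \Cref{ass:covariance} (III), the pointwise estimate
\begin{equation*}
\Gamma_t \;\le\; d\Big(\Big|\tfrac{\sigma_t'}{\sigma_t}\Big| \;+\; C\Big(\gamma t^{\gamma-1} + \Big|\tfrac{\sigma_t' t^\gamma}{\sigma_t}\Big|\Big)\frac{t^\gamma}{\sigma_t^2}\Big),\qquad t\in[0,t^*].
\end{equation*}
Since $\sigma_t$ is monotone with $\sigma_0=1$, $\sigma_{t^*}=1/\vartheta$, we have $\sigma_t\in[1/\vartheta,1]$ for $t\in[0,t^*]$, hence $\sigma_t^{-2}\le \vartheta^2$, and $t^\gamma\le 1$ on $[0,1]$. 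Using these inequalities pulls the $\vartheta^2$ factor out of the integral cleanly.

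Next I would estimate each piece of the integral separately. For the diagonal piece, the change of variable in the proof of \Cref{thm:quotient_bound}(2) gives
\begin{equation*}
\int_0^{t^*}\Big|\tfrac{\sigma_t'}{\sigma_t}\Big|\,\mathrm{d}t \;=\; -\log(\sigma_{t^*}) \;=\; \log(\vartheta).
\end{equation*}
For the $\gamma t^{\gamma-1}\cdot t^\gamma/\sigma_t^2$ piece I would use $\sigma_t^{-2}\le\vartheta^2$ and integrate $\gamma t^{2\gamma-1}$ explicitly to obtain a bound of order $\vartheta^2$ (since $t^*\le 1$). For the remaining cross piece $|\sigma_t'/\sigma_t|\cdot t^{2\gamma}/\sigma_t^2$, the same $\vartheta^2$ pullout combined with the integral identity above yields a bound of order $\vartheta^2\log(\vartheta)$.

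Summing the three contributions gives
\begin{equation*}
\int_0^{t^*}\Gamma_t\,\mathrm{d}t \;\lesssim\; \log(\vartheta) + \vartheta^2 + \vartheta^2\log(\vartheta) \;\lesssim\; \vartheta^2(1+\log(\vartheta)),
\end{equation*}
where the suppressed constants depend only on $d$, $\gamma$, and $C$ from \Cref{ass:covariance} (III). There is no real obstacle; the one point to be careful about is to use the uniform bound $\sigma_t\ge 1/\vartheta$ on the whole interval $[0,t^*]$ (which holds by monotonicity) rather than only at the endpoint, since otherwise the factor $\sigma_t^{-2}$ inside the integrand cannot be pulled out.
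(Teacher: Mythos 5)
Your proposal is correct and follows essentially the same route as the paper's proof: bound $\Gamma_t$ by $d\max_{ij}B_{ij}$, use Assumption (III) and the monotonicity bound $\sigma_t\ge 1/\vartheta$ on $[0,t^*]$ to pull out $\vartheta^2$, and evaluate $\int_0^{t^*}|\sigma_t'/\sigma_t|\,\mathrm{d}t=\log(\vartheta)$ by change of variables, yielding the three contributions $\log(\vartheta)$, $\vartheta^2$ and $\vartheta^2\log(\vartheta)$. No gaps.
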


For large $t$, we need to assume that $\operatorname{Cov}(Y^{x,t})_{ij}$ decays fast enough for $i \neq j$ and $t \rightarrow 1$ to bound the integral over all $B_{ij}$ for $i \neq j$. Under \Cref{ass:covariance} (I), we know that for all $i \neq j$ and all $x \in \mathbb{R}^d$
\begin{align*}
	\int_{t^*}^1 \Big|\Big(\gamma t^{\gamma - 1} - \frac{\sigma_t^{\prime} t^{\gamma}}{\sigma_t}\Big)&\frac{t^{\gamma}}{\sigma_t^2} \operatorname{Cov}(Y^{x,t})_{ij}\Big| \; \mathrm{d}t  \leq (t^*)^{-2\gamma}\int_{t^*}^1  \big|(\gamma t^{\gamma -1}\sigma_t-\sigma_t^{\prime}t^{\gamma})\big|\;\mathrm{d}t \\&\leq  (t^*)^{-2\gamma} \gamma \int_0^1 \sigma_t -  (t^*)^{-2\gamma}\int_0^1 \sigma_t^{\prime} \;\mathrm{d}t=  (t^*)^{-2\gamma}( \gamma + 1-  \sigma_{\min}).
\end{align*}
For the bound of \begin{equation*}
	\int_{t^*}^1 \Big|\frac{\sigma_t^{\prime}}{\sigma_t}+  (\gamma t^{\gamma - 1} - \frac{\sigma_t^{\prime} t^{\gamma}}{\sigma_t})\frac{t^{\gamma}}{\sigma_t^2} \operatorname{Var}(Y^{x,t}_i)\Big| \; \mathrm{d}t
\end{equation*}
we need to use \Cref{ass:covariance} (II). Inserting the expression for the variance $\operatorname{Var}(Y^{x,t}_i)$ we obtain
\begin{align*}
&	\int_{t^*}^1 \Big|\frac{\sigma_t^{\prime}}{\sigma_t}+  (\gamma t^{\gamma - 1} - \frac{\sigma_t^{\prime} t^{\gamma}}{\sigma_t})\frac{t^{\gamma}}{\sigma_t^2} \operatorname{Var}(Y^{x,t}_i)\Big| \; \mathrm{d}t  = \int_{t^*}^1 \Big|\frac{\sigma_t^{\prime}}{\sigma_t}+  \Big(\gamma t^{\gamma - 1} - \frac{\sigma_t^{\prime} t^{\gamma}}{\sigma_t}\Big)\frac{t^{\gamma}}{\sigma_t^2} \Big(\frac{\sigma_t}{t^{\gamma}}\Big)^2  \Big(1+ O\Big(\Big(\frac{\sigma_t}{t^{\gamma}}\Big)^{\frac{1}{\kappa}}\Big) \Big)\Big| \; \mathrm{d}t\\
	& \leq \int_{t^*}^1 \Big|\frac{\sigma_t^{\prime}}{\sigma_t}+  \Big(\gamma t^{\gamma - 1} - \frac{\sigma_t^{\prime} t^{\gamma}}{\sigma_t}\Big)\frac{t^{\gamma}}{\sigma_t^2} \Big(\frac{\sigma_t}{t^{\gamma}}\Big)^2 \Big| \; \mathrm{d}t + \int_{t^*}^1 \Big| \Big(\gamma t^{\gamma - 1} - \frac{\sigma_t^{\prime} t^{\gamma}}{\sigma_t}\Big)\frac{t^{\gamma}}{\sigma_t^2} \Big(\frac{\sigma_t}{t^{\gamma}}\Big)^2   O\Big(\Big(\frac{\sigma_t}{t^{\gamma}}\big)^{\frac{1}{\kappa}}\Big) \Big| \; \mathrm{d}t.
\end{align*}
The first term simplifies to
\begin{equation*}
	\int_{t^*}^1 |\frac{\sigma_t^{\prime}}{\sigma_t}+  \Big(\gamma t^{\gamma - 1} - \frac{\sigma_t^{\prime} t^{\gamma}}{\sigma_t}\Big)\frac{t^{\gamma}}{\sigma_t^2} \Big(\frac{\sigma_t}{t^{\gamma}}\Big)^2 | \; \mathrm{d}t  = \int_{t^*}^1 |\gamma t^{-1}| \; \mathrm{d}t = - \gamma \log(t^*).
\end{equation*}
For the second term we obtain
\begin{align*}
&	\int_{t^*}^1 \Big| \Big(\gamma t^{\gamma - 1} - \frac{\sigma_t^{\prime} t^{\gamma}}{\sigma_t}\Big)\frac{t^{\gamma}}{\sigma_t^2} \Big(\frac{\sigma_t}{t^{\gamma}}\Big)^2   O\Big(\Big(\frac{\sigma_t}{t^{\gamma}}\Big)^{\frac{1}{\kappa}}\Big) \Big| \; \mathrm{d}t \leq \int_{t^*}^1 \Big| \Big(\gamma t^{\gamma - 1} - \frac{\sigma_t^{\prime} t^{\gamma}}{\sigma_t}\Big)\Big| \Big|\frac{1}{t^{\gamma}} O\Big(\Big(\frac{\sigma_t}{t^{\gamma}}\Big)^{\frac{1}{\kappa}}\Big) \Big| \; \mathrm{d}t\\
	& \lesssim \int_{t^*}^1 \gamma t^{-1 - \frac{\gamma}{\kappa}} \sigma_t^{\frac{1}{\kappa}}\; \mathrm{d}t - \int_{t^*}^1 \frac{\sigma_t^{\prime}}{\sigma_t} \frac{\sigma_t^{\frac{1}{\kappa}}}{t^{\frac{\gamma}{\kappa}}}\; \mathrm{d}t \leq \gamma  \int_{t^*}^1 t^{-1 - \frac{\gamma}{\kappa}}\; \mathrm{d}t -  (t^*)^{-\frac{\gamma}{\kappa}} \int_{t^*}^1 \frac{\sigma_t^{\prime}}{\sigma_t} \sigma_t^{\frac{1}{\kappa}}\; \mathrm{d}t .
\end{align*}
Bounding the last term with $\kappa( (t^*)^{-\frac{\gamma}{\kappa}}-1) +  (t^*)^{-\frac{\gamma}{\kappa}} \kappa \Big(\sigma_{1}^{\frac{1}{\kappa}} - \sigma_{t^*}^{\frac{1}{\kappa}}\Big)$ and using that $\sigma_{1}^{\frac{1}{\kappa}} - \sigma_{t^*}^{\frac{1}{\kappa}} \leq 1$ yields the following bound on the integral over the Lipschitz constant
\begin{equation*}
	\int_0^1 \Gamma_t \; \mathrm{d}t \lesssim \vartheta^2 (1 + \log(\vartheta)) +(t^*)^{-2\gamma}(\gamma +1 )+ \kappa( (t^*)^{-\frac{\gamma}{\kappa}}-1) +  (t^*)^{-\frac{\gamma}{\kappa}} \kappa.\qedhere
\end{equation*}
\end{proof}

\textit{Preparations for the proof of \Cref{thm:meh}.}\\
For the proof we need the following additional definitions:
\begin{definition}\label{Poincaré_inequality}
    A distribution $\mathbb{P}^*$ on $\mathbb{R}^d$ with density $p^*$ satisfies the Poincaré inequality with Poincaré constant $\rho > 0$ if for all smooth functions $f$ such that the terms below are well-defined
    \begin{equation*}
        \operatorname{Var}(f(X))\coloneqq\int\left(f(x) - \mathbb{E}[f(X)]\right)^2 p^*(x)\mathrm{d} x \leq \frac{1}{\rho}\mathbb{E}[|\nabla f(X)|^2].
    \end{equation*}
\end{definition}
From \cite[Theorem 3.20]{Boucheron2013} we know that $\mathcal{N}(0, I_d)$ satisfies the Poincaré inequality with Poincaré constant $1.$ Thus,  via defining $g(x) \coloneqq f(\sigma x + \mu)$ and 
$
    \operatorname{Var}(g(X)) \leq \sigma^2 \mathbb{E}[|\nabla g(X)|^2],
$ we see that the Poincaré constant of $\mathcal{N}(\mu, \sigma^2 I_d)$ is $\sigma^{-2}$.
		\begin{definition}\label{def:log_sobolev}
	A distribution $\mathbb{P}^*$ on $\mathbb{R}^d$ with density $p^*$ satisfies the logarithmic Sobolev inequality with $\log$-Sobolev constant $\lambda > 0$ if for all smooth functions $f$
	\begin{equation}\label{eq:moment_bound_1}
		\operatorname{Ent}(f^2(X))\coloneqq \mathbb{E}[f^2(X)\log(f^2(X))] - \mathbb{E}[f^2(X)]\log(\mathbb{E}[f^2(X)]) \leq \frac{2}{\lambda}\mathbb{E}[|\nabla f(X)|^2].
	\end{equation}
		The standard Gaussian satisfies the  $\log$-Sobolev inequality with $\log$-Sobolev constant $\lambda =1$ \cite[Theorem 5.4]{Boucheron2013}. Analogously to Poincaré inequality, its easy to see that $\mathcal{N}(\mu, \sigma^2I_d)$ satisfies the $\log$-Sobolev inequality with $\log$-Sobolev constant $\lambda =\sigma^{-2}$.
\end{definition}
Now we are ready to prove \Cref{thm:meh}.
\begin{proof}[Proof of \Cref{thm:meh}]$ $\newline
\underline{Property (III), $\mathbb{P}^*$ of form \eqref{eq:assumption_p*new}:}	\newline
We start by defining $\Phi (y) = \frac{|x-t^{\gamma}y|^2}{2 \sigma_t^2} + V(y)$, such that $q(y) \propto \exp(-\Phi(y))$. For every $t$, we have that $\nabla^2\Phi(y) = \frac{t^{\gamma}}{\sigma_t^2} I + \nabla^2V(y)\gtrsim  \nabla^2V(y) \gtrsim m I $. Thus $\mathbb{P}^*$ is strongly $\log$-concave. By \cite{BRASCAMP1976}, we can bound
\begin{equation*}
	\operatorname{Cov}(Y^{x,t})_{ij} \leq \sqrt{\operatorname{Var}(Y^{x,t}_i)}\sqrt{\operatorname{Var}(Y^{x,t}_j)} \leq \frac{1}{m}.
\end{equation*}
\underline{Property (I) and (II), $\mathbb{P}^*$ of form \eqref{eq:assumption_p*new}:}	\newline
We choose $t^*$ such that $\frac{\sigma_t^2}{t^{2\gamma}}L\leq \min(\frac{1}{2}, \frac{\sigma_t}{t^{\gamma}})$. Our goal is to derive a representation of $\Sigma$, the covariance matrix of $q$. 
Using the divergence theorem, as common in multivariate Stein's identity type results relating to \cite{Stein1972}, see e.g.\ \citet[Section 2.2.2]{oates2017control}, we obtain for the smooth function $f(y) = y-\mathbb{E}[Y]$ and $q \propto \exp(-\Psi)$
\begin{equation*}
	\mathbb{E}[\nabla f(Y)] = \mathbb{E}[f(Y)(\nabla\Psi(Y))^{\top}]\quad  \Longleftrightarrow \quad I = \mathbb{E}[(Y-\mathbb{E}[Y])(\nabla\Psi(Y))^{\top}].
\end{equation*}
For $t > t^*>0$, we define $\mu \coloneqq \frac{x}{t^{\gamma}}$ and $s = \frac{\sigma_t}{t^{\gamma}}$ such that $\Psi(y) = \frac{|y-\mu|^2}{2s^2} + V(y)$. Then since $\mathbb{E}[Y - \mathbb{E}[Y]] = 0$
\begin{equation*}
	\mathbb{E}[(Y-\mathbb{E}[Y])(\nabla\Psi(Y))^{\top}] = \mathbb{E}\Big[(Y-\mathbb{E}[Y])\Big(\frac{Y-\mu}{s^2} + \nabla V(Y)\Big)^{\top}\Big] = \frac{1}{s^2} \Sigma +  \mathbb{E}\big[(Y-\mathbb{E}[Y]) (\nabla V(Y))^{\top}\big].
\end{equation*}
Thus
\begin{equation*}
	\Sigma = s^2 I - s^2  \mathbb{E}\big[(Y-\mathbb{E}[Y]) (\nabla V(Y))^{\top}\big].
\end{equation*}
A multivariate Taylor expansion yields
\begin{equation*}
	\nabla V(Y) = \nabla V(\mathbb{E}[Y]) + \nabla^2 V(\mathbb{E}[Y])(Y- \mathbb{E}[Y]) + R(Y),
\end{equation*}
where $R$ depends on the bound of the third partial derivatives and $|R(Y)|\lesssim |Y- \mathbb{E}{Y}|^2$. Again, since $\mathbb{E}[Y - \mathbb{E}[Y]] = 0$, we obtain
\begin{equation*}
	\Sigma =  s^2 I -  s^2 \Sigma \nabla^2( V(\mathbb{E}[Y]))^{\top} - R^{\prime}, 
\end{equation*}
where $|R^{\prime}|\lesssim \mathbb{E}[|Y - \mathbb{E}[Y]|^3]$. By \citet[Theorem 21.2, Remark 12.4]{Villani2008}, who refers to \cite{Bakry1985}, which is in French, a strongly $\log$-concave function satisfies the logarithmic Sobolev equation. Using Herbst's argument, e.g.\ along the lines of \cite{Ledoux1999}, a standard consequence of the logarithmic Sobolev equation is that the distribution is sub-Gaussian with a parameter antiproportional to the parameter of the bound of the Hessian. For $t > t^*$, we have that
\begin{equation*}
	\nabla^2 \Psi(y) = \frac{1}{s^2}I + \nabla^2V(y) \gtrsim \Big(\frac{1}{s^2} - L\Big)I \gtrsim \frac{1}{2s^2}I.
\end{equation*}
Note that we used a different property of $V$, namely the bound on the spectral norm of $\nabla^2V$. From the sub-Gaussianity, we can control the moment $ \mathbb{E}[|Y - \mathbb{E}[Y]|^3] \lesssim s^3$, see \citet[Lemma 1]{jin2019short}. Since $\nabla^2 V(\mathbb{E}[Y])$ is invertible, we obtain
\begin{equation*}
	\Sigma = s^2(I - R^{\prime})(I + s^2 (\nabla V(\mathbb{E}[Y]))^{\top})^{-1}.
\end{equation*} Next, we use a Neumann series, which is admissible since $\|s^2 (\nabla V(\mathbb{E}[Y]))^{\top}\| \leq s^2L \leq s \leq 1$ to rewrite
\begin{equation*}
	(I + s^2 (\nabla V(\mathbb{E}[Y]))^{\top})^{-1} = I - s^2(\nabla V(\mathbb{E}[Y]))^{\top} + \sum_{k = 2}^{\infty}(-s^2 (\nabla V(\mathbb{E}[Y]))^{\top})^k.
\end{equation*}
For the sum we have using a geometric series
\begin{equation*}
	\| \sum_{k = 2}^{\infty}(-s^2 (\nabla V(\mathbb{E}[Y]))^{\top})^k\| \leq  \sum_{k = 2}^{\infty} \|s^2 (\nabla V(\mathbb{E}[Y]))^{\top})\|^k \leq \sum_{k = 2}^{\infty} s^k = \frac{s^2}{1-s} \leq s^2.
\end{equation*}
Thus
\begin{equation*}
	\Sigma_{ij} = s^2 (\mathds{1}_{i = j} + O(s^2)).
\end{equation*}

\underline{Property (III), $\mathbb{P}^*$ of form \eqref{eq:assumption_p*}:}\newline
For the uniform bound, we use that the distribution with density
\begin{equation}
    q \propto \exp\Big(-\frac{|x-ty|^2}{2 \sigma_t^2}-  \frac{|y|^2}{2}\Big) =\exp\Big( -\frac{1}{2} \Big( 1+ \frac{t^2}{\sigma_t^2}\Big) \Big(|y|^2- \Big\langle \frac{t}{\sigma_t^2} \frac{x}{1+ \frac{t^2}{\sigma_t^2}}, y \Big\rangle \Big)\Big) \label{eq:density_gaussian}
\end{equation}
is a Gaussian distribution with variance $ \big( 1+ \frac{t^2}{\sigma_t^2}\big)^{-1} I_d$. Hence the density defined above satisfies the Poincaré inequality with constant $\big( 1+ \frac{t^2}{\sigma_t^2}\big)$. 
 Using the Holley-Strooke perturbation principle \cite{Holley1987}, in the form of \citet[Lemma 1.2]{Ledoux2001}, we can bound the Poincaré constant $\rho$ of the perturbed Gaussian via 
$
\rho \geq e^{- 4L} \big( 1+ \frac{t^2}{\sigma_t^2}\big).
$
Thus
\begin{equation*}
\operatorname{Var}(Y^{x,t}_{i}) \leq \Big(  e^{-4L} \Big( 1+ \frac{t^2}{\sigma_t^2}\Big) \Big)^{-1} \leq \frac{e^{4L}}{1+ \frac{t^2}{\sigma_t^2}} \leq e^{4L},
\end{equation*}
with $L$ from \eqref{eq:assumption_p*}. Using \begin{equation*}
\operatorname{Cov}(Y^{x,t}_{i}, Y^{x,t}_{j})\leq \sqrt{\operatorname{Var}(Y^{x,t}_{i})\operatorname{Var}(Y^{x,t}_{j})},
\end{equation*} we conclude that for all $t \in [0,1]$
$
\operatorname{Cov}(Y^{x,t}_{i}, (Y^{x,t}_{j}) \leq e^{4L}.
$
Hence we can set $C =  e^{4L}$.\newline
\underline{Property (II), $\mathbb{P}^*$ of form \eqref{eq:assumption_p*}:}\newline
    For the variance, we use that
    \begin{equation*}
    \operatorname{Var}(Y^{x,t}_{i}) = \mathbb{E}[(Y^{x,t}_{i})^2] - \mathbb{E}[Y^{x,t}_{i} ]^2.
\end{equation*}
    Let $\varphi$ denote the density of $\mathcal{N}(0,I_d)$. Further, let $t>0$. Then
    \begin{equation*}
        \mathbb{E}[Y^{x,t}_{i}] = \frac{\int y_i \varphi\big(\frac{x-ty}{\sigma_{t}} \big) p^*(y)\; \mathrm{d}y }{\int \varphi\big(\frac{x-ty}{\sigma_{t}} \big) p^*(y)\; \mathrm{d}y } =\frac{\int \big(\frac{x_i-z_i\sigma_t}{t} \big)\varphi(z) p^*(\frac{x-z\sigma_t}{t})\; \mathrm{d}y }{\int \varphi(z) p^*(\frac{x-z\sigma_t}{t})\; \mathrm{d}y  }
    = \frac{x_i}{t} - \frac{\sigma_t}{t} \frac{\int z_i\varphi(z) p^*(\frac{x-z\sigma_t}{t})\; \mathrm{d}y }{\int \varphi(z) p^*(\frac{x-z\sigma_t}{t})\; \mathrm{d}y  }.
    \end{equation*}
    Similarly
    \begin{align*}
        \mathbb{E}[\big(Y^{x,t}_{i}\big)^2] & =  \frac{\int \big(\frac{x_i-z_i\sigma_t}{t} \big)^2\varphi(z) p^*(\frac{x-z\sigma_t}{t})\; \mathrm{d}y }{\int \varphi(z) p^*(\frac{x-z\sigma_t}{t})\; \mathrm{d}y  }
        \big(\frac{x_i}{t}\big)^2 -  \frac{2x_i\sigma_t}{t^2} \frac{\int z_i\varphi(z) p^*(\frac{x-z\sigma_t}{t})\; \mathrm{d}y }{\int \varphi(z) p^*(\frac{x-z\sigma_t}{t})\; \mathrm{d}y  } + \big(\frac{\sigma_t}{t}\big)^2 \frac{\int z_i^2\varphi(z) p^*(\frac{x-z\sigma_t}{t})\; \mathrm{d}y }{\int \varphi(z) p^*(\frac{x-z\sigma_t}{t})\; \mathrm{d}y  }.
    \end{align*}
Now define 
\begin{equation*}
A(z_i) \coloneqq \frac{\int z_i\varphi(z) p^*(\frac{x-z\sigma_t}{t})\; \mathrm{d}z}{\int\varphi(z) p^*(\frac{x-z\sigma_t}{t})\; \mathrm{d}z}, \quad A(z_i^2) \coloneqq \frac{\int z_i^2\varphi(z) p^*(\frac{x-z\sigma_t}{t})\; \mathrm{d}z}{\int\varphi(z) p^*(\frac{x-z\sigma_t}{t})\; \mathrm{d}z}.
\end{equation*}
We obtain for the variances
\begin{align}
    \mathbb{E}[(Y^{x,t}_{i})^2]-\mathbb{E}[Y^{x,t}_{i}]^2 & = \Big(\frac{x_i}{t}\Big)^2 -  \frac{2x_i\sigma_t}{t^2} A(z_i) + \Big(\frac{\sigma_t}{t}\Big)^2 A(z_i^2) - \Big(\frac{x_i}{t} - \frac{\sigma_t}{t} A(z_i)\Big)^2  \Big(\frac{\sigma_t}{t}\Big)^2 \Big(A(z_i^2) - A(z_i)^2\Big). \label{eq:variance_decomp_A}
\end{align}
Hence we need to bound the component variances of a random variable $Z$ with density
\begin{align}
  p_Z(z) &= \frac{\varphi(z) p^*(\frac{x-z\sigma_t}{t})}{\int\varphi(z) p^*(\frac{x-z\sigma_t}{t})\; \mathrm{d}z}=\frac{ \exp\big(-\frac{|z|^2}{2} - \frac{|x-z\sigma_t|^2}{2t^2} - a(\frac{x-z\sigma_t}{t})\big)}{\int\exp\big(-\frac{|z|^2}{2} - \frac{|x-z\sigma_t|^2}{2t^2} - a(\frac{x-z\sigma_t}{t})\big)\; \mathrm{d}z}\notag \\
  & = \frac{ \exp\big(-\frac{1}{2} \big(1+ \frac{\sigma_t^2}{t^2} \big) \big(|z|^2 - 2 \langle z, x \frac{\sigma_t}{t^2 (1+ (\frac{\sigma_t}{t})^2)}\rangle \big) - a(\frac{x-z\sigma_t}{t})\big)}{\int\exp\big(-\frac{1}{2} \big(1+ \frac{\sigma_t^2}{t^2} \big) \big(|z|^2 - 2 \langle z, x \frac{\sigma_t}{t^2 (1+ (\frac{\sigma_t}{t})^2)}\rangle \big) - a(\frac{x-z\sigma_t}{t})\big)\; \mathrm{d}z} .\label{eq:gauss_form_pZ}
\end{align}
First we bound the influence of the perturbation function $a$ on the expected value. To do so, we note that 
\begin{equation*}
    \mathbb{E}_{p_Z}[\partial_{z_i}\log(p(Z))]= \int \partial_{z_i}p(z)\; \mathrm{d}z = \int ... \int_{z_i \in \mathbb{R} }\partial_{z_i}p(z) \; \mathrm{d}z_i \; \mathrm{d}(z_1,...,z_{i-1}, z_{z+1}, ...,z_d) = 0.
\end{equation*}
This implies for \eqref{eq:gauss_form_pZ} that
\begin{align*}
   & 0 = - \mathbb{E}\Big[\Big(1+ \frac{\sigma_t^2}{t^2} \Big)  \Big(Z_i -   x_i \frac{\sigma_t}{t^2 (1+ (\frac{\sigma_t}{t})^2)} \Big) - \frac{\sigma_t}{t} \partial_{z_i}a\Big(\frac{x-Z\sigma_t}{t}\Big)  \Big]\\
   \Longleftrightarrow \quad & \Big(1+ \frac{\sigma_t^2}{t^2} \Big)  \Big(\mathbb{E}[Z_i] -   x_i \frac{\sigma_t}{t^2 (1+ (\frac{\sigma_t}{t})^2)} \Big)  =  \mathbb{E}\Big[ \frac{\sigma_t}{t} \partial_{z_i}a\Big(\frac{x-Z\sigma_t}{t}\Big)  \Big]\\
    \Longleftrightarrow \quad &\Big|\mathbb{E}[Z_i] -   x_i \frac{\sigma_t}{t^2 (1+ (\frac{\sigma_t}{t})^2)} \Big| = \Bigg|\frac{\frac{\sigma_t}{t}\mathbb{E}\Big[  \partial_{z_i}a(\frac{x-Z\sigma_t}{t})\Big]}{\big(1+ \frac{\sigma_t^2}{t^2} \big)}\Bigg|.
\end{align*}
Hence \begin{equation}\label{eq:variance_bound_0}
\Big|\mathbb{E}[Z_i] -   x_i \frac{\sigma_t}{t^2 (1+ (\frac{\sigma_t}{t})^2)} \Big|  \leq \frac{\sigma_t}{t} \frac{L}{\big(1+ \frac{\sigma_t^2}{t^2} \big)},
\end{equation} where $L$ is from \eqref{eq:assumption_p*}. From \eqref{eq:gauss_form_pZ} we can see that this bounds the influence of the perturbation function $a$ on the expected value.
Now we bound the variance. To do so, we first note that
\begin{equation}
    \mathbb{E}_{p_Z}[(\partial_{z_i}\log(p(Z)))^2]= \int \frac{(\partial_{z_i }p_Z(z))^2}{ p_Z(z)} \; \mathrm{d}z. \label{eq:variance_bound_1}
\end{equation}
As
\begin{equation*}
    \partial_{z_i} \log(p_Z(z)) = \frac{\partial_{z_i}p_Z(z)}{p_Z(z)}, \quad \partial^2_{z_i} \log(p_Z(z)) = \frac{\partial^2_{z_i}p_Z(z)}{p_Z(z)} - \frac{(\partial_{z_i}p_Z(z))^2}{(p_Z(z))^2},
\end{equation*} we have for \eqref{eq:variance_bound_1}
\begin{equation*}
    \mathbb{E}_{p_Z}[(\partial_{z_i}\log(p(Z)))^2]= \mathbb{E}_{p_Z}\Big[\frac{\partial^2_{z_i}p_Z(z)}{p_Z(z)}\Big] - \mathbb{E}_{p_Z}\big[ \partial^2_{z_i} \log(p_Z(Z)) \big]. 
\end{equation*} Since $\partial_{z_i}p_Z(z) \rightarrow 0$ for $|z| \rightarrow \infty$, we conclude
\begin{equation}\label{eq:variance_bound_2}
    \mathbb{E}_{p_Z}[(\partial_{z_i}\log(p(Z)))^2]=  - \mathbb{E}_{p_Z}[ \partial^2_{z_i} \log(p_Z(Z)) ]. 
\end{equation} 
For the right hand side, we obtain
\begin{equation*}
    -\mathbb{E}_{p_Z}[ \partial^2_{z_i} \log(p_Z(Z)) ]  = \Big(1+ \Big(\frac{\sigma_t}{t}\Big)^2 \Big) + \Big(\frac{\sigma_t}{t}\Big)^2 \mathbb{E}_{p_Z}\Big[ \partial^2_{z_i} a (\frac{x-\sigma_t z}{t})\Big].
\end{equation*}
The left side of \eqref{eq:variance_bound_2} can be rewritten as
\begin{align*}
     &\mathbb{E}_{p_Z}[(\partial_{z_i}\log(p(Z)))^2]  =  \mathbb{E}_{p_Z}\Big[\Big(  \Big(1+ \frac{\sigma_t^2}{t^2}\Big)\Big(Z_i - x_i \frac{\sigma_t}{t^2 (1+ (\frac{\sigma_t}{t})^2)}\Big) + \frac{\sigma_t}{t} \partial_{z_i} a \Big(\frac{x- \sigma_t Z}{t}\Big)\Big)^2\Big]\\
      &=  \mathbb{E}_{p_Z}\Big[\Big(  \Big(1+ \frac{\sigma_t^2}{t^2}\Big)\Big(Z_i - x_i \frac{\sigma_t}{t^2 (1+ (\frac{\sigma_t}{t})^2)}\Big)\Big)^2\Big]   +2 \mathbb{E}_{p_Z}\Big[  \Big(1+ \frac{\sigma_t^2}{t^2}\Big)\Big(Z_i - x_i \frac{\sigma_t}{t^2 (1+ (\frac{\sigma_t}{t})^2)}\Big)  \frac{\sigma_t}{t} \partial_{z_i} a \Big(\frac{x- \sigma_t Z}{t}\Big)\Big] \\ & \quad + \Big(\frac{\sigma_t}{t} \Big)^2\mathbb{E}_{p_Z}\Big[\Big( \partial_{z_i} a (\frac{x- \sigma_t Z}{t})\Big)^2\Big].
\end{align*}
Then 
\begin{align*}
 &   \mathbb{E}_{p_Z}\Big[\Big(  \Big(1+ \frac{\sigma_t^2}{t^2}\Big)\Big(Z_i - x_i \frac{\sigma_t}{t^2 (1+ (\frac{\sigma_t}{t})^2)}\Big)\Big)^2\Big]  = \mathbb{E}_{p_Z}\Big[\Big(  \Big(1+ \frac{\sigma_t^2}{t^2}\Big)\Big(Z_i - \mathbb{E}[Z_i] + \mathbb{E}[Z_i]- x_i \frac{\sigma_t}{t^2 (1+ (\frac{\sigma_t}{t})^2)}\Big)\Big)^2\Big]\\
     & =\Big(1+ \frac{\sigma_t^2}{t^2}\Big)^2 \mathbb{E}_{p_Z}\big[  (Z_i - \mathbb{E}[Z_i])^2\big]+ 2\Big(1+ \frac{\sigma_t^2}{t^2}\Big)^2  \Big( \mathbb{E}[Z_i]- x_i \frac{\sigma_t}{t^2 (1+ (\frac{\sigma_t}{t})^2)}\Big)\mathbb{E}_{p_Z}\big[  (Z_i - \mathbb{E}[Z_i])\big]\\
      & + \Big(1+ \frac{\sigma_t^2}{t^2}\Big)^2\Big(  \mathbb{E}[Z_i]- x_i \frac{\sigma_t}{t^2 (1+ (\frac{\sigma_t}{t})^2)}\Big)^2 = \Big(1+ \frac{\sigma_t^2}{t^2}\Big)^2 \Big(\mathbb{E}_{p_Z}\big[  (Z_i - \mathbb{E}[Z_i])^2\big]+ \Big(  \mathbb{E}[Z_i]- x_i \frac{\sigma_t}{t^2 (1+ (\frac{\sigma_t}{t})^2)}\Big)^2\Big).
\end{align*}
Additionally
\begin{align*}
     &\mathbb{E}_{p_Z}\Big[  \Big(1+ \frac{\sigma_t^2}{t^2}\Big)\Big(Z_i - x_i \frac{\sigma_t}{t^2 (1+ (\frac{\sigma_t}{t})^2)}\Big)  \frac{\sigma_t}{t} \partial_{z_i} a \Big(\frac{x- \sigma_t Z}{t}\Big)\Big]  \\&=     \mathbb{E}_{p_Z}\Big[  \Big(1+ \frac{\sigma_t^2}{t^2}\Big)\Big(Z_i - \mathbb{E}[Z_i] + \mathbb{E}[Z_i] - x_i \frac{\sigma_t}{t^2 (1+ (\frac{\sigma_t}{t})^2)}\Big)  \frac{\sigma_t}{t} \partial_{z_i} a \Big(\frac{x- \sigma_t Z}{t}\Big)\Big]\\
    & = \Big(1+ \frac{\sigma_t^2}{t^2}\Big)\frac{\sigma_t}{t} \mathbb{E}_{p_Z}\Big[  (Z_i - \mathbb{E}[Z_i])  \partial_{z_i} a \Big(\frac{x- \sigma_t Z}{t}\Big)\Big]   +   \Big(1+ \frac{\sigma_t^2}{t^2}\Big)\frac{\sigma_t}{t}\Big( \mathbb{E}[Z_i] - x_i \frac{\sigma_t}{t^2 (1+ (\frac{\sigma_t}{t})^2)}\Big)  \mathbb{E}_{p_Z}\Big[ \partial_{z_i} a \Big(\frac{x- \sigma_t Z}{t}\Big)\Big].
\end{align*}
Hence
\begin{align*}
 & \Big(1+ \frac{\sigma_t^2}{t^2}\Big)^2   \mathbb{E}_{p_Z}\Big[\Big(  Z_i - \mathbb{E}[Z_i]\Big)^2\Big]  = \Big(1+ \Big(\frac{\sigma_t}{t}\Big)^2 \Big) + \Big(\frac{\sigma_t}{t}\Big)^2 \mathbb{E}_{p_Z}\Big[ \partial^2_{z_i} a \Big(\frac{x-\sigma_t z}{t}\Big)\Big] - \Big(\frac{\sigma_t}{t} \Big)^2\mathbb{E}_{p_Z}\Big[\Big( \partial_{z_i} a (\frac{x- \sigma_t Z}{t})\Big)^2\Big] \\ &\quad  - \Big(1+ \frac{\sigma_t^2}{t^2}\Big)^2\Big(  \mathbb{E}[Z_i]- x_i \frac{\sigma_t}{t^2 (1+ (\frac{\sigma_t}{t})^2)}\Big)^2 -2\Big(1+ \frac{\sigma_t^2}{t^2}\Big)\frac{\sigma_t}{t} \mathbb{E}_{p_Z}\Big[  (Z_i - \mathbb{E}[Z_i])  \partial_{z_i} a \Big(\frac{x- \sigma_t Z}{t}\Big)\Big] \\
 & \quad-  2 \Big(1+ \frac{\sigma_t^2}{t^2}\Big)\frac{\sigma_t}{t}\Big( \mathbb{E}[Z_i] - x_i \frac{\sigma_t}{t^2 (1+ (\frac{\sigma_t}{t})^2)}\Big)  \mathbb{E}_{p_Z}\Big[ \partial_{z_i} a \Big(\frac{x- \sigma_t Z}{t}\Big)\Big].
\end{align*}
Combining this with \eqref{eq:variance_bound_0} and the fact that $a$ has bounded derivatives, we obtain the following upper and lower bounds: 
\begin{equation*}
    \Big(1+ \frac{\sigma_t^2}{t^2}\Big)^2  \mathbb{E}_{p_Z}\big[\big(  Z_i - \mathbb{E}[Z_i]\big)^2\big]  \leq \Big(1+ \Big(\frac{\sigma_t}{t}\Big)^2 \Big) + \Big(\frac{\sigma_t}{t}\Big)^2 L + 2\Big(1+ \frac{\sigma_t^2}{t^2}\Big)\frac{\sigma_t}{t} L \sqrt{\mathbb{E}_{p_Z}\big[  (Z_i - \mathbb{E}[Z_i])^2 \big]}  + 2  \Big(1+ \frac{\sigma_t^2}{t^2}\Big)\frac{\sigma_t^2}{t^2}  \frac{L^2}{\big(1+ \frac{\sigma_t^2}{t^2} \big)},  
\end{equation*}
and
\begin{equation}\label{eq:variance_bound_3}
\begin{aligned}
     \Big(1+ \frac{\sigma_t^2}{t^2}\Big)^2   \mathbb{E}_{p_Z}\Big[\Big(  Z_i - \mathbb{E}[Z_i]\Big)^2\Big] & \geq \Big(1+ \Big(\frac{\sigma_t}{t}\Big)^2 \Big) - (\frac{\sigma_t}{t})^2L - \Big(\frac{\sigma_t}{t} \Big)^2L^2  - \Big(1+ \frac{\sigma_t^2}{t^2}\Big)^2\Big( \frac{\sigma_t}{t} \frac{L}{\Big(1+ \frac{\sigma_t^2}{t^2}\Big)}\Big)^2\\
& \quad -2\Big(1+ \frac{\sigma_t^2}{t^2}\Big)\frac{\sigma_t}{t} L \sqrt{\mathbb{E}_{p_Z}\big[  (Z_i - \mathbb{E}[Z_i])^2 \big]} -  2 \Big(1+ \frac{\sigma_t^2}{t^2}\Big)\frac{\sigma_t^2}{t^2}  \frac{L^2}{\big(1+ \frac{\sigma_t^2}{t^2}\big)}.
\end{aligned}
\end{equation}
With the same reasoning via the Poincaré constant of a Gaussian and the Holley-Stroock perturbation principle, we conclude that
$
\mathbb{E}_{p_Z}\big[  (Z_i - \mathbb{E}[Z_i])^2 \big]\leq e^{4L}.
$
As $\big(1+ \frac{\sigma_t^2}{t^2}\big)^2>0$ and
$
\frac{1}{1+ \frac{\sigma_t^2}{t^2}} = 1- \frac{\frac{\sigma_t^2}{t^2}}{1+ \frac{\sigma_t^2}{t^2}},
$
dividing the first term in the upper and lower bound loosens the bound further. Thus we obtain
$
\mathbb{E}_{p_Z}\big[\big(  Z_i - \mathbb{E}[Z_i]\big)^2\big] =  1 + O(\frac{\sigma_t}{t}).
$
Multiplying with $\frac{\sigma_t^2}{t}$ yields the result for the variances via \eqref{eq:variance_decomp_A} $
   \operatorname{Var}(Y^{x,t}_i) =   \big(\frac{\sigma_t}{t}\big)^2  \big(1+ O\big(\frac{\sigma_t}{t}\big) ).
$\newline
\underline{Property (I), $\mathbb{P}^*$ of form \eqref{eq:assumption_p*}:}\newline
For the covariances, we need to find a stricter upper bound. We are going to use \citet[Theorem 2.3]{menz2014brascamp} applied to the component functions $f_i \colon \mathbb{R}^d \rightarrow \mathbb{R}, \; f_i(y)=y_i$. First we verify that the assumptions are fulfilled in our setting. By construction, the distribution of $Y_i^{x,t}$ is a bounded perturbation of a Gaussian. Hence Assumption 2.2 in \citet{menz2014brascamp} is satisfied.

For the Poincaré constant of the $i$-th conditional measure, e.g. the Poincaré constant of the distribution with the density
\begin{equation*}
p(y_i|y_1,...,y_{i-1}, y_{i+1}, ..., y_d) \propto \exp\Big(- \frac{(x_i-ty_i)^2}{2 \sigma_t^2}- \frac{y_i^2}{2} - a(y_i|y_1,...,y_{i-1}, y_{i+1}, ..., y_d)\Big),
\end{equation*}
where $a(\cdot |y_1,...,y_{i-1}, y_{i+1}, ..., y_d) \coloneqq a(y_1,...,y_{i-1}, \cdot,y_{i+1}, ..., y_d)$ and $i \in \{1,...,d \}$, we can use the same arguments as in the proof of property (III) to obtain
$
\rho_i^t \geq e^{-4L}(1+ \frac{t^2}{\sigma_t^2}).
$
Note that $a(\cdot |y_1,...,y_{i-1}, y_{i+1}, ..., y_d)$ is still bounded.
Furthermore, we know that the off-diagonal entries of the Hessian of the $\log$-density are bounded by $L$. To profit from easier notation later, we define the matrix
\begin{equation*}
A_t = (A_{t, ij})_{i,j = 1,...,d}, \quad \text{where} \quad A_{t, ij} \coloneqq \begin{cases}
    \rho^t_i - e^{-4L}\frac{t^2}{\sigma_t^2}, & i = j,\\
    -L, & i\neq j.
\end{cases}
\end{equation*} 
We need to find a bound on $t^*$ such that the matrix $A_{t} + e^{-4L}\frac{t^{2 }}{\sigma_t^2} I_d$ is positive definite. As $A_{t}$ is symmetric by construction, we know that the eigenvalues of $A_{t} + e^{-4L}\frac{t^{2 }}{\sigma_t^2} I_d$ are real numbers. By Gerschgorins theorem \cite[Satz 2]{Geschgorin1931}, we know that the eigenvalues of $A_{t} +e^{-4L} \frac{t^{2 }}{\sigma_t^2} I_d$ are in the following union of intervals:
\begin{equation}\label{eq:gerschgorin_circle}
D = \bigcup_{i = 1}^d \Big[\rho_i^t  - L(d-1), \rho_i^t  +L(d-1)\Big].
\end{equation}
Inserting the lower bound of $\rho_i^t$, we choose $t^*$ such that for all $t<t^*$
\begin{equation}\label{eq:bound_t*_1}
e^{-4L}\Big(1+ \frac{t^2}{\sigma_t^2}\Big) > L(d-1) \quad \Longleftarrow \quad \frac{\sigma_t}{t}< \frac{1}{\sqrt{e^{4L}L(d-1)-1}}.
\end{equation}
 
 For this choice of $t^*$, we conclude that the matrix $A_{t,ij} + e^{-4L}\frac{t^{2 }}{\sigma_t^2} I_d$ is positive definite. Thus all of the assumptions in \cite[Theorem 2.3]{menz2014brascamp} are satisfied. 
 \newline
We conclude that 
$
 |\operatorname{Cov}(Y^{x,t})_{ij}| \leq ((e^{-4L}\frac{t^{2}}{\sigma_t^2}I_d + A_t)^{-1})_{ij}.
$
Now \begin{equation}\label{eq:property_3}
   \Big(e^{-4L}\frac{t^{2}}{\sigma_t^2}I_d + A_t\Big)^{-1} = e^{4L} \frac{\sigma_t^2}{t^{2}} \Big(I_d + e^{4L}\frac{\sigma_t^2}{t^{2}} A_t \Big)^{-1}.
\end{equation}
As $A_t$ is a symmetric matrix, the spectral norm is the absolute value of the largest eigenvalue. Using Gerschgorins theorem \cite[Satz II]{Geschgorin1931} again,  we know that the eigenvalues of $A_{t}$ are in the following union of intervals:
\begin{equation}
D = \bigcup_{i = 1}^d \Big[\rho_i^t  - e^{-4L}\frac{t^{2}}{\sigma_t^{2}} - L(d-1), \rho_i^t  - e^{-4L}\frac{t^{2}}{\sigma_t^{2}} +L(d-1)\Big].
\end{equation}
As $
\rho_i^t  - e^{-4L}\frac{t^{2}}{\sigma_t^{2}} = e^{-4L}
$
we conclude that the largest eigenvalue is bounded by
$
	 \lambda_{\max}(A_t) \leq e^{-4L}+ L(d+1).
$
We therefore deduce that
\begin{equation*}
    \Big\|e^{4L} \frac{\sigma_t^2}{t^{2}} A_t \Big\| =    e^{4L}\frac{\sigma_t^2}{t^{2}}  \Big\|  A_t \Big\| \leq  \frac{\sigma_t^2}{t^{2}} (1 +e^{4L} L(d-1))  .
\end{equation*}
If we choose $t^*$ such that for $t>t^*$
\begin{equation*}
 \frac{\sigma_t^2}{t^{2}} (1 +e^{4L} L(d-1)) \leq  \frac{\sigma_t}{t} < 1 \quad  \Longleftrightarrow \quad \frac{\sigma_t}{t} \leq \frac{1}{1+e^{4L}L(d-1)} ,
\end{equation*}
we can use a Neumann series. This gives
\begin{equation*}
    \Big(I_d + e^{4L}\frac{\sigma_t^2}{t^{2}} A_t \Big)^{-1} = \sum_{k = 0}^{\infty} (-1)^k e^{4Lk}\frac{\sigma_t^{2k}}{t^{2 k}} A_t^k = I_d - e^{4L}\frac{\sigma_t^2}{t^{2}} A_t +  \sum_{k = 2}^{\infty} (-1)^k e^{4Lk}\frac{\sigma_t^{2k}}{t^{2 k}} A_t^k.
\end{equation*}
Now for the $ij$-th element, we obtain
\begin{equation*}
    \Big(I_d + e^{4L}\frac{\sigma_t^2}{t^{2}} A_t \Big)^{-1}_{ij} \leq \Big(\mathds{1}_{i = j} +  e^{4L}\frac{\sigma_t^2}{t^{2}} A_{t,ij} + \Big\|  \sum_{k = 2}^{\infty} (-1)^k e^{4Lk}\frac{\sigma_t^{2k}}{t^{2 k}} A_t^k\Big\| \Big).
\end{equation*}
Then we can bound 
   \begin{equation*}
       \Big\|  \sum_{k = 2}^{\infty} (-1)^k e^{4Lk}\frac{\sigma_t^{2k}}{t^{2 k}} A_t^k\Big\|  \leq   \sum_{k = 2}^{\infty}  \Big\|e^{4Lk}\frac{\sigma_t^{2k}}{t^{2 k}} A_t^k\Big\|  \leq  \sum_{k = 2}^{\infty}  \Big\|e^{4L}\frac{\sigma_t^{2}}{t^{2 }} A_t\Big\|^k \leq \sum_{k = 2}^{\infty} \big( \frac{\sigma_t}{t}\big)^{k}
   \end{equation*} 
 Using the convergence of the geometric series, we get that
$
\sum_{k = 2}^{\infty} \big( \frac{\sigma_t}{t}\big)^{k} = \frac{\frac{\sigma_t^2}{t^2}}{1- \frac{\sigma_t}{t}}\leq \frac{\sigma_t^2}{t^2}.
$
Inserting everything into \eqref{eq:property_3}, we obtain
   \begin{equation*}
 \Big(I_d + \frac{\sigma_t^2}{t^{2}} A_t \Big)^{-1}_{ij}  \lesssim \frac{\sigma_t^2}{t^{2}}(\mathds{1}_{i = j} +  \frac{\sigma_t^2}{t^2} ).
   \end{equation*}
If we choose $t^*\geq \frac{1}{2}$, we know that $\frac{\sigma_t^2}{t^{2}} \leq 2 \sigma_t^2$. This gives the smaller bound on the covariances. 
\end{proof}
\begin{proof}[Proof of \Cref{ex:uniform}]
	We start combining \eqref{marginal_prob_paths}, \eqref{eq:kernel_vector_field} and the choice of $\mathbb{P}^*$ to obtain
	\begin{equation*}
		v_t(x) = \frac{A(x)}{B(x)}, \quad \text{where} \quad A(x)\coloneqq 	\int_0^1 \Big(\frac{\sigma_t^{\prime}}{\sigma_t}(x-ty)+y \Big)\exp\Big(- \frac{(x-ty)^2}{2 \sigma_t^2}\Big)\; \mathrm{d}y, \quad B(x) = \int_0^1 \exp\Big(- \frac{(x-ty)^2}{2 \sigma_t^2}\Big)\; \mathrm{d}y.
	\end{equation*}
	By the quotient rule, we obtain
$
		\partial_x 	v_t(x) = \frac{\partial_x A(x)}{B(x)} - \frac{\partial_x B(x)}{B(x)}\frac{A(x)}{B(x)}.
$
	Differentiating leads to
	\begin{align*}
		\partial_x A(x) &= \int_0^1 \frac{\sigma_t^{\prime}}{\sigma_t} \exp\Big(- \frac{(x-ty)^2}{2 \sigma_t^2}\Big)\; \mathrm{d}y - \int_0^1 \Big(\frac{\sigma_t^{\prime}}{\sigma_t}(x-ty)+y \Big)\frac{x-ty}{\sigma_t^2}\exp\Big(- \frac{(x-ty)^2}{2 \sigma_t^2}\Big)\; \mathrm{d}y,\\
		\partial_x B(x) & = -  \int_0^1\frac{x-ty}{\sigma_t^2}\exp\Big(- \frac{(x-ty)^2}{2 \sigma_t^2}\Big)\; \mathrm{d}y.
	\end{align*}
	We note that the integrands on the right side are finite due to the exponential decay, which makes the exchange of integration and differentiation valid. Inserting the partial derivatives and using $Y^x \sim  p_t(x|\cdot)p^*(\cdot)$
	\begin{equation*}
		\partial_x 	v_t(x) =  \frac{\sigma_t^{\prime}}{\sigma_t} - \mathbb{E}\Big[\big(  \frac{\sigma_t^{\prime}}{\sigma_t} (x-tY^x)+Y^x\big)\frac{x-tY^x}{\sigma_t^2}\Big] + \mathbb{E}\Big[  \frac{x-tY^x}{\sigma_t^2}\Big]\mathbb{E}\Big[\Big(  \frac{\sigma_t^{\prime}}{\sigma_t} (x-tY^x)+Y^x\Big)\Big].
	\end{equation*}
	Rewriting the second term as the covariance and using linearity and the irrelevance constants of the covariance leads to 
	\begin{equation*}
		\operatorname{Cov}\Big(\Big(  \frac{\sigma_t^{\prime}}{\sigma_t} (x-tY^x)+Y^x\Big), \frac{x-tY^x}{\sigma_t^2}\Big)= \Big(\frac{\sigma_t^{\prime}}{\sigma_t^3}t^2 - \frac{t}{\sigma_t^2}\Big)\operatorname{Var}(Y^x).
	\end{equation*}
	Next we show that for a fixed $\varepsilon>0$, we can always find a $x \in \mathbb{R}$ such that $\operatorname{Var}(Y^x)\leq 2\varepsilon^2$. Since the support of $\mathbb{P}^*$ is $[0,1]$, we can bound
	\begin{equation*}
		\operatorname{Var}(Y^x) = \mathbb{E}[(Y^x- \mathbb{E}[Y^x])^2]\leq \mathbb{E}[(Y^x- 1)^2] = \mathbb{E}[(Y^x- 1)^2\mathds{1}_{\{Y \geq 1-\varepsilon\}}] +  \mathbb{E}[(Y^x- 1)^2\mathds{1}_{\{Y < 1-\varepsilon\}}] \leq \varepsilon^2 + \mathbb{P}(Y^x < 1-\varepsilon).
	\end{equation*}
	Then
	\begin{equation*}
		\mathbb{P}(Y < 1-\varepsilon) = \frac{\int_0^{1-\varepsilon}\exp(- \frac{(x-ty)^2}{2 \sigma_t^2})\; \mathrm{d}y}{\int_0^{1}\exp(- \frac{(x-ty)^2}{2 \sigma_t^2})\; \mathrm{d}y} \leq  \frac{\int_0^{1-\varepsilon}\exp(- \frac{(x-t(1-\varepsilon))^2}{2 \sigma_t^2})\; \mathrm{d}y}{\int_{1- \varepsilon/2}^{1}\exp(- \frac{(x-t(1-\varepsilon/2))^2}{2 \sigma_t^2})\; \mathrm{d}y} = \frac{2(1-\varepsilon)}{\varepsilon} \exp\Big(- \frac{xt\varepsilon - t^2(\varepsilon- 3\varepsilon^2/4)}{2 \sigma_t^2}\Big).
	\end{equation*}
	Thus, for every $\varepsilon$, there is a $x\in \mathbb{R}$ such that $\mathbb{P}(Y < 1-\varepsilon) \leq \varepsilon^2$. Hence, for every $\varepsilon>0$, we obtain
	\begin{equation*}
		\partial_x 	v_t(x) =  \frac{\sigma_t^{\prime}}{\sigma_t} + \Big(\frac{\sigma_t^{\prime}}{\sigma_t^3}t^2 - \frac{t}{\sigma_t^2}\Big) O(\varepsilon^2) \quad \Longrightarrow \quad \sup_{x \in \mathbb{R}}|\partial_x 	v_t(x) | \gtrsim \Big| \frac{\sigma_t^{\prime}}{\sigma_t} \Big| \quad \Longrightarrow \quad \Gamma_t \gtrsim \Big| \frac{\sigma_t^{\prime}}{\sigma_t} \Big|. \qedhere
	\end{equation*}

\end{proof}

\subsection{Proofs of \Cref{sec:rate}}

\begin{proof}[Proof of \Cref{thm:error_decomp}]
        To validate the bound on $|v_t^j|_{\infty}$, we insert $\log(\sigma_{\min})\sim \log(n)$ a bit later.
We start by defining $A \coloneqq [-\log(n), \log(n)]^d$ and
\begin{equation}
    g\colon \mathcal{M} \times \mathbb{R}^d \rightarrow \mathbb{R}^d, \quad g(v, y) \coloneqq \int_0^1 \int |v_t (x) - v_t(x|y)|^2 p_t(x|y)\; \mathrm{d}x \; \mathrm{d}t.
\end{equation}
Then we use \Cref{thm:equivalence_constant},
\begin{equation}\label{eq:error_decomp_1}
    \mathbb{E}_{\substack{t \sim \mathcal{U}[0,1]\\X_t \sim p_t}} \big[|v_t(X_t) - \hat{v}_t(X_t) |^2\big] = \mathbb{E}_{\substack{t\sim \mathcal{U}[0,1], \\Y \sim p^*, \\X_t \sim p_t\left(\cdot | Y\right)}}\big[\left|\hat{v}_t(X_t)-v_t\left(X_t| Y\right)\right|^2\big] - C, \; \text{with} \; C \coloneqq \mathbb{E}_{\substack{t\sim \mathcal{U}[0,1], \\Y \sim p^*, \\X_t \sim p_t\left(\cdot | Y\right)}}[|v_t(X_t)-v_t(X_t|Y)|^2].
\end{equation}

We can split the integral
\begin{equation*}
\mathbb{E}_{\substack{t\sim \mathcal{U}[0,1], \\Y \sim p^*, \\X_t \sim p_t\left(\cdot | Y\right)}}\big[\left|\hat{v}_t(X_t)-v_t\left(X_t| Y\right)\right|^2\big]  = \mathbb{E}_{Y \sim \mathbb{P}^*}[g(\hat{v}, Y)\mathds{1}_{Y \in A}] + \mathbb{E}_{Y \sim \mathbb{P}^*}[g(\hat{v}, Y)\mathds{1}_{Y \in \mathbb{R}^d \setminus A}]
\end{equation*}

We have for every $x \in \mathbb{R}^d$ and $j \in \{1,...,d \}$ and the $t$-th component function $v_t^j$ of $v_t$
\begin{align*}
|v_t^j(x)| &= \Big| \int v^j_t(x|y) \frac{p_t(x|y)}{\int p_t(x|z)p^*(z) \; \mathrm{d}z}p^*(y) \; \mathrm{d}y \Big| = \Big| \int \Big(\frac{\sigma^{\prime}_t}{\sigma_t} x_j + \Big(1- \frac{\sigma^{\prime}_t}{\sigma_t} t \Big)y_j  \Big) \frac{p_t(x|y)}{\int p_t(x|z)p^*(z) \; \mathrm{d}z}p^*(y) \; \mathrm{d}y \Big|\\&  \leq \Big|\frac{\sigma^{\prime}_t}{\sigma_t}\Big| |x_j|  +\Big|\Big(1- \frac{\sigma^{\prime}_t}{\sigma_t} t \Big)\Big| \frac{\int |y_j| \exp(- \frac{|x-ty|^2}{2 \sigma_t^2} - \frac{|y|^2}{2} - a(y))\; \mathrm{d}y}{\int \exp(- \frac{|x-ty|^2}{2 \sigma_t^2} - \frac{|y|^2}{2} - a(y))\; \mathrm{d}y}\\&  \leq \Big|\frac{\sigma^{\prime}_t}{\sigma_t}\Big| |x_j| + e^{2L}\Big|\Big(1- \frac{\sigma^{\prime}_t}{\sigma_t} t \Big)\Big| \mathbb{E}_{Z \sim \mathcal{N}(\frac{t}{t^2 + \sigma_t^2}x, (1+ \frac{t^2}{\sigma_t^2})^{-1} I_d)}[|Z_j|]\\
& \leq \Big( \Big|\frac{\sigma^{\prime}_t}{\sigma_t}\Big| + e^{2L} \Big|\Big(1- \frac{\sigma^{\prime}_t}{\sigma_t} t \Big)\Big|\frac{t}{t^2 + \sigma_t^2}\Big)|x_j| +  \Big|\Big(1- \frac{\sigma^{\prime}_t}{\sigma_t} t \Big)\Big|\Big(1+ \frac{t^2}{\sigma_t^2}\Big)^{-\frac{1}{2}}.
\end{align*}
As
$
\max_{t \in [0,1]} \big|\frac{\sigma^{\prime}_t}{\sigma_t}\big| = \log(\sigma_{\min}^{-1})$, $(1+ \frac{t^2}{\sigma_t^2})^{-\frac{1}{2}} \leq 1
$
and due to \Cref{thm:helper_lemma}
$
\frac{t}{t^2 + \sigma_t^2} \leq \max(\log(\sigma_{\min}^{-1}), e^2)
$
we obtain for the maximum over $t \in [0,1]$ for $n$ big enough
$
    |v_t^j(x)|   \lesssim  \log(n)^2 |x_j| + \log(n).
$
The constants lead to the bound in \eqref{eq:bound_functions_M}.
An analogous calculation shows that, for $t\in[0,1]$, the norm of $v_t$ is bounded by
\begin{equation}\label{eq:bound_v_t}
    |v_t(x)|   \lesssim  \log(n)^2 |x| + \log(n).
\end{equation}
Therefore, we obtain for every $v \in \mathcal{M}$ and $y \in A$ using the construction of $v_t(\cdot|\cdot)$ and fact that the functions in $\mathcal{M}$ are cut at \eqref{eq:bound_functions_M} $[-\log(n), \log(n)]^d$ for all $t$, 
\begin{equation*}
    g(v, y) = \int_0^1 \int |v_t (x) - v_t(x|y)|^2 p_t(x|y)\; \mathrm{d}x \; \mathrm{d}t  \lesssim \log(n)^6 + \int_0^{1} \int |v(x|y)|^2 p_t(x|y)\; \mathrm{d}x \; \mathrm{d}t.
\end{equation*}
Since
\begin{equation*}
\int |v(x|y)|^2 p_t(x|y)\; \mathrm{d}x =\int \Big|\frac{\sigma^{\prime}_t}{\sigma_t} x + \Big(1- \frac{\sigma^{\prime}_t}{\sigma_t} t\Big) y  \Big|^2p_t(x|y)\; \mathrm{d}x= \Big|\frac{\sigma^{\prime}_t}{\sigma_t}\Big|^2\int |x|^2p_t(x|y)\; \mathrm{d}x + \Big(1- \frac{\sigma^{\prime}_t}{\sigma_t} t  \Big)^2 |y|^2, 
    \end{equation*}
and as $p_t(\cdot |y)$ is the density of a Gaussian with mean $y$ and variance $\sigma_t^2 I_d$, which implies
$
    \int |x|^2p_t(x|y)\; \mathrm{d}x  = t^2|y|^2 + \sigma_t^2 d,
$
    we obtain for $g(v,y)$
    \begin{equation*}
        g(v,y)  \lesssim  \log(n)^6 + \int_0^1\Big|\frac{\sigma^{\prime}_t}{\sigma_t}\Big|^2 (t^2|y|^2 + \sigma_t^2 d)\; \mathrm{d}t  +  |y|^2\int_0^1\Big(1- \frac{\sigma^{\prime}_t}{\sigma_t} t  \Big)^2 \; \mathrm{d}t \lesssim  \log(n)^6.
    \end{equation*}
We denote the constant in the bound of $g(v,y)$ by $D$.
Now we can use the Bernstein-type concentration inequality from \citet[Lemma 15]{chen2023b} and conclude that for every $a \in (0,1], \delta_1 \in (0,\frac{1}{3})$ and $\tau \in \mathbb{R}_{>0}$
\begin{align*}
    \mathbb{P} \Big( \sup_{\bar{v} \in \mathcal{M}}& \mathbb{E}_{Y \sim \mathbb{P}^*}[g(\bar{v}, Y)\mathds{1}_{Y \in A}] - \frac{1+a}{n} \sum_{i =1}^n g(\bar{v}, X_i)\mathds{1}_{X_i \in A}\\ & \quad  > \frac{(1+ 6/a) D\log(n)^6}{3 n} \log \Big( \frac{\mathcal{N}(\tau, g(\mathcal{M}), \|\cdot \|_{\infty})}{\delta_1}\Big) + (1+a) \tau\Big)\leq \delta_1.
\end{align*}
We keep the term $\frac{a}{n} \sum_{i = 1}^n  g(\bar{v}, X_i)\mathds{1}_{X_i \in A}$ separate. For $\frac{1}{n} \sum_{i = 1}^n  g(\bar{v}, X_i)\mathds{1}_{X_i \in A}$ we conclude that with a probability of at least $1-\delta_1$
\begin{align*}
 \int_A \int \int &|\hat{v}_t(x)-v_t(x|y)|^2 p_t(x|y)\; \mathrm{d}x \; \mathrm{d}t \;p^*(x)\;\mathrm{d}y\\ &  \leq \frac{1+a}{n} \sum_{i =1}^n g(\hat{v}, X_i)\mathds{1}_{X \in A}+\frac{(1+ 6/a) D\log(n)^6}{3 n}  \log \Big( \frac{\mathcal{N}(\tau, g(\mathcal{M}), \|\cdot \|_{\infty})}{\delta_1}\Big) + (1+a) \tau.
\end{align*}
Due to the choice of $\hat{v}$ as the empirical risk minimizer, we know that for every $\tilde{v} \in \mathcal{M}$
\begin{align*}
&\frac{1}{n} \sum_{i =1}^n g(\hat{v}, X_i) = \frac{1}{n} \sum_{i = 1}^n \Big( \int \int |\hat{v}_t(x)- v_t(x|X_i)|^2 p_t(x|X_i)\;\mathrm{d}x \; \mathrm{d}t\Big)\mathds{1}_{X_i \in A} \\ &=   \frac{1}{n} \sum_{i = 1}^n \int \int |\hat{v}_t(x)- v_t(x|X_i)|^2 p_t(x|X_i)\;\mathrm{d}x \; \mathrm{d}t-\frac{1}{n} \sum_{i = 1}^n \Big(\int \int |\hat{v}_t(x)- v_t(x|X_i)|^2 p_t(x|X_i)\;\mathrm{d}x \; \mathrm{d}t \Big)\mathds{1}_{X_i \notin A}\\ 
& \leq   \frac{1}{n} \sum_{i = 1}^n \int \int |\tilde{v}_t(x)- v_t(x|X_i)|^2 p_t(x|X_i)\;\mathrm{d}x \; \mathrm{d}t -\frac{1}{n} \sum_{i = 1}^n \Big(\int \int |\hat{v}_t(x)- v_t(x|X_i)|^2 p_t(x|X_i)\;\mathrm{d}x \; \mathrm{d}t \Big)\mathds{1}_{X_i \notin A}.
\end{align*}
As
\begin{align*}
    \frac{1}{n} \sum_{i = 1}^n \int \int |\tilde{v}_t(x)- v_t(x|X_i)|^2 p_t(x|X_i)\;\mathrm{d}x \; \mathrm{d}t &= \frac{1}{n} \sum_{i = 1}^n \Big(\int \int |\tilde{v}_t(x)- v_t(x|X_i)|^2 p_t(x|X_i)\;\mathrm{d}x \; \mathrm{d}t\Big)\mathds{1}_{X_i \in A} \\
    &\quad  + \frac{1}{n} \sum_{i = 1}^n \Big( \int \int |\tilde{v}_t(x)- v_t(x|X_i)|^2 p_t(x|X_i)\;\mathrm{d}x \; \mathrm{d}t\Big)\mathds{1}_{X_i \notin A} ,
\end{align*} we can use the other case of  \citet[Lemma 15]{chen2023b} to conclude that for every $a \in (0,1], \delta_2 \in (0,\frac{1}{3})$ and $\tau \in \mathbb{R}_{>0}$
\begin{align*}
    \mathbb{P} \Big( \sup_{\bar{v} \in \mathcal{M}}\frac{1}{n} &\sum_{i =1}^n g(\bar{v}, X_i)\mathds{1}_{X_i \in A}  - (1+a)\mathbb{E}_{Y \sim \mathbb{P}^*}[g(\bar{v}, Y)\mathds{1}_{Y \in A}] \\ & \quad >\frac{(1+ 3/a) D\log(n)^6}{3n} \log \Big( \frac{\mathcal{N}(\tau, g(\mathcal{M}), \|\cdot \|_{\infty})}{\delta_2}\Big) + (1+a) \tau\Big)\leq \delta_2.
\end{align*}
We conclude that with a probability of $1- \delta_2$
 \begin{align*}
     \frac{1}{n} \sum_{i = 1}^n \Big(\int \int &|\tilde{v}_t(x)- v_t(x|X_i)|^2 p_t(x|X_i)\;\mathrm{d}x \; \mathrm{d}t\Big)\mathds{1}_{X_i \in A} \\&\leq (1+a)\mathbb{E}_{Y \sim \mathbb{P}^*}[g(\tilde{v}, Y)\mathds{1}_{Y \in A}]+ \frac{(1+ 3/a) D\log(n)^6}{3 n} \log \Big( \frac{\mathcal{N}(\tau, g(\mathcal{M}), \|\cdot \|_{\infty})}{\delta_2}\Big)  + (1+a) \tau.
 \end{align*}
Like before, we separate $a\mathbb{E}_{Y \sim \mathbb{P}^*}[g(\tilde{v}, Y)\mathds{1}_{Y \in A}]$.
Using \Cref{thm:equivalence_constant} again, we obtain
\begin{align*}
    \mathbb{E}_{Y \sim \mathbb{P}^*}[g(\tilde{v}, Y)\mathds{1}_{Y \in A}] &= \mathbb{E}_{\substack{t\sim \mathcal{U}[0,1], \\Y \sim p^*, \\X_t \sim p_t\left(\cdot | Y\right)}}\big[\left|\tilde{v}_t(X_t)-v_t\left(X_t| Y\right)\right|^2\big] - \int_{\mathbb{R}^d\setminus A} \int \int |\tilde{v}_t(x)-v_t(x|y)|^2 p_t(x|y)\; \mathrm{d}x \; \mathrm{d}t \;p^*(x)\;\mathrm{d}y.
\end{align*}
Further
\begin{equation*}
    \mathbb{E}_{\substack{t\sim \mathcal{U}[0,1], \\Y \sim p^*, \\X_t \sim p_t\left(\cdot | Y\right)}}\big[\left|\tilde{v}_t(X_t)-v_t\left(X_t| Y\right)\right|^2\big] =  \mathbb{E}_{\substack{t \sim \mathcal{U}[0,1]\\X_t \sim p_t}} \big[|v_t(X_t) - \tilde{v}_t(X_t) |^2\big] +C.
\end{equation*}
Note that $C$ cancels with the same constant in \eqref{eq:error_decomp_1}.
The terms arising from the restriction of the Bernstein-bound to the set $A$ can be bounded with high probability by the following result.
\begin{lemma}\label{thm:remaining_terms}$ $
Fix $C \in \mathbb{R}$. Let $\log(\sigma_{\min}^{-1}) \sim \log(n)$. With a probability of $1- \frac{1}{3n}$ for $n \geq e^{3 (d+7+\frac{1}{2})}$
        \begin{equation*}
            |\mathbb{E}[g(\hat{v}, Y)\mathds{1}_{Y \notin A}] - \mathbb{E}[  g(\tilde{v}, Y)\mathds{1}_{Y \notin A}] + \sum_{i = 1}^n  g(\tilde{v}, X_i)\mathds{1}_{X_i \notin A} - g(\hat{v}, X_i)\mathds{1}_{X_i \notin A}| \leq n^{- \frac{1}{2}}.
        \end{equation*}
\end{lemma}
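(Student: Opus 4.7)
The plan is to exploit the fact that $A=[-\log(n),\log(n)]^d$ is an enormous box relative to the Gaussian-type tails of $p^*(y)\propto e^{-|y|^2/2-a(y)}$. Both deterministic expectations $\mathbb{E}[g(v,Y)\mathds{1}_{Y\notin A}]$ will therefore be super-polynomially small in $n$, while with probability at least $1-1/(3n)$ none of the observations $X_i$ actually leaves $A$, so the two empirical sums vanish identically. Combining the two via the triangle inequality yields the claimed $n^{-1/2}$ bound.

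First I would establish a uniform polynomial control $g(v,y)\lesssim d\log(n)^6+\log(n)^2|y|^2$ for every $v\in\mathcal{M}$ and every $y\in\mathbb{R}^d$. The coordinate cut-off \eqref{eq:bound_functions_M} immediately gives $\int|v_t(x)|^2 p_t(x\mid y)\,\mathrm{d}x\lesssim d\log(n)^6$; combining this with the closed-form expression for $v_t(\cdot\mid y)$ and the Gaussian second-moment identity \eqref{eq:moment_gaussian}, exactly as in the calculation leading to \eqref{eq:bound_v_t}, yields the advertised bound, which is crucially polynomial in $|y|$.

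Next I would use that $\|a\|_{\infty}\le L$ forces the pointwise domination $p^*(y)\le e^{2L}\varphi_d(y)$, where $\varphi_d$ is the standard Gaussian density on $\mathbb{R}^d$. A coordinate-wise tail estimate together with a union bound over the $2d$ faces of $A$ gives $\mathbb{P}(Y\notin A)\lesssim d\exp(-\tfrac12\log(n)^2)$, and by the same Gaussian tail computation, using $|y|^2 e^{-|y|^2/2}$ has a super-polynomially decaying tail integral,
\begin{equation*}
\mathbb{E}[g(v,Y)\mathds{1}_{Y\notin A}]\;\lesssim\;d\log(n)^{6}\exp(-\tfrac12\log(n)^2),
\end{equation*}
uniformly in $v\in\mathcal{M}$. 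Under $\log(\sigma_{\min}^{-1})\sim\log(n)$ and the threshold $n\ge e^{3(d+7+1/2)}$, this right-hand side is bounded by a constant multiple of $n^{-1/2}/4$, handling both expectation contributions. A second union bound over the sample then gives $\mathbb{P}(\exists\,i:X_i\notin A)\le n\,\mathbb{P}(Y\notin A)\le 1/(3n)$ for the same threshold, and on this event the two sums $\sum_i g(\tilde v,X_i)\mathds{1}_{X_i\notin A}$ and $\sum_i g(\hat v,X_i)\mathds{1}_{X_i\notin A}$ are both zero.

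The main obstacle is arithmetic rather than conceptual: one must calibrate the explicit threshold on $n$ so that $\tfrac12\log(n)^2$ simultaneously dominates $\tfrac12\log(n)+\log d+O(\log\log n)$ in the expectation estimate (delivering $n^{-1/2}$) and $\log n+\log d+O(\log\log n)$ in the $n$-fold union bound (delivering $1/(3n)$). The condition $\log(n)\ge 3(d+15/2)$ encoded in $n\ge e^{3(d+7+1/2)}$ is exactly what makes both inequalities hold; this is where the dimension-dependence of the threshold enters.
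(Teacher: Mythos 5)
Your proposal is correct, but it handles the stochastic part by a different mechanism than the paper. The paper bounds the whole quantity at once: it uses the same uniform polynomial bound $g(v,y)\lesssim \log(n)^6+\operatorname{polylog}(n)|y|^2$ (valid for all $v\in\mathcal M$ thanks to the clipping \eqref{eq:bound_functions_M}), dominates $p^*$ by a Gaussian via $e^{2L}$, computes the resulting tail integrals over $\mathbb{R}^d\setminus A$, and then applies Markov's inequality to $\mathbb{E}[|B|]$ (expectation taken over the sample as well) to get the event of probability $1-\frac{1}{3n}$; the threshold $n\geq e^{3(d+7+1/2)}$ comes from the same loose calibration $\log(n)^{6+d}\leq n^{6+d}$ you perform. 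You instead split the four terms: the two expectations over $Y$ are bounded deterministically and uniformly over $\mathcal M$ by a super-polynomially small quantity, while the two empirical sums are shown to vanish identically on the event $\{X_1,\dots,X_n\in A\}$, whose complement you control by an $n$-fold union bound over the coordinatewise Gaussian tails. Both routes rest on exactly the same two ingredients (the uniform bound on $g$ and the $e^{2L}$-domination of $p^*$ by the standard Gaussian), so the difference is in how the randomness is treated. Your event-based argument is arguably cleaner on two points: it sidesteps any need to integrate expressions containing the data-dependent $\hat v$ (everything is either a supremum over $\mathcal M$ or identically zero on the good event), and your union bound over the $2d$ faces of $A$ is the correct way to bound $\mathbb{P}(Y\notin A)$, whereas the paper's display bounds the integral over $\mathbb{R}^d\setminus A$ by the $d$-th power of a one-dimensional tail, which is really the probability that \emph{all} coordinates are large; your version costs only a factor $d$ instead and still decays like $\exp(-\log(n)^2/2)$, so the conclusion and the threshold calibration are unaffected. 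The only cosmetic discrepancy is that your pointwise bound on $g$ omits a $\operatorname{polylog}(n)$ factor in front of $|y|^2$ that the paper carries (from $|\sigma_t'/\sigma_t|\le\log(\sigma_{\min}^{-1})$); since the tail weight $\exp(-\log(n)^2/2)$ swallows any polylogarithmic factor, this does not affect the argument, but you should state the bound with the extra factor or absorb it explicitly.
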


Collecting all other terms, inserting $\log(\sigma_{\min}^{-1})\sim \log(n)$, setting $\delta_1 =  \delta_2 = \frac{1}{3n}$ and using the union bound leads to the result. The infimum can be used due to the presence of other nonzero terms.
\end{proof}

\textit{Preparation for the proof of \Cref{thm:rate_smooth}.}\\
The proof of \Cref{thm:rate_smooth} requires some additional results to bound the covering number and apply the approximation result. We further need to look carefully into the approximation result by \cite{Guehring2020}.\\
As we cannot approximate a function on $\mathbb{R}^d$ with a finite neural network with fixed precision, we are going to look at functions that approximate $v$ on the set $[-\log(n), \log(n)]^d\times[0,1]$ and show that the error on the complement is small. 
Hence we want to determine a network necessary to obtain
 \begin{equation*}
 \int \int_{[-\log(n), \log(n)]^d} | \tilde{v}_t(x) - v_t(x) |^2 p_t(x) \; \mathrm{d}x \; \mathrm{d}t\leq \varepsilon,
\end{equation*} for a given approximation error $\varepsilon >0$.
We map any point outside of $[-\log(n), \log(n)]^d$ back to this hypercube using the one layer ReLU net associated with the following function
 \begin{equation*}
\operatorname{clip}_{\text{input}}(x_i, -\log(n), \log(n)) = x_i - \phi(x_i-\log(n)) + \phi(-x_i-\log(n)).
\end{equation*}
Thus, the Lipschitz constant of a network that approximates $v$ on the set $[-\log(n), \log(n)]^d\times[0,1]$ is bounded by the Lipschitz constant of the network on $[-\log(n), \log(n)]^d\times[0,1]$.
A similar one layer clipping function can be used to clip the component functions at \eqref{eq:bound_functions_M}. The order of the number of layers and the number of nonzero weights will not change with both adaptions.
For the error on the complement, we use the following result:
\begin{lemma}
    \label{thm:fast_decay}
   For every $\tilde{v}$ in $\mathcal{M}$ and $\log(\sigma_{\min}^{-1}) \sim \log(n)$, if $n \geq e^{4(\frac{1}{2}+6+d)}$ then
        \begin{equation*}
            \int_0^1 \int_{\mathbb{R}^d \setminus[-\log(n), \log(n)]^d}|v_t(x) - \tilde{v}_t(x) |^2 p_t(x) \; \mathrm{d}x \; \mathrm{d}t \leq n^{- \frac{1}{2}}.
        \end{equation*}
\end{lemma}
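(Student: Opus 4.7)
The plan is to combine a crude pointwise bound on $|v_t(x) - \tilde v_t(x)|^2$ with sub-Gaussian tail estimates for $X_t \sim p_t$, so that the polynomial growth of the integrand is killed by the super-polynomial tail decay of $p_t$ on $\mathbb{R}^d \setminus A$ with $A = [-\log(n),\log(n)]^d$.

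First, I would reuse \eqref{eq:bound_v_t}, which already gives $|v_t(x)| \lesssim \log(n)^2 |x| + \log(n)$ uniformly in $t \in [0,1]$ under \Cref{ass:mu_sigma_oracle_inequality} together with $\log(\sigma_{\min}^{-1}) \sim \log(n)$. Since every $\tilde v \in \mathcal{M}$ has components bounded by \eqref{eq:bound_functions_M}, I also have $|\tilde v_t(x)| \lesssim \log(n)^3$. Hence
$$
|v_t(x) - \tilde v_t(x)|^2 \lesssim \log(n)^6 + \log(n)^4 |x|^2,
$$
and it suffices to control $\int_{A^c} p_t(x)\,\mathrm{d}x$ and $\int_{A^c} |x|^2 p_t(x)\,\mathrm{d}x$ uniformly in $t$.

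For the tail bounds, I would exploit the representation $X_t = t Y + \sigma_t \xi$ with $Y \sim p^*$ and $\xi \sim \mathcal{N}(0, I_d)$ independent. Because $\|a\|_\infty \leq L$ is included in the $C^2$-bound of \eqref{eq:assumption_p*}, the density $p^*$ is dominated pointwise by $e^{2L}$ times the standard Gaussian density, so each marginal satisfies $\mathbb{P}(|Y_i| > s) \leq 2 e^{2L} \exp(-s^2/2)$. Using $t,\sigma_t \leq 1$ together with a union bound and the Gaussian tail of $\xi_i$,
$$
\mathbb{P}(X_t \notin A) \leq \sum_{i=1}^d \bigl(\mathbb{P}(|Y_i|>\tfrac{1}{2}\log n) + \mathbb{P}(|\xi_i|>\tfrac{1}{2}\log n)\bigr) \lesssim d\, e^{2L}\exp\bigl(-\tfrac{1}{8}\log(n)^2\bigr).
$$
For the moment-weighted integral, Cauchy--Schwarz gives $\int_{A^c}|x|^2 p_t(x)\,\mathrm{d}x \leq \sqrt{\mathbb{E}[|X_t|^4]\,\mathbb{P}(X_t \notin A)}$, and $\mathbb{E}[|X_t|^4]$ is bounded by a constant depending only on $d$ and $L$ via the same sub-Gaussian control of $Y$ and $\xi$.

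Combining the two pieces and integrating in $t$,
$$
\int_0^1\!\!\int_{A^c} |v_t - \tilde v_t|^2 p_t(x)\, \mathrm{d}x\, \mathrm{d}t \lesssim \bigl(\log(n)^6 + \log(n)^4 C(d,L)\bigr)\, d\, e^{2L}\,\exp\bigl(-c\log(n)^2\bigr)
$$
for an absolute constant $c>0$. Since the factor $\exp(-c\log(n)^2)$ beats $n^{-1/2}$ times any polylog, the only remaining step is to verify, by taking logarithms, that the explicit threshold $n \geq e^{4(\frac12+6+d)}$ is strong enough to absorb the $\log(n)^6$ prefactor, the $d\, e^{2L}$ factor and the target $n^{-1/2}$ on the right-hand side. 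Careful bookkeeping of these constants is the main, and essentially the only, obstacle in the argument; conceptually the lemma reduces to a standard sub-Gaussian tail computation for a Gaussian mixture with bounded perturbation.
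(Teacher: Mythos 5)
Your reduction is exactly the paper's: it likewise combines $|v_t(x)|\lesssim \log(n)^2|x|+\log(n)$ (i.e.\ \eqref{eq:bound_v_t}) with the clipping bound \eqref{eq:bound_functions_M} to get $|v_t(x)-\tilde v_t(x)|^2\lesssim \log(n)^6+\log(n)^4|x|^2$, and thereby reduces the lemma to tail estimates for $p_t$ outside $A=[-\log(n),\log(n)]^d$. Where you differ is the tail control: the paper dominates the mixture density pointwise, $p_t(x)\le e^{2L}(2\pi(\sigma_t^2+t^2))^{-d/2}\exp\big(-\tfrac{|x|^2}{2(\sigma_t^2+t^2)}\big)$, and integrates the Gaussian tail coordinatewise using $\sigma_t^2+t^2\le 2$, arriving at $\log(n)^{d}\exp(-\tfrac14\log^2(n))$; you instead use the representation $X_t=tY+\sigma_t\xi$, a per-coordinate union bound with the split $\tfrac12\log n+\tfrac12\log n$, and Cauchy--Schwarz for the $|x|^2$-weighted tail. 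Both are legitimate sub-Gaussian computations; yours avoids evaluating the convolution, the paper's retains the sharper exponential rate.

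The caveat is precisely the step you defer as bookkeeping. The threshold $n\ge e^{4(\frac12+6+d)}$ in the statement is calibrated to the exponent $\tfrac14\log^2(n)$: one needs $\log(n)^{6+d}\exp(-\tfrac14\log^2 n)\le n^{-1/2}$, and after the crude bound $\log(n)^{6+d}\le n^{6+d}$ this is equivalent to $\log n\ge 4(d+6+\tfrac12)$. Your route produces a strictly smaller exponent: the union-bound split already costs $\exp(-\tfrac18\log^2 n)$, and Cauchy--Schwarz halves it again to $\exp(-\tfrac1{16}\log^2 n)$ on the dominant term, so with your constants the same calculation forces a threshold about four times larger in the exponent, and the stated one does not follow from your bound as written. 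This is fixable without new ideas: either bound $\int_{A^c}|x|^2p_t(x)\,\mathrm{d}x$ directly from the pointwise domination of $p_t$ by $e^{2L}$ times a centered Gaussian with covariance $(\sigma_t^2+t^2)I_d\preceq 2I_d$ (as the paper does), or observe that the same domination gives $\mathbb{P}(|X_{t,i}|>\log n)\le 2e^{2L}\exp(-\tfrac14\log^2 n)$ without splitting the deviation between $Y$ and $\xi$. So the argument is sound in structure, but the constant verification you set aside is exactly where the stated threshold is decided, and it requires the sharper tail estimate rather than the one you propose.
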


\underline{Bound of the covering number:}
\begin{lemma}
    \label{thm:Lipschitz_constant_g}
    We have for $\bar{v}^1, \bar{v}^2 \in \mathcal{M}$ and every $y \in [-\log(n), \log(n)]^d$, that 
    \begin{equation*}
    |g(\bar{v}^1, y) -  g(\bar{v}^2, y)| \lesssim\| \bar{v}^1- \bar{v}^2 \|_{\infty} \log(n)^4.
    \end{equation*}
\end{lemma}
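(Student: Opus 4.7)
The plan is to start from the definition
$g(v,y)=\int_0^1\!\int |v_t(x)-v_t(x|y)|^2 p_t(x|y)\,\mathrm dx\,\mathrm dt$
and use the vector identity $|a|^2-|b|^2=\langle a-b,\,a+b\rangle$ with $a=\bar v^1_t(x)-v_t(x|y)$, $b=\bar v^2_t(x)-v_t(x|y)$, so that $a-b=\bar v^1_t(x)-\bar v^2_t(x)$ and $a+b=\bar v^1_t(x)+\bar v^2_t(x)-2v_t(x|y)$. Applying Cauchy--Schwarz inside the integral and then pulling out the supremum norm yields
\[
|g(\bar v^1,y)-g(\bar v^2,y)|\le \|\bar v^1-\bar v^2\|_\infty \int_0^1\!\int\bigl(|\bar v^1_t(x)|+|\bar v^2_t(x)|+2|v_t(x|y)|\bigr)p_t(x|y)\,\mathrm dx\,\mathrm dt.
\]

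The next step is to bound the bracketed integral by $\operatorname{polylog}(n)$. For the $\bar v^j$ contributions I just use the coordinate bound \eqref{eq:bound_functions_M} and the fact that $d$ is fixed to conclude $\|\bar v^j\|_\infty\lesssim \log(n)^3$, which after integration against the probability density $p_t(\cdot|y)$ gives a contribution of the same order.

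For the $v_t(x|y)$ piece I use the explicit form $v_t(x|y)=\frac{\sigma_t'}{\sigma_t}(x-ty)+y$ from \eqref{eq:kernel_vector_field} together with \Cref{ass:mu_sigma_oracle_inequality}, which gives $|\sigma_t'/\sigma_t|=|\log\sigma_{\min}|\sim\log n$. Since $p_t(\cdot|y)=\mathcal N(ty,\sigma_t^2 I_d)$, Jensen combined with \eqref{eq:moment_gaussian} yields $\int|x-ty|\,p_t(x|y)\,\mathrm dx\le \sigma_t\sqrt d\le \sqrt d$, and for $y\in[-\log n,\log n]^d$ we have $|y|\lesssim \log n$. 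Thus $\int|v_t(x|y)|\,p_t(x|y)\,\mathrm dx\lesssim \log(n)\cdot 1+\log n\lesssim \log n$, which is dominated by the $\log(n)^3$ term coming from $\bar v^j$. Integrating trivially in $t\in[0,1]$ keeps the bound at $\log(n)^3$, which is in particular $\lesssim \log(n)^4$.

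I don't expect a real obstacle; the only point that needs some care is keeping track of powers of $\log n$, and in particular noticing that although the individual contributions from $v_t(x|y)$ could look as bad as $\log(n)^2$ pointwise (because of the factor $|\sigma_t'/\sigma_t|$ multiplying $|x|$), integrating against the Gaussian $p_t(\cdot|y)$ converts the pointwise $|x|$ into a bounded first moment and so does not inflate the final power. The dominant $\log(n)^3$ comes from the bound on $\|\bar v^j\|_\infty$ in \eqref{eq:bound_functions_M}, and the claim follows.
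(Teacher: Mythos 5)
Your argument is correct and follows essentially the same route as the paper's proof: the same difference-of-squares/cross-term decomposition, pulling out $\| \bar{v}^1- \bar{v}^2 \|_{\infty}$, the clipping bound $\lesssim \log(n)^3$ for functions in $\mathcal{M}$, and Gaussian moment bounds for the $v_t(\cdot|y)$ term. Your bookkeeping is in fact slightly sharper (yielding $\log(n)^3$ where the paper settles for $\log(n)^4$), which is fully consistent with the stated $\lesssim \log(n)^4$ bound.
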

Hence we can bound
$
\mathcal{N}(\tau, g(\mathcal{M}), \|\cdot \|_{\infty}) \leq \mathcal{N}( \frac{\tau}{\log(n)^4}, \mathcal{M}, \|\cdot \|_{\infty}).
$
From \citet[Lemma 3]{suzuki2018} we know that the covering number of the set ReLU networks of a $[0,1]^d$, denoted by $\mathsf{NN}_{\text{ReLU}}$ cube is
\begin{equation*}
\log(\mathcal{N}(\tau, \mathsf{NN}_{\text{ReLU}}, \|\cdot \|_{\infty, [0,1]^d}) )\leq 2 S L \log \left(\tau^{-1} L(B \vee 1)(W+1)\right) ,
\end{equation*} where $L$ is the number of layers, $S$ is the number of nonzero weights, $B$ is the maximal absolute value of a single weight and $W $ is the maximal width. A careful inspection of the proof of the approximation result by \cite{Guehring2020} will later reveal the specific choices. \newline
To consider functions on $[-\log(n),\log(n)]^d\times [0,1]$, we proceed analogously to \citet[p.46]{yakovlev2025}, multiplying the first $d$ coordinates of the weight matrix of the first lemma with $\log(n)$ and dividing the input vector with the same value. This has of course an impact on the bound of the weights, which will scale up by the factor $\log(n)$. Additionally, the weights of the clipping layers are of order $\log(n)^3$. We thus obtain
\begin{equation*}
\log(\mathcal{N}(\tau, g(\mathcal{M}), \|\cdot \|_{\infty})) \lesssim  2 S L \log \left(\tau^{-1}\log(n)^4 L(B\vee1)\log(n)^3 (W+1)\right). 
\end{equation*}

\underline{Approximation result:}\\
From \citet[Corollary 4.2]{Guehring2020} we know that for any $f \in C^{s}((0,1)^d), s \in \mathbb{N} \setminus \{ 1\}$ with bounded $s$-th order partial derivatives we can approximate $f$ and its Lipschitz constant simultaneously using a ReLU network as defined in \eqref{def:NN}. 
More precisely, for every $\varepsilon \in (0, 1/2)$ there exists a ReLU network $f_{\mathsf{NN}}$ such that
\begin{equation}\label{eq:approx_guehring}
\left\|f_{\mathsf{NN}}-f\right\|_{\mathcal{H}^1\left((0,1)^d\right)} \leq \varepsilon.
\end{equation}
The network $f_{\mathsf{NN}}$ can be chosen with at most
$
L \leq c \cdot \log \left(\varepsilon^{-\frac{s}{s-1}}\right)
$
hidden layers and 
$
S \leq c \cdot \varepsilon^{-\frac{d}{s-1}} \cdot \log ^2\left(\varepsilon^{-\frac{s}{s-1}}\right)
$
nonzero weights, where $c$ is a constant.

We remark that \cite{Guehring2020} use the term "standard neural networks" for these ReLU architectures, in contrast to networks with skip connections that they also study. Additionally, note that we need to use $d$ parallel networks in order to approximate the function $v$.\\
As the upper bound depends on the Lipschitz constant of the network, this stronger approximation result is necessary. Since the covering number depends on the maximum bound of the weights, we need to track this in the proof of \citet[Theorem 4.1]{Guehring2020}. \citet[Lemma C.3 (v)]{Guehring2020} reveals that the biggest single weight is set to $N$, which is later chosen as $\Big \lceil \Big( \frac{\varepsilon}{2 C L}\Big)^{-\frac{1}{s- 1}} \Big\rceil$, where $L$ is the bound on the partial derivatives up to the highest order considered. To apply the result of \citet[Corollary 4.2]{Guehring2020} without tracking the bound on the partial derivatives in their proof, we approximate $v$ divided by this bound and scale the approximated function up in the end. This can be achieved adding a layer in front and a layer after the network. The maximal weight used is of the size of the bound.  \newline
As $v$ is in $C^{\infty}$, we can apply \citet[Corollary 4.2]{Guehring2020} for arbitrary large $s$. However, the use of a larger $s$ will lead to a larger bound on the absolute value of the partial derivatives up to the order $s$. Since this scales up the approximation error, we need a precise quantification. The next result uses the $\log$-Sobolev inequality to bound higher derivatives of $v$.

\begin{lemma}\label{thm:higher_orders_bound} Fix $s \in \mathbb{N}$. Let $\{i_1,...,i_s\} \subset \{ 1,...,d\}^s$.
    For the $k$-th order partial derivative of $v$ we have that 
    \begin{equation*}
        \frac{\partial^s}{\partial x_{i_1},..., \partial x_{i_s}}v^j_t(x) \lesssim \log(\sigma_{\min}^{-1}) \sigma_{\min}^{-s+1}.
    \end{equation*}
\end{lemma}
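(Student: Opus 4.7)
\textit{Proof sketch (proposal).} The plan is to rewrite $v_t$ in terms of the score $\nabla\log p_t$, so that bounding $s$-th order derivatives of $v_t$ reduces to bounding $(s{+}1)$-th order derivatives of $\log p_t$, and then identify these derivatives with cumulants of the tilted distribution $Y^{x,t}$ whose density is $q(y\mid x)\propto p_t(x\mid y)p^\ast(y)$. Using the identity $\nabla\log p_t(x)=-\frac{x}{\sigma_t^2}+\frac{t}{\sigma_t^2}\mathbb{E}[Y^{x,t}]$ and the expression \eqref{eq:kernel_vector_field} with $\mu_t(y)=ty$, $\sigma_t=\sigma_{\min}^t$, one obtains the algebraic rearrangement
\[
v_t^j(x)=\frac{x_j}{t}+\frac{\sigma_t^2\bigl(1-t\log(\sigma_{\min})\bigr)}{t}\,\partial_{x_j}\log p_t(x),
\]
so that for any multi-index of length $s\geq 2$ the linear part $x_j/t$ drops out and one is left with $\partial^{s+1}\log p_t$ multiplied by $\sigma_t^2\log(\sigma_{\min}^{-1})/t$ (up to constants). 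The case $s=1$ is already handled by the covariance bound from \Cref{thm:meh}.

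Next I would identify the higher derivatives of $\log p_t$. Writing
\[
\log p_t(x)=-\tfrac{|x|^2}{2\sigma_t^2}+\log\!\int\exp\!\Bigl(\tfrac{t\langle x,y\rangle}{\sigma_t^2}\Bigr)\,\tilde p(y)\,\mathrm{d}y+\mathrm{const},
\]
where $\tilde p(y)\propto \exp(-t^2|y|^2/(2\sigma_t^2))\,p^\ast(y)$, the second term is a cumulant generating function in the linear parameter $\xi=tx/\sigma_t^2$, whose derivatives yield the joint cumulants of $Y^{x,t}$. A direct induction (or application of the Fa\`a di Bruno/cumulant-moment formulas) then gives, for $s\geq 3$,
\[
\bigl|\partial^{s}_x\log p_t(x)\bigr|\lesssim \bigl(t/\sigma_t^2\bigr)^{s}\cdot\max_{\mathbf j}\bigl|\kappa_s(Y^{x,t}_{j_1},\ldots,Y^{x,t}_{j_s})\bigr|,
\]
where $\kappa_s$ is the $s$-th joint cumulant tensor.

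The key input now is an LSI/sub-Gaussian bound on $Y^{x,t}$. The density $q(\cdot\mid x)$ is a bounded perturbation (by $-a$, with $\|a\|_{C^2}\leq L$) of the Gaussian whose explicit variance $(1+t^2/\sigma_t^2)^{-1}I_d$ was computed in \eqref{eq:density_gaussian}. Combining the Gaussian log-Sobolev inequality with the Holley–Stroock perturbation principle (exactly as in the proof of property (III) of \Cref{thm:meh}) yields an LSI constant $\rho\gtrsim e^{-4L}(1+t^2/\sigma_t^2)$, so $Y^{x,t}$ is sub-Gaussian with parameter $\lesssim (\sigma_t/t)^2$. Since all central moments of a sub-Gaussian vector with parameter $\tau^2$ are bounded by $C_s \tau^s$, and cumulants are polynomial combinations of central moments of order $\leq s$, one gets $|\kappa_s(Y^{x,t})|\leq C_s(\sigma_t/t)^s$. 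Substituting back gives $|\partial^{s+1}\log p_t|\lesssim \sigma_t^{-(s+1)}$, and hence
\[
\Bigl|\tfrac{\partial^s v_t^j}{\partial x_{i_1}\cdots\partial x_{i_s}}(x)\Bigr|\lesssim \frac{\sigma_t^2\log(\sigma_{\min}^{-1})}{t}\cdot\sigma_t^{-(s+1)}=\frac{\log(\sigma_{\min}^{-1})}{t\,\sigma_t^{s-1}}.
\]
Since $\sigma_t=\sigma_{\min}^t$ is monotone and for $t$ bounded away from $0$ we have $t\sigma_t^{s-1}\gtrsim \sigma_{\min}^{s-1}$, while for small $t$ the factor $\sigma_t^{s-1}$ is of order one (so the small-$t$ regime is harmless), the right-hand side is uniformly of order $\log(\sigma_{\min}^{-1})\sigma_{\min}^{-s+1}$, which is the claimed bound.

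The main obstacle is the bookkeeping in step two: rigorously translating derivatives of $\log p_t$ into joint cumulant tensors of $Y^{x,t}$ with the correct combinatorial constants, and carrying this through for arbitrary $s$. An efficient way to avoid repeated differentiation is to use the moment generating function identity above and read off $s$-th derivatives from the cumulant expansion, but one must also ensure the LSI/sub-Gaussian bounds are uniform in $x$ (which they are, since the constants $(\sigma_t/t)^{-2}$ and $e^{-4L}$ do not depend on $x$). A secondary technical point is to verify that the linear term $x_j/t$ and all other mean-shift contributions cancel for $s\geq 2$, so that no stray factors of $1/t$ survive from the small-$t$ regime.
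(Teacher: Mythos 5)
Your proposal follows essentially the same route as the paper: after separating the part of $v_t$ that is affine in $x$, the $s$-th space derivatives are identified with joint cumulants of the tilted variable $Y^{x,t}$ via a log-partition/cumulant-generating function in the parameter $tx/\sigma_t^2$, cumulants are reduced to central moments, and the moments are controlled by sub-Gaussianity obtained from the Gaussian log-Sobolev inequality plus Holley--Stroock --- exactly the paper's ingredients, the only cosmetic difference being that the paper works with $\mathbb{E}_{q_{t,x}}[Y_j]$ directly instead of routing through $\nabla\log p_t$, which avoids your artificial $x_j/t$ term. One repair is needed: keep the sub-Gaussian parameter as $\sigma_t^2/(\sigma_t^2+t^2)\le\min\big(1,(\sigma_t/t)^2\big)$ rather than $(\sigma_t/t)^2$; with only the latter your final bound $\log(\sigma_{\min}^{-1})/(t\,\sigma_t^{s-1})$ blows up as $t\to0$ because of the $1/t$ in the score coefficient (not because of $\sigma_t^{s-1}$), whereas retaining the minimum and splitting at the crossing $t\approx\sigma_t$, where $1/t\lesssim\log(\sigma_{\min}^{-1})$, yields the claimed uniform bound as in the paper's proof.
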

For the derivatives with respect to $t$, we obtain
\begin{lemma}\label{thm:higher_order_t_bound}
    Fix $s \in \mathbb{N}$. Assume that $\log(\sigma_{\min})\geq s$. For every $k \in \{1,...,s \}$ we have that
        \begin{equation*}
        \frac{\partial^k}{\partial t^k}v^j_t(x) \lesssim \operatorname{polylog}(\sigma_{\min}^{-1}) \operatorname{polylog}(n)\sigma_{\min}^{-s-2}.
    \end{equation*}
\end{lemma}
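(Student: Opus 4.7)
The plan is to differentiate the explicit representation of $v_t$ repeatedly in $t$ and control each resulting term using the moment bounds on $Y^{x,t}$ established in the proof of \Cref{thm:meh}. Under \Cref{ass:mu_sigma_oracle_inequality} we have $\sigma_t = \sigma_{\min}^t$ and $\mu_t(y)=ty$, so $\sigma_t'/\sigma_t = \log(\sigma_{\min})$ is constant in $t$, and $v_t(x|y) = \log(\sigma_{\min})(x-ty) + y$ is affine in $t$ with $\partial_t v_t(x|y) = -\log(\sigma_{\min}) y$ and $\partial_t^k v_t(x|y) = 0$ for $k\geq 2$. All of the $t$-dependence therefore sits in the weight $W_t(x,y) = p_t(x|y) p^*(y)/p_t(x)$ against which the conditional vector field is averaged.

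First I would write $v^j_t(x) = \mathbb{E}_{Y^{x,t}}[v_t^j(x|Y^{x,t})]$ and compute
$$\ell_t(x,y) \coloneqq \partial_t \log p_t(x|y) = -d\log(\sigma_{\min}) + \frac{y\cdot (x-ty)}{\sigma_t^2} + \frac{\log(\sigma_{\min})|x-ty|^2}{\sigma_t^2}.$$
Each application of $\partial_t$ to $p_t(x|y)$ thereby introduces at most one factor of $\sigma_t^{-2}$ multiplied by a quadratic polynomial in $y$ and $x-ty$, together with one factor of $\log(\sigma_{\min}^{-1})$; differentiation of the normalizer $p_t(x)$ produces the same structure but averaged under $Y^{x,t}$. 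An induction on $k$, combined with the identity $\partial_t W_t = W_t(\ell_t - \mathbb{E}_{Y^{x,t}}[\ell_t])$, would then express $\partial_t^k v^j_t(x)$ as a finite sum of expectations $\mathbb{E}_{Y^{x,t}}[Q_{k,m}(x,Y^{x,t},t)]\, \sigma_t^{-2 r_{k,m}}$ with $r_{k,m} \leq k$, where each $Q_{k,m}$ is a polynomial in $x$, $y$ and $(x-ty)/\sigma_t$ whose degree is bounded in terms of $k$, and the cumulative prefactor is at most $(\log(\sigma_{\min}^{-1}))^{k+1}$.

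The next step is to bound these expectations. By the argument in the proof of \Cref{thm:meh}, $Y^{x,t}$ is a bounded perturbation of a Gaussian with covariance of order $(\sigma_t/t)^2 I_d$, so the Holley--Stroock principle yields uniform moment bounds on the centered variable and, after translating through the representation of $\mathbb{E}[Y^{x,t}]$ used in that proof, analogous bounds showing that $(x-tY^{x,t})/\sigma_t$ is of order one in every moment. Combined with the restriction $x\in[-\log(n),\log(n)]^d$ used in the subsequent approximation argument, all polynomial moments of $Y^{x,t}$ and of the normalized residual are absorbed into $\operatorname{polylog}(n)$ factors. Each summand is thereby bounded by $\operatorname{polylog}(\sigma_{\min}^{-1})\operatorname{polylog}(n)\, \sigma_t^{-s-2}$, which gives the claim since $\sigma_t \geq \sigma_{\min}$ on $[0,1]$.

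The main obstacle will be the combinatorial bookkeeping that controls the exponent $r_{k,m}$ of $\sigma_t^{-1}$. A naive count of the number of differentiations only produces $\sigma_t^{-2k}$ for $k \leq s$, and one really needs to exploit the centering in $\ell_t - \mathbb{E}_{Y^{x,t}}[\ell_t]$ together with the fact that the normalized residual $(x-tY^{x,t})/\sigma_t$ has $O(1)$ moments (not $O(\sigma_t^{-1})$) in order to contract the exponent down to $s+2$. The hypothesis $\log(\sigma_{\min}^{-1}) \geq s$ then ensures that the $(\log(\sigma_{\min}^{-1}))^{k+1}$-type prefactors arising in the induction can be absorbed into $\operatorname{polylog}(\sigma_{\min}^{-1})$ uniformly in $k \leq s$.
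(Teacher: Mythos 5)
Your plan follows the same route as the paper: write $v^j_t(x)=\mathbb{E}_{Y^{x,t}}[v^j_t(x|Y^{x,t})]$, observe that under \Cref{ass:mu_sigma_oracle_inequality} the conditional field is affine in $t$ (so $\partial_t^k v_t(x|y)=0$ for $k\geq 2$) and that all remaining $t$-dependence sits in the weight, and differentiate through the score identity $\partial_t W_t = W_t(\ell_t-\mathbb{E}[\ell_t])$ — this is exactly the covariance recursion in \eqref{eq:derivative_t_1}. Your formula for $\ell_t$ matches \eqref{eq:derivative_t_4}, and restricting $x$ to $[-\log n,\log n]^d$ to absorb polynomial factors into $\operatorname{polylog}(n)$ is also what the paper does. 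The difference is in how the exponent is contracted, and this is precisely the step you leave open. You correctly identify that a naive count gives $\sigma_t^{-2k}$ and that one must exploit the fact that the normalized residual $(x-tY^{x,t})/\sigma_t$ has polylog-bounded moments, but you then declare this "the main obstacle" without carrying it out. The paper's proof is essentially nothing but this step: it splits the worst terms into (i) pure powers $\mathbb{E}[(\ell_t-\mathbb{E}\ell_t)^k]$, bounded by showing $\mathbb{E}[|Y_i-\mathbb{E}Y_i|^{k}]\lesssim (\sigma_t/\sqrt{t^2+\sigma_t^2})^{k}$ and treating $Y_i^2-\mathbb{E}Y_i^2$ as subexponential, and (ii) terms with derivative prefactors $f$, handled through the Poincar\'e-type covariance estimate \eqref{eq:derivate_t_9} with constant $\rho\gtrsim 1+t^2/\sigma_t^2$, which supplies the extra factor $\sigma_t^2/(\sigma_t^2+t^2)$ that yields the final $\sigma_{\min}^{-k-2}$. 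Your alternative — rewriting each factor in terms of $(x-ty)/\sigma_t$ so that every score factor costs only $\sigma_t^{-1}$ in $L^p$, and then applying H\"older across the product — is a viable and arguably cleaner way to do the same contraction, but as written it is an assertion, not an argument; the induction bookkeeping (new centered factor costs one power, differentiating an existing factor costs at most one power and thereafter nothing beyond logs, cf. \eqref{eq:derivative_t_3}) is the content of the lemma and must be spelled out.

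A second, concrete inaccuracy: you attribute the required moment bounds on $Y^{x,t}$ to the proof of \Cref{thm:meh}. That proof only controls second moments (variances and covariances) via Holley--Stroock and the Poincar\'e constant; it does not give the moments of order up to $k+1$ (or $2k$ under Cauchy--Schwarz/H\"older) that your polynomial expectations require. In the paper these higher moments come from the subgaussianity of $Y_i^{x,t}-\mathbb{E}[Y_i^{x,t}]$ with parameter $\sigma_t/\sqrt{\sigma_t^2+t^2}$, established in the proof of \Cref{thm:higher_orders_bound} through the log-Sobolev constant \eqref{eq:moment_bound_2} and Herbst's argument. You would either need to cite that argument or supply an equivalent one (e.g.\ Poincar\'e implies exponential concentration, with constants depending on the fixed order $s$); without it, the claim that "all polynomial moments \dots are absorbed into $\operatorname{polylog}(n)$ factors" with the correct $\sigma_t$-scaling is unsupported, and it is exactly this scaling — not merely boundedness — that turns $\sigma_t^{-2k}$ into something of order at most $\sigma_{\min}^{-s-2}$.
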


The bound on mixed derivatives follows exactly the same lines as the proofs of \Cref{thm:higher_orders_bound} and \Cref{thm:higher_order_t_bound}

Now we are ready to prove \Cref{thm:rate_smooth}.

\begin{proof}[Proof of  \Cref{thm:rate_smooth}]
We scale the function down by $\sigma_{\min}^{-s-2} \log^2(n)$ and approximate $v \cdot\sigma_{\min}^{-s-2} \log^2(n) $ instead of $v$. After that, we add another layer to scale the output of the neural net up. The approximation error $\varepsilon$ will also scale up by $\sigma_{\min}^{-s-2} \log^2(n)$.
The maximal weight in the inner network will then no longer depend on this bound, but the largest weight might increase. Hence we can set
 $
W = C (\varepsilon^{-\frac{1}{s- 2}} \vee \sigma_{\min}^{-s-2} \log^2(n)).
$
Inserting everything into \Cref{thm:error_decomp} and setting $\tilde{d} = d+1$, we obtain for $a<1$ with a probability of $1-\frac{1}{n}$ for $n$ big enough and $I = [-\log(n), \log(n)]^d$
\begin{align}
&\mathbb{E}_{\substack{t \sim \mathcal{U}[0,1]\\X_t \sim p_t}} \big[|v_t(X_t) - \tilde{v}_t(X_t) |^2\big] \notag \lesssim  \int_0^1 \int_{\mathbb{R}^d \setminus I}|v_t(x) - \tilde{v}_t(x) |^2 p_t(x) \; \mathrm{d}x \; \mathrm{d}t  + (2+a) \tau + a (\log(n)^4 + \log(n)^6)\\ & + \varepsilon\sigma_{\min}^{-s-2} \log^2(n)  +\frac{ a^{-1} \log(n)^7}{n}  \varepsilon^{-\frac{\tilde{d}}{s-1}} \log\big(2 
 \big(\tau^{-1}\log(n)^7 L(B \vee 1)  (\varepsilon^{-\frac{1}{s- 1}} \vee \sigma_{\min}^{-s-2} \log^2(n)\big) \big)
.  \label{eq:error_decomp_plugin}
\end{align}
Note that $B$ and $L$ will depend on $\varepsilon$, thus we are restricting feasible choices of $\varepsilon$ to choices that grow at most polynomial in $n^{-1}$.
Ignoring logarithmic terms, we first solve for $a$.
$
    \frac{a^{-1}\varepsilon^{-\frac{\tilde{d}}{s-1}}}{n} \sim a $ is equivalent to $ a = \Big(\frac{\varepsilon^{-\frac{\tilde{d}}{s-1}}}{n} \Big)^{\frac{1}{2}}.
$
$\tau$ can be set such that the corresponding term is of the same order, it suffices to set 
$
\tau \sim \Big(\frac{\varepsilon^{-\frac{\tilde{d}}{s-1}}}{n} \Big)^{\frac{1}{2}} \Big(2+ \Big(\frac{\varepsilon^{-\frac{\tilde{d}}{s-1}}}{n} \Big)^{\frac{1}{2}}\Big)^{-1}.
$ This choice, as well as the choice of $\varepsilon$, influence the first term in \eqref{eq:error_decomp_plugin} only logarithmically. 
Now we solve for $\varepsilon$.
$
\Big(\frac{\varepsilon^{-\frac{\tilde{d}}{s-1}}}{n} \Big)^{\frac{1}{2}} \sim  \varepsilon\sigma_{\min}^{-s-2} $ is equivalent to $  \varepsilon \sim n^{- \frac{s-1}{\tilde{d}+2s-2}}\sigma_{\min}^{(s+2)\frac{2s-2}{\tilde{d}+2s-2}}.
$ Restricting $\sigma_{\min}$ to polynomials in $n$, we ensure $\varepsilon$ is polynomial in $n$.  Thus, ignoring logarithmic factors in $n$, we obtain with a probability of $1- \frac{1}{n} $ for $n$ big enough
\begin{equation}
    \mathbb{E}_{\substack{t \sim \mathcal{U}[0,1]\\X_t \sim p_t}} \big[|v_t(X_t) - \tilde{v}_t(X_t) |^2\big] \lesssim \operatorname{polylog}(n) n^{- \frac{s-1}{\tilde{d}+2s-2}}\sigma_{\min}^{(s+2)\big(\frac{2s-2}{\tilde{d}+2s-2}-1\big)} \notag.
\end{equation}

To choose $\sigma_{\min}$ such that is is optimal in the setting of \Cref{thm:error_decomp_easy}, we have to set it such that
$
\sigma_{\min}^{1+ \alpha} \sim \Big(n^{- \frac{s-1}{d+2s-2}}\sigma_{\min}^{(s+2)\big(\frac{2s-2}{\tilde{d}+2s-2}-1\big)} \Big)^{\frac{1}{2}}$ which is equivalent to $ \sigma_{\min} \sim n^{- \frac{1}{\frac{s+2}{s-1}\tilde{d}+ (2 \alpha + 2)(\frac{\tilde{d}}{s-1}+2 )}}.
$ This choice is a polynomial of $n^{-1}$ and hence a feasible choice for $\sigma_{\min}$.
For $n\geq 1$ and $\sigma_{\min}<1$, we know that the network approximates the Lipschitz constant of $v$ with precision $\varepsilon \sigma_{\min}^{-s-2}$. Plugging in the choice of $\sigma_{\min}$ leads to 
$
\varepsilon \sigma_{\min}^{-s-2} \sim n^{- \frac{2+2\alpha}{\frac{s+2}{s-1}\tilde{d}+ (2 \alpha + 2)(\frac{\tilde{d}}{s-1}+2 )}} \leq 1.
$
Hence we obtain for the Lipschitz constant $\hat{L}_t$ of $\hat{v}_t$ $
    e^{\int_0^1 \hat{L}_t\; \mathrm{d}t} \leq  e^{\int_0^1 \Gamma_t + 1\; \mathrm{d}t}   \leq e^{C + 1}.
$
Note that we cannot let $s \rightarrow \infty$, as this will blow up the constant. 
However, for every fixed $\eta>0$, we can choose $s$ such that
$
s =\big\lceil (5+2\alpha)\frac{\tilde{d}}{\eta} +1\big\rceil.
$
This leads to the final choice
$
\sigma_{\min} \sim n^{- \frac{1}{\tilde{d}+ 4\alpha + 4 + \eta}}
$
Now we use the results of \Cref{thm:remaining_terms}, set $\delta = \frac{1}{2n}$ and use the union bound. Combining this with \Cref{thm:error_decomp_easy}, \Cref{thm:examples_sigma_t}, \cite[Theorem 3.3]{kunkel2025minimax} and $\tilde{d} = d+1$, we obtain with a probability of $1 - \frac{1}{n} $
\begin{equation*}
   \mathsf{W}_1(\mathbb{P}^*, \mathbb{P}^{\hat{\psi}_1(Z)}) \lesssim \operatorname{polylog}(n) n^{- \frac{1+\alpha}{d+ 4\alpha + 5 + \eta}}.
\end{equation*} 
The number of hidden layers $L$ of $f_{\mathsf{NN}}$ is bounded by $ c \cdot \log \left(n\right)$ and the number of nonzero weights $S$ is bounded by $ c \cdot n^{c(d, \alpha, \eta)}\cdot \log^2\left(n\right)$, where $c$ is a constant independent from $n$.
\end{proof}

	\bibliography{literatur_2}
	
	\appendix
	\section{Additional proofs of \Cref{sec:proofs} and helper results}\label{sec:proofs_helper_lemma}
	\begin{lemma}\label{thm:equivalence_constant}
		Let $p_t(x) > 0$ for all $x \in \mathbb{R}^d$. In the above setting, it holds that for every measurable function $\tilde{v}$ and $v_t$ from \eqref{marginal_prob_paths}
		\begin{equation*}
			\mathbb{E}_{\substack{t \sim \mathcal{U}[0,1]\\X_t \sim p_t}} \big[|v_t(X_t) - \tilde{v}_t(X_t) |^2\big] = \mathbb{E}_{\substack{t\sim \mathcal{U}[0,1], \\Y \sim p^*, \\X_t \sim p_t\left(\cdot | Y\right)}}\big[\left|\tilde{v}_t(X_t)-v_t\left(X_t| Y\right)\right|^2\big] - \mathbb{E}_{\substack{t\sim \mathcal{U}[0,1], \\Y \sim p^*, \\X_t \sim p_t\left(\cdot | Y\right)}}[|v_t(X_t)-v_t(X_t|Y)|^2].
		\end{equation*}
	\end{lemma}
	\begin{proof}[Proof of \Cref{thm:equivalence_constant}]
		From \cite{lipman2023}, we know that there is a constant $C \in \mathbb{R}$ independent of $\tilde{v}$ such that 
		\begin{equation*}
			\mathbb{E}_{\substack{t \sim \mathcal{U}[0,1]\\X_t \sim p_t}} \big[|v_t(X_t) - \tilde{v}_t(X_t) |^2\big] = \mathbb{E}_{\substack{t\sim \mathcal{U}[0,1], \\Y \sim p^*, \\X_t \sim p_t\left(\cdot | Y\right)}}\big[\left|\tilde{v}_t(X_t)-v_t\left(X_t| Y\right)\right|^2\big] + C.
		\end{equation*}
		Setting $\tilde{v}_t = v_t$, we obtain
		\begin{equation*}
			0 = \mathbb{E}_{\substack{t\sim \mathcal{U}[0,1], \\Y \sim p^*, \\X_t \sim p_t\left(\cdot | Y\right)}}\big[\left|v_t(X_t)-v_t\left(X_t| Y\right)\right|^2\big] + C \quad \Longleftrightarrow \quad C = -  \mathbb{E}_{\substack{t\sim \mathcal{U}[0,1], \\Y \sim p^*, \\X_t \sim p_t\left(\cdot | Y\right)}}\big[\left|v_t(X_t)-v_t\left(X_t| Y\right)\right|^2\big]. \qedhere
		\end{equation*}
	\end{proof}

	\begin{proof}[Proof of \Cref{thm:jacobian}]

		By the definition of $v_t$ and $v_t(\cdot|\cdot)$, we get 
		\begin{equation*}
			v_t(x) = \int \big(\frac{\sigma_t^{\prime}}{\sigma_t} (x - \mu_t(y)) + \mu_t^{\prime}(y) \Big) \frac{p_t(x |y)}{p_t(x)}p^*(y)\; \mathrm{d}y,
		\end{equation*} where $^\prime$ indicates the derivative with respect to $t.$ For the Jacobian, we get using $\mu_t(y) = t^{\gamma}y$
		\begin{align}
			& D_x v_t(x)  = D_x \int \Big(\frac{\sigma_t^{\prime}}{\sigma_t} (x - \mu_t(y)) + \mu_t^{\prime} (y)\Big) \frac{p_t(x |y)}{p_t(x)}p^*(y)\; \mathrm{d}y \\ 
			& = D_x \frac{\sigma_t^{\prime}}{\sigma_t} \frac{x}{p_t(x)} \int p_t(x |y)p^*(y)\; \mathrm{d}y - D_x \frac{\sigma_t^{\prime}}{\sigma_t} \frac{t^{\gamma}}{p_t(x)} \int y p_t(x |y)p^*(y)\; \mathrm{d}y + D_x  \frac{\gamma t^{\gamma -1}}{p_t(x)} \int y p_t(x |y)p^*(y)\; \mathrm{d}y \\
			& = \frac{\sigma_t^{\prime}}{\sigma_t} I_d + \Big(\gamma t^{\gamma-1}- \frac{\sigma_t^{\prime}t^{\gamma}}{\sigma_t} \Big) D_x \frac{\int y p_t(x |y)p^*(y)\; \mathrm{d}y}{p_t(x)}.
		\end{align}
		For the derivative with respect to $x_i$ of the $j-th$ coordinate function, we get using the dominated convergence theorem
		\begin{align*}
			&\frac{\partial }{\partial x_i} \frac{\int y_j p_t(x|y)p^*(y)\; \mathrm{d}y}{\int p_t(x|y)p^*(y)\; \mathrm{d}y} \\& = \frac{( \int y_j  \frac{\partial }{\partial x_i} p_t(x|y)p^*(y)\; \mathrm{d}y)(\int p_t(x|y)p^*(y)\; \mathrm{d}y) -  ( \int y_j p_t(x|y)p^*(y)\; \mathrm{d}y)(\int \frac{\partial }{\partial x_i}p_t(x|y)p^*(y)\; \mathrm{d}y) }{(\int p_t(x|y)p^*(y)\; \mathrm{d}y)^2}\\
			& = \frac{t^{\gamma} \int y_j y_i p_t(x|y) p^*(y) \; \mathrm{d}y}{\sigma_t^2\int p_t(x|y)p^*(y)\; \mathrm{d}y} - \frac{t^{\gamma}(\int y_j p_t(x|y)p^*(y)\; \mathrm{d}y)(\int y_i p_t(x|y)p^*(y)\; \mathrm{d}y)}{\sigma_t^2(\int p_t(x|y)p^*(y)\; \mathrm{d}y)^2}.
		\end{align*}
		For fixed $x$ and $t$ let $Y^{x,t}$ be a random variable with density $\frac{p_t(x|\cdot)p^*(\cdot )}{\int p_t(x|y)p^*(y)\; \mathrm{d}y}.$ Then for $t \in [0,1)$
		\begin{equation*}
			\frac{\partial }{\partial x_i} \frac{\int y_j p_t(x|y)p^*(y)\; \mathrm{d}y}{\int p_t(x|y)p^*(y)\; \mathrm{d}y}  = \frac{t^{\gamma}}{\sigma_t^2} (\mathbb{E}[Y_i^{x,t} Y_j^{x,t}] - \mathbb{E}[Y_i^{x,t}]\mathbb{E}[ Y_j^{x,t}]) = \frac{t^{\gamma}}{\sigma_t^2} \operatorname{Cov}(Y^{x,t})_{ji}.
		\end{equation*}
		We obtain for the derivative of with respect to $x_i$ of the $j-th$ coordinate function
		
		\begin{equation*}
			\frac{\partial}{\partial x_i} v^j_t(x) = \frac{\sigma_t^{\prime}}{\sigma_t} \mathds{1}_{i = j} + \Big(\gamma t^{\gamma-1}- \frac{\sigma_t^{\prime}t^{\gamma}}{\sigma_t} \Big)  \frac{t^{\gamma}}{\sigma_t^2} \operatorname{Cov}(Y^{x,t})_{ji}.\qedhere
		\end{equation*}  
	\end{proof}
	
	\begin{proof}[Proof of \Cref{thm:easy_lipschitz}]

		Let $t^*$ be such that $\sigma_{t^*} = \frac{1}{\vartheta}$. Then 
		\begin{equation*}
			\begin{aligned}
				\int_0^{t^{*}}\Big|  \frac{\sigma_t^{\prime}}{\sigma_t} \mathds{1}_{i = j} + \Big(\gamma t^{\gamma -1}&- \frac{\sigma_t^{\prime}t^{\gamma}}{\sigma_t} \Big)  \frac{t^{\gamma}}{\sigma_t^2} \operatorname{Cov}(Y^{\cdot,t})_{ij}    \Big|\; \mathrm{d}t  \int_0^{t^{*}}\Big|  \frac{\sigma_t^{\prime}}{\sigma_t}\Big|\; \mathrm{d}t +  \int_0^{t^{*}}\Big| \Big(\gamma t^{\gamma -1}- \frac{\sigma_t^{\prime}t^{\gamma}}{\sigma_t} \Big)  \frac{t^{\gamma}}{\sigma_t^2} \operatorname{Cov}(Y^{\cdot,t})_{ij}    \Big|\; \mathrm{d}t. 
			\end{aligned}    
		\end{equation*}
		For the first term, we can use that $\sigma_t^{\prime}$ is non positive and hence by change of variables
		\begin{equation*}
			\int_0^{t^{*}}\Big|  \frac{\sigma_t^{\prime}}{\sigma_t}\Big|\; \mathrm{d}t = -  \int_0^{t^{*}}  \frac{\sigma_t^{\prime}}{\sigma_t}\; \mathrm{d}t = \int_{\sigma_{t^*}}^{1} \frac{1}{u}\; \mathrm{d}u = \log(\sigma_{t^*}^{-1}) = \log(\vartheta).
		\end{equation*}
		For the second integral, we can bound the Covariance term by
		$
		\big|\operatorname{Cov}(Y^{\cdot,t})_{ij}\big|   \leq C.
		$  Now we get 
		\begin{align*}
			\int_0^{t^{*}} &\Big| \Big(\gamma t^{\gamma -1}- \frac{\sigma_t^{\prime}t^{\gamma}}{\sigma_t} \Big)  \frac{t^{\gamma}}{\sigma_t^2} \operatorname{Cov}(Y^{\cdot,t})_{ij}    \Big|\; \mathrm{d}t  \leq C  \int_0^{t^{*}}\Big| \Big(\gamma t^{\gamma -1}- \frac{\sigma_t^{\prime}t^{\gamma}}{\sigma_t} \Big)  \frac{t^{\gamma}}{\sigma_t^2}  \Big|\; \mathrm{d}t\\
			& \leq  C  \int_0^{t^{*}}\Big|\gamma t^{\gamma -1}\frac{t^{\gamma}}{\sigma_t^2}\Big|\; \mathrm{d}t + C \int_0^{t^{*}}\Big|\frac{\sigma_t^{\prime}t^{2\gamma}}{\sigma^3_t}\Big|    \; \mathrm{d}t \leq \vartheta^2 C \int_0^{t^{*}}\gamma t^{\gamma -1}\; \mathrm{d}t + \vartheta^2 C\int_0^{t^{*}} \Big|  \frac{\sigma_t^{\prime}}{\sigma_t}\Big|   \; \mathrm{d}t \lesssim \vartheta^2 (1 + \log(\vartheta)).\qedhere
		\end{align*}
	\end{proof}
	
	\begin{proof}[Proof of \Cref{thm:remaining_terms}]
		First we abbreviate
		\begin{equation*}
			B \coloneqq \mathbb{E}[g(\hat{v}, Y)\mathds{1}_{Y \notin A}] - \mathbb{E}[  g(\tilde{v}, Y)\mathds{1}_{Y \notin A}] + \sum_{i = 1}^n  g(\tilde{v}, X_i)\mathds{1}_{X_i \notin A} - g(\hat{v}, X_i)\mathds{1}_{X_i \notin A}.
		\end{equation*}
		Both $|\tilde{v}|_{\infty}$ and $|\hat{v}|_{\infty}$ are bounded by $D^{\prime}\log(n)^3$, where $D^{\prime}$ is a constant collecting all terms in \eqref{eq:bound_functions_M}. In the proof of \Cref{thm:error_decomp}, we showed that
$
			g(v,y)  \lesssim \log(n)^6+ \log(n)^2|y|^2 + \log(\sigma_{\min}^{-1})^2 \log(n)^2 |y|^2.
	$  Thus, we obtain 
		\begin{equation*}
			\mathbb{E}_{X_i}[|B|] \lesssim \mathbb{E}\big[\big| \mathds{1}_{Y \notin A} \big(\log(n)^6 +  \log(n)^2 |Y|^2\big)\big|\big] = \log(n)^6  \int_{\mathbb{R}^d \setminus A} p^*(y) \; \mathrm{d} y +  \log(n)^2  \int_{\mathbb{R}^d \setminus A} |y|^2p^*(y) \; \mathrm{d} y.
		\end{equation*}
		Using the structure of $p^*$ and assuming $n \geq 4$, we obtain
		\begin{equation*}
			\int_{\mathbb{R}^d \setminus A} p^*(y) \; \mathrm{d} y   \leq   \frac{\int_{\mathbb{R}^d \setminus A}|y|^2\exp(- \frac{|y|^2}{2} - a(y))\; \mathrm{d}y }{\int\exp(- \frac{|y|^2}{2} - a(y))\; \mathrm{d}y}\leq e^{2L}\Big(2\frac{\int_{\log(n)}^{\infty} y_1^2 \exp(- \frac{y_1^2}{2} )\; \mathrm{d}y_1 }{\int\exp(- \frac{y_1^2}{2} )\; \mathrm{d}y_1} \Big)^d. 
		\end{equation*}
		As
		\begin{equation*}
			0 \leq \int_{\log(n)}^{\infty} y_1^2 \exp\Big(- \frac{y_1^2}{2} \Big)\; \mathrm{d}y = \Big[-y_1 \exp\Big(- \frac{y_1^2}{2}\Big)\Big]_{\log(n)}^{\infty}- \int_{\log(n)}^{\infty}\exp\Big(- \frac{y_1^2}{2}\Big)\; \mathrm{d}y_1\leq \log(n)\exp\Big(- \frac{\log(n)^2}{2}\Big),
		\end{equation*} we can bound the above by
		\begin{equation*}
			e^{2L}\Big(2\frac{\int_{\log(n)}^{\infty} y_1^2 \exp(- \frac{y_1^2}{2} )\; \mathrm{d}y_1 }{\int\exp(- \frac{y_1^2}{2} )\; \mathrm{d}y_1} \Big)^d \lesssim 
			\Big(\log(n)\exp\Big(- \frac{\log(n)^2}{2}\Big) \Big)^d. 
		\end{equation*}
		For large $n$, this decays faster than any polynomial in $n$. We conclude
		\begin{equation*}
			\mathbb{E}_{X_i}[|B|] \lesssim \log(n)^6\Big(\log(n)\exp\Big(- \frac{\log(n)^2}{2}\Big) \Big)^d.
		\end{equation*}
		Now Markovs inequality yields 
		\begin{equation*}
			\mathbb{P}(B \geq n^{-\frac{1}{2}})  \lesssim   \log(n)^6\Big(\log(n)\exp\Big(- \frac{\log(n)^2}{2}\Big) \Big)^d n^{\frac{1}{2}},\quad 
			\mathbb{P}(B \leq - n^{-\frac{1}{2}})  \lesssim   \log(n)^6\exp\Big(- \frac{\log(n)^2}{2}\Big) \Big)^d n^{\frac{1}{2}}.
		\end{equation*} Notice that we can shift the constant as a factor of the bound of $B$. As $\log(\sigma_{\min}^{-1}) \sim \log(n)$,
		we need to choose $n$ large enough such that
		\begin{equation*}
			\log(n)^6\Big(\log(n)\exp\Big(- \frac{\log(n)^2}{2}\Big) \Big)^d n^{\frac{1}{2}} \lesssim \frac{1}{3n} \quad \Longleftarrow \quad n \gtrsim e^{3(d+7+\frac{1}{2})},
		\end{equation*}
		where the implication stems from a loose upper bound using $\log(n)^{6+d} \leq n^{6+d}$. This finishes the proof.
	\end{proof}
	
	\begin{proof}[Proof of \Cref{thm:fast_decay}]
		From the proof of \Cref{thm:error_decomp} and as $\log(\sigma_{\min}^{-1}) \sim \log(n)$ we now that
		\begin{equation*}
			|v_t(x)| \lesssim \log(n)^2|x|+\log(n), \quad 
			|\tilde{v}_t(x)| \lesssim \log(n)^3.
		\end{equation*}
		Thus, setting $I = [-\log(n), \log(n)]^d$
		\begin{align*}
			\int_0^1 \int_{\mathbb{R}^d \setminus I}|v_t(x) - \tilde{v}_t(x) |^2 p_t(x) \; \mathrm{d}x \; \mathrm{d}t& \lesssim  \log(n)^6 \int_0^1 \int_{\mathbb{R}^d \setminus I}p_t(x) \; \mathrm{d}x +  \log(n)^4 \int_0^1 \int_{\mathbb{R}^d \setminus I} |x|^2 p_t(x) \; \mathrm{d}x.
		\end{align*}
		By the definition of $p_t$ and due to the fact that the convolution of two densities is again a density, we know that
		\begin{equation*}
			p_t(x)  = \int p_t(x|y)p^*(y)\; \mathrm{dy} = \frac{\int \exp(- \frac{|x-ty|^2}{2 \sigma_t^2} - \frac{|y|^2}{2} -a(y))\; \mathrm{d}y}{\int \int \exp(- \frac{|x-ty|^2}{2 \sigma_t^2} - \frac{|y|^2}{2} -a(y))\; \mathrm{d}y \; \mathrm{d}x}e^{2L}\frac{\int \exp(- \frac{|x-ty|^2}{2 \sigma_t^2} - \frac{|y|^2}{2} )\; \mathrm{d}y}{\int \int \exp(- \frac{|x-ty|^2}{2 \sigma_t^2} - \frac{|y|^2}{2} )\; \mathrm{d}y \; \mathrm{d}x}.
		\end{equation*}
		Now for the inner integrals, we obtain
		\begin{equation*}
			\int \exp\Big(- \frac{|x-ty|^2}{2 \sigma_t^2} - \frac{|y|^2}{2} \Big)\; \mathrm{d}y  = (2 \pi)^{d / 2}\Big(\frac{\sigma_t^2}{t^2+\sigma_t^2}\Big)^{d / 2} \exp \Big(-\frac{|x|^2}{2\left(t^2+\sigma_t^2\right)}\Big)
			.
		\end{equation*}
		Inserting this into the bound and collecting all factors of $|x|^2$, we obtain
		\begin{equation*}
			p_t(x) \leq e^{2L} \frac{\exp(- \frac{1}{2}\frac{1}{\sigma_t^2 + t^2}|x|^2)}{\int \exp(- \frac{1}{2}\frac{1}{\sigma_t^2 + t^2} |x|^2)\; \mathrm{d}x} = \frac{e^{2L} }{(2 \pi (\sigma_t^2 + t^2))^{\frac{d}{2}}} \exp\Big(- \frac{1}{2}\frac{1}{\sigma_t^2 + t^2}|x|^2\Big).
		\end{equation*}
		Now we bound
		\begin{align*}
			\int_0^1 \int_{\mathbb{R}^d \setminus I}&p_t(x) \; \mathrm{d}x\; \mathrm{d}t \leq  \int_0^1 \int_{\mathbb{R}^d \setminus I} |x|^2 p_t(x) \; \mathrm{d}x\; \mathrm{d}t \leq  \int_0^1 \frac{e^{2L} }{(2 \pi (\sigma_t^2 + t^2))^{\frac{d}{2}}}\int_{\mathbb{R}^d \setminus I} |x|^2  \exp\Big(- \frac{1}{2}\frac{1}{\sigma_t^2 + t^2}|x|^2\Big) \; \mathrm{d}x\; \mathrm{d}t\\
			& =2 \int_0^1 \frac{e^{2L} }{(2 \pi (\sigma_t^2 + t^2))^{\frac{d}{2}}}\Big(\int_{\log(n)}^{\infty} x_1^2  \exp\Big(- \frac{1}{2}\frac{1}{\sigma_t^2 + t^2}x_1^2\Big) \; \mathrm{d}x_1\Big)^d\; \mathrm{d}t\\
			& \leq 2 \int_0^1 \frac{e^{2L} }{(2 \pi (\sigma_t^2 + t^2))^{\frac{d}{2}}}\Big(\log(n)(\sigma_t^2+t^2)\exp\Big(- \frac{1}{2}\frac{1}{\sigma_t^2+t^2}\log(n)^2\Big)\Big)^d\; \mathrm{d}t,
		\end{align*}
		where the last inequality follows from the same arguments as the bound in the proof of the first part of this Lemma. As $(\sigma_t^2+t^2) \leq 2$, we can bound
		\begin{equation*}
			\int_0^1 \int_{\mathbb{R}^d \setminus I}p_t(x) \; \mathrm{d}x\; \mathrm{d}t \lesssim \log(n)^d \exp\Big(- \frac{1}{4} \log^2(n)\Big).
		\end{equation*}
		Overall, we need to choose $n$ big enough such that
		$
		\log(n)^{6 + d}\exp(- \frac{1}{4} \log^2(n)) \leq n^{- \frac{1}{2}}$ which is implied by $ n \geq e^{4(\frac{1}{2}+6+d)},
		$
		where again the implication stems from a loose upper bound using $\log(n)^{6+d} \leq n^{6+d}$.
	\end{proof}

	\begin{proof}[Proof of \Cref{thm:Lipschitz_constant_g}]
		Similar to \cite[p.44]{yakovlev2025}, we bound
		\begin{align*}
			\Big|\int_0^1 \int &|\bar{v}_t^1(x) - v_t(x|y)(x)|^2 p_t(x|y)\; \mathrm{d}x\; \mathrm{d}t - \int_0^1 \int |\bar{v}_t^2(x) - v_t(x|y)(x)|^2 p_t(x|y)\; \mathrm{d}x\; \mathrm{d}t\Big| \\
			& \lesssim \int_0^1 \int |\bar{v}_t^1(x) - \bar{v}_t^2(x)| (|\bar{v}_t^1(x)|  + |\bar{v}_t^2(x)| + 2 |v_t(x|y)|  ) p_t(x|y)\; \mathrm{d}x\; \mathrm{d}t. 
		\end{align*}
		For $x \in \mathbb{R}^d$ and $y \in [-\log(n),\log(n)]^d$, we have 
		\begin{equation*}    |v_t(x|y)| = \Big|\frac{\sigma_t^{\prime}}{\sigma_t} (x+ (\sigma_{t}-t)y) \Big| \leq \Big|\frac{\sigma_t^{\prime}}{\sigma_t}  \Big| (|x|+|y|)\leq \Big|\frac{\sigma_t^{\prime}}{\sigma_t}\Big|  (|x| + \sqrt{d} \log(n)).
		\end{equation*}
		By \eqref{eq:bound_v_t} all $\bar{v} \in \mathcal{M}$ are such that $|\bar{v}_t(x)|\lesssim \log(n)^3$.  
		Therefore
		\begin{align*}
			\int_0^1 \int &|\bar{v}_t^1(x) - \bar{v}_t^2(x)| (|\bar{v}_t^1(x)|  + |\bar{v}_t^2(x)| + 2 |v_t(x|y)|  ) p_t(x|y)\; \mathrm{d}x\; \mathrm{d}t \\& \lesssim  \| \bar{v}^1- \bar{v}^2 \|_{\infty}(\log(n)^3 + \log(n) \int_0^1\Big|\frac{\sigma_t^{\prime}}{\sigma_t}  \Big| \int |x|p_t(x|y) \; \mathrm{d}x\; \mathrm{d}t) \lesssim \| \bar{v}^1- \bar{v}^2 \|_{\infty} \log(n)^4.
		\end{align*}
		The last inequality follows from Jensens inequality and the properties of the Gaussian.
	\end{proof}
	
	\begin{proof}[Proof of \Cref{thm:higher_orders_bound}]
		
		For an $s$-th order partial derivative and $\{i_1,...,i_s\} \subset \{ 1,...,d\}^s$, we know from the definition of $v_t$ 
		\begin{equation}
			\frac{\partial^s}{\partial x_{i_1},..., \partial x_{i_s}}v^j_t(x)  = \mathds{1}_{s = 1} \frac{\sigma_t^{\prime}}{\sigma_t} + (1- \frac{\sigma_t^{\prime}t}{\sigma_t}) \frac{\partial^s}{\partial x_{i_1},..., \partial x_{i_s}} \mathbb{E}_{q_{t,x}}[Y_j],
		\end{equation}
		where $
			q_{t,x} \propto \propto \exp(\langle \eta, Y\rangle)h(y),
		$
		with $\eta = \frac{t}{\sigma_t^2}x$ and $h(y) = \exp(- (\frac{t^2}{2\sigma_t^2}+ \frac{1}{2})\|y\|^2-a(y))$. Define
		$
		A(\eta) \coloneqq \log \int \exp(\langle \eta, Y\rangle)h(y)\; \mathrm{d}y.
		$
		In context of exponential families, $A$ is typically called the log-partition function.
		Then 
		$
		\frac{\partial}{\partial \eta_j} A(\eta) = \mathbb{E}_{q_{t,x}}[Y_j].
		$
		Thus, 
		\begin{equation*}
			\frac{\partial^s}{\partial x_{i_1},..., \partial x_{i_s}} \mathbb{E}_{q_{t,x}}[Y_j] =   \frac{\partial^s}{\partial x_{i_1},..., \partial x_{i_s}}  \frac{\partial}{\partial \eta_j} A(\eta) = \big(\frac{t}{\sigma_t^2} \big)^{s}\frac{\partial^s}{\partial \eta_{i_1},..., \partial \eta_{i_s}}  \frac{\partial}{\partial \eta_j} A(\eta). 
		\end{equation*}
		Further define the cumulant generating function as
		$
		K(\lambda) \coloneqq \log\big(\mathbb{E}[\exp(\langle \lambda, Y \rangle)] \big).
		$
		Then
		\begin{equation*}
			\mathbb{E}[\exp(\langle \lambda, Y)] = \exp(A(\lambda + \eta )- A(\eta)), \quad K(\lambda) = A(\lambda + \eta )- A(\eta).
		\end{equation*}
		Differentiating in $\lambda$ yields
		\begin{equation*}
			\frac{\partial^{s+1}}{\partial \lambda_{i_1},..., \partial \lambda_{i_s}, \partial \lambda_j} K(\lambda)\Big|_{\lambda = 0} =  \frac{\partial^{s+1}}{\partial \lambda_{i_1},..., \partial \lambda_{i_s}, \partial \lambda_j} A(\lambda + \eta )- A(\eta)\Big|_{\lambda = 0}= \frac{\partial^{s+1}}{\partial \eta_{i_1},..., \partial \eta_{i_s}, \partial \eta_j} A( \eta ).
		\end{equation*}
		Hence, 
		\begin{equation*}
			\frac{\partial^s}{\partial x_{i_1},..., \partial x_{i_s}}v^j_t(x)  = \mathds{1}_{s = 1} \frac{\sigma_t^{\prime}}{\sigma_t} + (1- \frac{\sigma_t^{\prime}t}{\sigma_t})  \big(\frac{t}{\sigma_t^2} \big)^{s} \kappa(Y_{i_1}, ..., Y_{i_s}, Y_j),
		\end{equation*}
		where $\kappa(Y_{i_1}, ..., Y_{i_s}, Y_j)$ is the joint cumulant, of $Y_{i_1}, ..., Y_{i_s}, Y_j$, which is defined as exactly this partial derivative. For $s \geq 1$, this cumulant is shift invariant, hence
		\begin{equation*}
			\kappa(Y_{i_1}, ..., Y_{i_s}, Y_j) = \kappa(Y_{i_1}- \mathbb{E}[Y_{i_1}], ..., Y_{i_s}- \mathbb{E}[Y_{i2}], Y_j- \mathbb{E}[Y_{j}]).
		\end{equation*}
		Using the formula from \cite{Leonov1959} we can express the cumulant using products of mixed moments
		\begin{align}
			|\kappa(Y_{i_1}- \mathbb{E}[Y_{i_1}], ..., Y_{i_s}- \mathbb{E}[Y_{i_s}], Y_j- \mathbb{E}[Y_{j}])| &= \sum_\pi(|\pi|-1)! \prod_{B \in \pi} \mathbb{E}\Big(\prod_{\ell \in B}| Y_{\ell} - \mathbb{E}[Y_{\ell}]|\Big) \notag \\
			& \leq \Big(\sum_\pi(|\pi|-1)! \Big) \prod_{\ell = 1}^{s+1} (\mathbb{E}[| Y_{B_{\ell}} - \mathbb{E}[Y_{B_{\ell}}]|^{s+1}])^{\frac{1}{s+1}},\label{eq:moment_bound}
		\end{align}
		where $\sum_{\pi}$ represents the sum over all partitions of $\{i_1,...,i_s,j\}$ and $B_{\ell}$ is the $\ell$-th element of $\{i_1,...,i_s,j\}$.

		To control the moments up to order $s+1$, we need a stronger argument than the Poincaré inequality used for the variance bound. Thus, we recall \Cref{def:log_sobolev}, the definition of the $\log$-Sobolev inequality.
	 Similar to the beginning of the proof of \Cref{thm:meh}, we conclude that the distribution with density \eqref{eq:density_gaussian} satisfies the $\log$-Sobolev inequality with $\log$-Sobolev constant $(1+ \frac{t^2}{\sigma_t^2})$.
		By \citet[Lemma 1.2]{Ledoux2001} we known that a bounded perturbation will again only impact the $\log$-Sobolev constant by an exponential term. Thus, we can bound the $\log$-Sobolev constant of the distribution of $Y^{x,t}$ by  
$
			\lambda \geq  e^{-4L}(1+ \frac{t^2}{\sigma_t^2}).
$
		
		Now we can use Herbst's argument (along the lines of \citet[p.148]{Ledoux1999}, note that our function is so simple that we can drop the assumption of a bounded function) to conclude that $Y^{x,t}_i - \mathbb{E}[Y^{x,t}_i]$ is subgaussian, i.e.\

		$
		\mathbb{P}(|Y_i - \mathbb{E}[Y_i]| \geq  r) \leq 2 e^{- \frac{r^2\lambda}{2}}.
		$
	Then we can bound the $s+1$-th moment along the lines of \citet[Proposition 2.5.2]{Vershynin2018} via 
		\begin{align*}
			\mathbb{E}[|Y_i - \mathbb{E}[Y_i]|^{s+1}] &= \int_0^{\infty} \mathbb{P}(|Y_i - \mathbb{E}[Y_i]|^{s+1} \geq u) \; \mathrm{d}u = \int_0^{\infty} \mathbb{P}(|Y_i - \mathbb{E}[Y_i]| \geq r) r^s (s+1)\; \mathrm{d}r\\
			& \leq 2 (s+1) \int_0^{\infty} e^{- \frac{r^2\lambda}{2}} r^{s} \; \mathrm{d}r = 2^{\frac{s+1}{2}} \lambda^{- \frac{s+1}{2}}\Gamma\Big(\frac{s+1}{2}\Big).
		\end{align*}
		Taking the $s+1$-th root leads to the bound
		$
		\mathbb{E}[|Y_i - \mathbb{E}[Y_i]|^{s+1}]^{\frac{1}{s+1}} \lesssim \lambda^{-\frac{1}{2}}.
		$
		Plugging in the $\log$ Sobolev constant of $Y$, we can control all moments via
		\begin{equation*}
			\mathbb{E}[|Y_{i_j}^{x,t}- \mathbb{E}[Y_{i_j}^{x,t}]|^{s+1}]^{\frac{1}{s+1}}  \lesssim \frac{\sigma_t}{\sqrt{\sigma_t^2 + t^2}} \leq \min\Big(1, \frac{\sigma_t}{t}\Big).
		\end{equation*}
		Thus we can bound \eqref{eq:moment_bound} by
		\begin{equation*}
			\prod_{\ell = 1}^{s+1} (\mathbb{E}[| Y_{B_{\ell}} - \mathbb{E}[Y_{B_{\ell}}]|^{s+1}])^{\frac{1}{s+1}}\lesssim \min\Big(1,\frac{\sigma_t^{s+1}}{t^{s+1}}\Big).
		\end{equation*}
		We obtain for 
		\begin{equation*}
			\frac{\partial^s}{\partial x_{i_1},..., \partial x_{i_s}}v^j_t(x)  \lesssim  \mathds{1}_{s = 1} \frac{\sigma_t^{\prime}}{\sigma_t} + \Big(1- \frac{\sigma_t^{\prime}t}{\sigma_t}\Big)\Big(\frac{t}{\sigma_t^2} \Big)^{s}\min\Big(1,\frac{\sigma_t^{s+1}}{t^{s+1}}\Big).
		\end{equation*}
		Minimizing in $\sigma_t$ and using $\frac{\sigma_t^{\prime}}{\sigma_t} = \log(\sigma_{\min})$ leads to the final bound.
	\end{proof}
	
	\begin{proof}[Proof of \Cref{thm:higher_order_t_bound}]
		Define
		\begin{equation*}
			q_{x,t} (y) \coloneqq \frac{ \exp(- \frac{|x-ty|^2}{2\sigma_t^2} - \frac{|y|^2}{2}- a(y))}{\int  \exp(- \frac{|x-ty|^2}{2\sigma_t^2} - \frac{|y|^2}{2}- a(y))\; \mathrm{d}y}, \quad w_{x,t}(y)\coloneqq \exp\Big(- \frac{|x-ty|^2}{2\sigma_t^2} - \frac{|y|^2}{2}- a(y)\Big).
		\end{equation*}
		For $k = 1$ we obtain using the dominated convergence theorem, the quotient rule and the derivative of the logarithm
		\begin{equation}\label{eq:derivative_t_1}
				\frac{\partial}{\partial t} v^j_t(x)  = \frac{\partial}{\partial t}  \mathbb{E}_{Y \sim q_{x,t}}[v_t(x|Y)_j] =  \mathbb{E}_{Y \sim q_{x,t}}\Big[\frac{\partial}{\partial t} v_t(x|Y)_j\Big] + \operatorname{Cov}\Big( v_t(x|Y)_j, \frac{\partial}{\partial t} \log( w_{x,t}(Y))\Big).
		\end{equation}
		We can use the same structure to obtain higher derivatives. Thus, all terms occurring in an derivative of $v_t$ of order $k$ are either derivatives of $v_t(x|y)_j$, derivatives of $\log( w_{x,t}(Y))$ (including the functions themselves as $0$-th order derivative) or products of these derivatives.  
		Since
		\begin{equation}\label{eq:derivative_t_5}	
				v_t(x|y)_j  =  \log(\sigma_{\min}) (x_j-ty_j)+y_j,\quad 
				\frac{\partial}{\partial t} v_t(x|y)_j  =  \log(\sigma_{\min})y_j,
		\end{equation}
		all higher derivatives of $v_t(\cdot|\cdot)$ vanish.
		For the second term occurring in \eqref{eq:derivative_t_1}, we have that
		\begin{align} 
			\frac{\partial}{\partial t} \log( w_{x,t}(Y)) =  - \frac{\partial}{\partial t}  \frac{1}{2}(|x-ty|^2 )\sigma_t^{-2}  =   ( \langle x,y\rangle - t|y|^2)\sigma_t^{-2} + \frac{1}{2}|x-ty|^2  \cdot 2  \log(\sigma_{\min}) \sigma_t^{-2}.\label{eq:derivative_t_4}
		\end{align}
		We conclude that
		\begin{equation}\label{eq:derivative_t_3}
			\frac{\partial^k}{\partial t^k}   \log( w_{x,t}(Y))  \lesssim  \operatorname{poly}(x,y,t) \operatorname{polylog}(\sigma_{\min}^{-1}) \sigma_t^{-2},
		\end{equation}
		where $\operatorname{poly}(x,y,t) $ is a again polynomial of finite degree in the components of $x,y,t$.
		Thus, bounds that are worse than $\sigma_t^{-2}$ can only appear when $ \frac{\partial}{\partial t} \log( w_{x,t}(Y))$ occurs raised to a power. Looking at \eqref{eq:derivative_t_1}, we see that the highest power of $ \frac{\partial}{\partial t} \log( w_{x,t}(Y))$ increases by $1$ each time the derivative is taken via the covariance term. Hence in the $k$-th derivative, the highest order terms are of the form 
		\begin{equation*}
			\operatorname{Cov}(f(Y)( \frac{\partial}{\partial t} \log( w_{x,t}(Y)))^{k-1}, \frac{\partial}{\partial t} \log( w_{x,t}(Y))),
		\end{equation*}
		where either $f \equiv 1$ or $f$ is of the form \eqref{eq:derivative_t_5} or \eqref{eq:derivative_t_3}. First, let $f \equiv 1$. Then due to \eqref{eq:derivative_t_4}
		\begin{align*}
		&	\operatorname{Cov}(( \frac{\partial}{\partial t} \log( w_{x,t}(Y)))^{k-1}, \frac{\partial}{\partial t} \log( w_{x,t}(Y)))= \mathbb{E}\Big[\Big(\frac{\partial}{\partial t} \log( w_{x,t}(Y))- \mathbb{E}\Big[\frac{\partial}{\partial t} \log( w_{x,t}(Y)) \Big] \Big)^k \Big]\\
			& = \sigma_t^{-2k} \mathbb{E}\Big[ \Big(\langle (1-t 2 \log(\sigma_{\min})) x ,Y\rangle - \mathbb{E}[\langle (1-t 2 \log(\sigma_{\min})) x ,Y\rangle]  \\ & \quad + (t^2 2 \log(\sigma_{\min}) - t)|Y|^2 - \mathbb{E}[(t^2 2  \log(\sigma_{\min}) - t)|Y^2|]\Big)^k \Big]\\
			& \lesssim \sigma_t^{-2k} \sum_{i = 1}^d (1-t 2  \log(\sigma_{\min}))^k x^k \mathbb{E}\Big[ \big( Y_i - \mathbb{E}[ Y_i]\big)^k \Big] +\sum_{i = 1}^d (t^2 2  \log(\sigma_{\min})t- t)^k\mathbb{E}\Big[\big( Y_i^2 - \mathbb{E}[Y_i^2]\big)^k \Big],
		\end{align*}
		where all expectations are taken with respect to $Y \sim q_{x,t}$. From the proof of \Cref{thm:higher_orders_bound} we know that 
		\begin{equation*}
			\mathbb{E}\Big[ \big( Y_i - \mathbb{E}[ Y_i]\big)^k \Big] \lesssim \Big( \frac{\sigma_t^2}{t^2+\sigma_t^2}\Big)^{\frac{k}{2}}  \leq  \min\Big( \frac{\sigma_t^2}{t^2}, 1\Big)^{\frac{k}{2}}
			= \min\Big( \frac{\sigma_t^k}{t^k}, 1\Big).
		\end{equation*}
		By \citet[Lemma 2.7.6]{Vershynin2018} combined with \citet[Lemma 2.7.1]{Vershynin2018} we know that if $Y_i - \mathbb{E}[Y_i]$ is subgaussian, which was shown in \Cref{thm:higher_orders_bound}, then $(Y_i - \mathbb{E}[Y_i])^2$ is subexponential with the same order of subgaussian and subexponential norm.  For a definition of a subexponential random variables, the subexponential norm and the subgaussian norm, we refer to \citet[Section 2.7]{Vershynin2018}. Note that all the constants in \citet[Proposition 2.7.1]{Vershynin2018}
		only differ by a constant. 
		\begin{equation*}
			\mathbb{E}\Big[\big( Y_i^2 - \mathbb{E}[Y_i^2]\big)^k \Big] = \mathbb{E}\Big[\big( (Y_i - \mathbb{E}[Y_i])^2 - \mathbb{E}[(Y_i - \mathbb{E}[Y_i])^2] + 2 \mathbb{E}[Y_i](Y_i - \mathbb{E}[Y_i]) \big)^k \Big]
		\end{equation*}
		Note that $\mathbb{E}[Y_i] \lesssim \operatorname{polylog}(n)$ as $x \in [-\log(n), \log(n)]^d$. Hence $Y_i^2 - \mathbb{E}[Y_i^2]$ can be represented by the sum of a subexponential and a subgaussian variable. Furthermore, if $Y_i - \mathbb{E}[Y_i]$ is subgaussian then it is also subexponential and the subexponential norm differs only by a constant from the subgaussian norm. Additionally, from \citet[Definition 2.7.5]{Vershynin2018} and the properties of a norm we learn that the sum of subexponential random variables is again subexponential and that we can bound the subexponential norm of the sum by the sum of the individual subexponential norms. Using that the subgaussian norm of $Y_i - \mathbb{E}[Y_i]$  is of order $\frac{\sigma_t}{\sqrt{\sigma_t^2+t^2} }$, we obtain using \citet[Proposition 2.5.2]{Vershynin2018}
		\begin{equation*}
			\mathbb{E}\Big[\big( Y_i^2 - \mathbb{E}[Y_i^2]\big)^k \Big]  \lesssim  \Big( \frac{\sigma_t^2}{t^2+\sigma_t^2}\Big)^{\frac{k}{2}} \leq \min\Big( \frac{\sigma_t^k}{t^k}, 1\Big).
		\end{equation*}
		
		Now we consider the case that $f$ is of the form \eqref{eq:derivative_t_5} or \eqref{eq:derivative_t_3}.
		First, we note that as shown in the proof of \Cref{thm:meh}, the distribution with density $q_{x,t}$ satisfies the Poincaré inequality with a constant
		$
		\rho \geq e^{-2L} \big(1+ \frac{t^2}{\sigma_t^2} \big).
		$
		An immediate consequence of the Poincaré inequality \Cref{Poincaré_inequality} is the following bound for all smooth functions $g, h$ 
		\begin{equation}\label{eq:derivate_t_9}
			\operatorname{Cov}(g(Y), h(Y))\leq \frac{1}{\rho} \sqrt{\mathbb{E}[|\nabla g(Y)|^2]\mathbb{E}[|\nabla h(Y)|^2]}.
		\end{equation}
		We are going to use this to bound
$
			\operatorname{Cov}(f(Y)( \frac{\partial}{\partial t} \log( w_{x,t}(Y)))^{k-1}, \frac{\partial}{\partial t} \log( w_{x,t}(Y))).
$
		Define
		$
		g(y) \coloneqq f(y)( \frac{\partial}{\partial t} \log( w_{x,t}(y)))^{k-1}.
		$
		Then 
		\begin{equation*}
			\nabla g(y) = (\nabla f(y))\Big( \frac{\partial}{\partial t} \log( w_{x,t}(y))\Big)^{k-1}+ f(y)(k-1) \nabla \Big( \frac{\partial}{\partial t} \log( w_{x,t}(y))\Big)^{k-2} \Big(\nabla \frac{\partial}{\partial t} \log( w_{x,t}(y))\Big)
		\end{equation*}
		and
		\begin{align}
			& |\nabla g(y)|^2  \lesssim |\nabla f(y)|^2 \Big|\frac{\partial}{\partial t} \log( w_{x,t}(y)))\Big|^{2(k-1)}+ |f(y)|^{2}(k-1)^2\Big| \Big( \frac{\partial}{\partial t} \log( w_{x,t}(y))\Big)\Big|^{2(k-2)} \Big|\nabla \frac{\partial}{\partial t} \log( w_{x,t}(y))\Big|^2 \notag\\
			& \lesssim |\nabla f(y)|^2 \big|\sigma_t^{-4(k-1))} |x-ty|^{2(k-1)}|y|^{2(k-1)}\big| +\big| \sigma_t^{-4(k-1)} \frac{1}{2} |x-ty|^{4(k-1)} (2\log(\sigma_{\min}) )^{2(k-1)}\big|\label{eq:derivate_t_6}\\& \quad  + | f(y)|^2\big(\big|\sigma_t^{-4(k-2)} |x-ty|^{2(k-2)}|y|^{2(k-2)}\big| + \big|\sigma_t^{-4(k-2)} \frac{1}{2^{2(k-1)}} |x-ty|^{4(k-2)} (2  \log(\sigma_{\min}) )^{2(k-2)}\big|\big) \label{eq:derivate_t_7}\\ & \qquad \cdot  |\sigma_t^{-2}(x-2ty + t(x-ty)2 \log(\sigma_{\min}))|^2.\label{eq:derivate_t_8}
		\end{align}
		All possible functions $f$ are polynomials in $y$, hence the derivatives are also polynomials. As $v_t(\cdot |\cdot)$ is linear in $y_i$ and $\frac{\partial}{\partial t} \log( w_{x,t})$ is quadratic in the components, the derivatives are of that degree or lower. We denote this by $\operatorname{poly}_2 (x,y,t) $. In case $f$ is of the form \eqref{eq:derivative_t_3} $\nabla f$ and $f$ are of order $\sigma_t^{-2}$, else the dependency on $\sigma_{\min}$ or $\sigma_t$ is only logarithmically. For the first term in \eqref{eq:derivate_t_6}
		\begin{align*}
			\mathbb{E}&[ |\nabla f(Y)|^2 \sigma_t^{-4(k-1)} |x-tY|^{2(k-1)}|Y|^{2(k-1)}\big ] \\& = \int \operatorname{polylog}(\sigma_{\min})\operatorname{poly}_2 (x,y,t) \sigma_t^{-4(k-1))} |x-ty|^{2(k-1)}|y|^{2(k-1)} \frac{ \exp(- \frac{|x-ty|^2}{2\sigma_t^2} - \frac{|y|^2}{2}- a(y))}{\int  \exp(- \frac{|x-ty|^2}{2\sigma_t^2} - \frac{|y|^2}{2}- a(y))\; \mathrm{d}y}\; \mathrm{d}y\\
			& \leq \frac{ e^{2L}  \operatorname{polylog}(\sigma_{\min})}{\int  \exp(- \frac{|z|^2}{2} - \frac{|\frac{x-\sigma_t z}{t}|^2}{2})\; \mathrm{d}y} \int\operatorname{poly}_2 \Big(x,\frac{x-\sigma z}{t},t\Big) \sigma_t^{-4(k-1)} |\sigma_t z|^{2(k-1)}\Big|\frac{x-\sigma_t z}{t}\Big|^{2(k-1)}  \exp\Big(- \frac{|z|^2}{2} - \frac{|\frac{x-\sigma_t z}{t}|^2}{2}\Big)\; \mathrm{d}y\\
			& \lesssim \operatorname{polylog}(\sigma_{\min}) \sigma_t^{-2k-2} |x|^{2(k-1)},
		\end{align*}
		where the last inequality stems from similar arguments as in the proof of \Cref{thm:meh} and \Cref{thm:error_decomp} for higher order moments. The exact calculations are omitted for the sake of brevity. The same calculation for the other terms in \eqref{eq:derivate_t_6}, \eqref{eq:derivate_t_7}, \eqref{eq:derivate_t_8} yield
		\begin{equation*}
			\mathbb{E}[|\nabla g(y)|^2] \lesssim \operatorname{polylog}(\sigma_{\min}) \log(n)^{4(k-1)} \sigma_t^{-2k-4} ,
		\end{equation*}
		where we used that $x \in [-\log(n), \log(n)]^d$ to combine the worst case dependencies. 
		For $h$, we obtain in similar manner
		$
		\mathbb{E}[|\nabla h(y)|^2] \lesssim \sigma_t^{-4} \log(n)^{4} .
		$
		Inserting these bounds in \eqref{eq:derivate_t_9}, we obtain
		\begin{equation*}
			\operatorname{Cov}(f(Y)( \frac{\partial}{\partial t} \log( w_{x,t}(Y)))^{k-1}, \frac{\partial}{\partial t} \log( w_{x,t}(Y))) \lesssim \operatorname{polylog}(\sigma_{\min})\operatorname{polylog}(n) \frac{\sigma_t^2}{\sigma_t^2 + t^2} \sigma_t^{-k-4}.
		\end{equation*}
		Hence in all cases, the terms occurring in the $k$-th derivative are bounded by
		
		\begin{equation*}
			\frac{\partial^k}{\partial t^k}v^j_t(x) \lesssim \operatorname{polylog}(\sigma_{\min})\operatorname{polylog}(n) \sigma_{\min}^{-k-2}       . \qedhere
		\end{equation*}
	\end{proof}
	
	\subsubsection{Helper results}
	\begin{lemma}
		\label{thm:helper_lemma}
		Let $\sigma_t = \sigma_{\min}^t$ with $\sigma_{\min}\in (0,1)$. Then for $t \in (0,1)$
		$
		\frac{t}{t^2+ \sigma_t^2}\leq \max(\log(\sigma_{\min}^{-1}), e^2).
		$
	\end{lemma}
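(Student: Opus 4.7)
The plan is to reduce the inequality to elementary calculus by rewriting $\sigma_t = e^{-ct}$ with $c := \log(\sigma_{\min}^{-1}) > 0$, so that the quantity to bound becomes $f(t) = t/(t^2 + e^{-2ct})$ on $(0,1)$, and the target bound reads $\max(c, e^2)$. Rather than locating the critical point of $f$ explicitly (the first‑order condition $t^2 e^{2ct} = 1 + 2ct$ has no closed form), I would split the interval at $t_0 = 1/c$ and use two crude but complementary inequalities depending on which of $t^2$ or $\sigma_t^2$ dominates the denominator.

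For the regime $t \ge 1/c$ (nonempty only when $c > 1$), I would simply drop $\sigma_t^2$ from the denominator to obtain
\begin{equation*}
\frac{t}{t^2 + \sigma_t^2} \le \frac{1}{t} \le c = \log(\sigma_{\min}^{-1}).
\end{equation*}
For the complementary regime $t < 1/c$, the observation is that $2ct < 2$, hence $\sigma_t^2 = e^{-2ct} > e^{-2}$; dropping $t^2$ from the denominator instead gives
\begin{equation*}
\frac{t}{t^2 + \sigma_t^2} \le \frac{t}{\sigma_t^2} < e^2 t \le e^2,
\end{equation*}
using $t < 1$ in the final step. Combining the two cases yields the desired $\max(c, e^2)$ bound, and the degenerate case $c \le 1$ is covered entirely by the second regime (since then $1/c \ge 1$ and the first is vacuous).

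I do not anticipate a genuine obstacle here: the only mildly delicate point is ensuring that the case split is exhaustive for all admissible $c$, which is handled automatically by the observation that at least one of the two inequalities is always applicable on $(0,1)$. No Lambert-$W$ analysis or sharp constant tracking is needed, since the stated bound is not tight; the trade-off between the two crude estimates, calibrated at $t_0 = 1/c$, already delivers the required constant.
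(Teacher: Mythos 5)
Your proof is correct and follows essentially the same route as the paper: the same case split at $t = 1/\log(\sigma_{\min}^{-1})$, the bound $t/(t^2+\sigma_t^2) \le 1/t \le \log(\sigma_{\min}^{-1})$ in the first regime, and the observation $\sigma_t^2 \ge e^{-2}$ giving the bound $e^2$ in the second. The only cosmetic difference is that you drop $t^2$ from the denominator in the second case while the paper keeps it; the estimate is identical.
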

	\begin{proof}
		First, let $t \geq \frac{1}{\log(\sigma_{\min}^{-1})}$. Then $
		\frac{t}{t^2+ \sigma_t^2}\leq \frac{1}{t} \leq \log(\sigma_{\min}^{-1}).
		$
		If $t < \frac{1}{\log(\sigma_{\min}^{-1})}$, then $
		\sigma_{\min}^{2t} = \exp(2\log(\sigma_{\min})t) \geq \exp(-2)
		$ and thus
		$
		\frac{t}{t^2+ \sigma_t^2}\leq   \frac{t}{t^2+ \exp(-2)}\leq \exp(2).
		$
	\end{proof}
	
	\begin{lemma}
		\label{thm:p_besov} For $p^*$ of the form \eqref{eq:assumption_p*} and any Lipschitz $1$ function $f\colon \mathbb{R}^d \rightarrow \mathbb{R}$, 
		\begin{equation}
			\Big|\int \nabla f(z) p^*(x-z) \mathrm{d} z-\int \nabla f(z) p^*(y-z) \;\mathrm{d} z\Big|\lesssim |x-y|.
		\end{equation}
	\end{lemma}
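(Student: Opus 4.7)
The plan is to move the derivative off the merely Lipschitz function $f$ and onto the smooth, rapidly decaying density $p^{*}$, which is the only object actually differentiable in the strong sense. Concretely, I would first perform the substitutions $u=x-z$ and $u=y-z$ in the two integrals, turning them into $\int \nabla f(x-u)\,p^{*}(u)\,du$ and $\int \nabla f(y-u)\,p^{*}(u)\,du$. Using the identity $\nabla f(x-u)=-\nabla_{u}\,f(x-u)$ and integrating by parts in $u$, each of these becomes $\int f(x-u)\,\nabla p^{*}(u)\,du$, respectively $\int f(y-u)\,\nabla p^{*}(u)\,du$. Subtracting yields
\begin{equation*}
\int \nabla f(z)\,p^{*}(x-z)\,dz-\int \nabla f(z)\,p^{*}(y-z)\,dz=\int\bigl[f(x-u)-f(y-u)\bigr]\,\nabla p^{*}(u)\,du.
\end{equation*}

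Now the Lipschitz-$1$ assumption yields $|f(x-u)-f(y-u)|\leq |x-y|$ pointwise, so the right-hand side is bounded in absolute value by $|x-y|\int |\nabla p^{*}(u)|\,du$. It remains to control this $L^{1}$ norm. From the form \eqref{eq:assumption_p*} we compute $\nabla p^{*}(u)=-\bigl(u+\nabla a(u)\bigr)p^{*}(u)$, and since $\|a\|_{C^{2}}\leq L$ we have $|\nabla a(u)|\leq L$. The Holley-Stroock style comparison already used in the proof of \Cref{thm:meh} shows that $p^{*}$ is bounded pointwise by $e^{2L}$ times the standard Gaussian density, so $\int|\nabla p^{*}(u)|\,du\leq \int(|u|+L)\,p^{*}(u)\,du\lesssim_{L,d}1$, which delivers the claimed bound.

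The only nontrivial point is justifying the integration by parts when $f$ is merely Lipschitz: one needs that the boundary terms vanish and that $\nabla f$ may be paired against $p^{*}$ in the Green's-identity sense. I would handle this by mollifying, writing $f_{\varepsilon}=f*\eta_{\varepsilon}\in C^{\infty}$ with $|\nabla f_{\varepsilon}|\leq 1$ and $f_{\varepsilon}\to f$ uniformly on compacts. For each $f_{\varepsilon}$ the integration by parts is standard because $p^{*}$ and $\nabla p^{*}$ have Gaussian tails while $f_{\varepsilon}$ grows at most linearly, so the boundary contribution at infinity vanishes. Dominated convergence, with the integrable majorant $(|u|+L)\,p^{*}(u)\cdot(|x|+|y|+|u|+|f(0)|)$, then passes the identity and the bound to $f$ itself. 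This is the step that requires a little care; everything else reduces to the moment estimate on $p^{*}$ already available from \eqref{eq:assumption_p*}.
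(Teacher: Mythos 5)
Your argument is correct, but it routes the derivative differently than the paper. You integrate by parts (after the substitution $u=x-z$) to move the gradient from the Lipschitz $f$ onto the smooth density, obtaining $\int\big[f(x-u)-f(y-u)\big]\nabla p^*(u)\,\mathrm{d}u$ and then using $|f(x-u)-f(y-u)|\le|x-y|$; the paper instead keeps $\nabla f$ where it is, applies H\"older with $\|\nabla f\|_\infty$, and bounds the translation difference $\int|p^*(x-z)-p^*(y-z)|\,\mathrm{d}z$ by $|x-y|\int|\nabla p^*(w)|\,\mathrm{d}w$ via the fundamental theorem of calculus and Fubini. Both proofs therefore reduce to the same key estimate $\int|\nabla p^*|\lesssim 1$, which you establish exactly as the paper does, writing $\nabla p^*(u)=-(u+\nabla a(u))p^*(u)$ and using the pointwise comparison $p^*\le e^{2L}\varphi$ for the first-moment bound (note this comparison is just the elementary bound $|a|\le L$ on numerator and denominator, not a Holley--Stroock perturbation argument, which concerns functional-inequality constants; also, with the paper's convention $\|a\|_{C^2}=L$ one only gets $|\nabla a|\le\sqrt{d}\,L$, which is immaterial for the $\lesssim$ conclusion). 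The paper's route is slightly more economical because it never differentiates under the integral against a merely Lipschitz $f$, so no mollification is needed; your route buys a cleaner structural identity (the difference is literally $f$ tested against $\nabla p^*$) at the cost of the approximation step $f_\varepsilon=f*\eta_\varepsilon$, which you justify adequately via $|\nabla f_\varepsilon|\le 1$, a.e.\ convergence of $\nabla f_\varepsilon$, and a Gaussian-tail dominating function.
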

	\begin{proof}
		Using Hölders inequality we obtain 
		\begin{align*}
			\Big|\int \nabla f(z) p^*(x-z) \mathrm{d} z-\int \nabla f(z) p^*(y-z) \;\mathrm{d} z\Big| & \leq \| \nabla f\|_{\infty}  \int  |p^*(x-z) - p^*(y-z) |\; \mathrm{d} z\\
			& \leq \sqrt{d}  \int  \Big|\int_0^1 \langle x-y, \nabla p^*(x+t(y-x)-z ) \rangle \; \mathrm{d}t\Big|\; \mathrm{d} z,
		\end{align*}
		where the last inequality follows from the Lipschitz bound on $f$ and a backwards-application of the multivariate chain rule. The $\nabla$-operator is used with respect to the $\mathbb{R}^d$ valued input of $p^*$. Using the Cauchy-Schwarz inequality and changing the order of integration, we conclude
		\begin{align*}
			\int  \Big|\int_0^1 \langle x-y, \nabla p^*(x+t(y-x)-z ) \rangle \Big|\; \mathrm{d} z & \leq   |x-y| \int_0^1 \int   |\nabla p^*(w )|  \; \mathrm{d}w\; \mathrm{d} t.
		\end{align*}
		We can bound the integral over the derivative via
		\begin{equation*}
			\int   |\nabla p^*(w )|  \; \mathrm{d}w = \int|-w-\nabla a(w)|\frac{\exp(- \frac{-|w|^2}{2} -  a(w))}{\int \exp(- \frac{-|w|^2}{2} - a(w)) \; \mathrm{d}w}  \; \mathrm{d}w  \leq \mathbb{E}_{W \sim p^*}[|W|] + \sqrt{d}L.
		\end{equation*}
		We can further bound
		\begin{equation*}
			\mathbb{E}_{W \sim p^*}[|W|]  \leq e^{2L}  \mathbb{E}_{W \sim \mathcal{N}(0, I_d)}[|X|] \leq  e^{2L} \sqrt{\mathbb{E}_{W \sim \mathcal{N}(0, I_d)}[|X|^2]}  = e^{2L} \sqrt{d}.
		\end{equation*}
		Thus
		\begin{equation*}
			\Big|\int \nabla f(z) p^*(x-z) \mathrm{d} z-\int \nabla f(z) p^*(y-z) \;\mathrm{d} z\Big| \leq d (e^{2L}+L)|x-y|.\qedhere
		\end{equation*}
	\end{proof}

\end{document}